\documentclass[accepted]{article}

\usepackage{amsmath}
\usepackage{amssymb}
\usepackage{amsthm}
\usepackage{booktabs}     % for professional tables
\usepackage{graphicx}
\usepackage{mathtools}
\usepackage{microtype}
\usepackage{subcaption}

\usepackage{xcolor} % Required for text colors

\usepackage{hyperref}

\usepackage{icml2026}

\usepackage[capitalize,noabbrev]{cleveref}

\theoremstyle{plain}
\newtheorem{theorem}{Theorem}[section]
\newtheorem{proposition}[theorem]{Proposition}

\newtheorem{corollary}[theorem]{Corollary}
\theoremstyle{definition}
\newtheorem{definition}[theorem]{Definition}
\newtheorem{assumption}[theorem]{Assumption}
\theoremstyle{remark}
\newtheorem{remark}[theorem]{Remark}

\icmltitlerunning{Spurious Correlation Learning in Preference Optimization}

\begin{document}

\twocolumn[
  \icmltitle{Spurious Correlation Learning in Preference Optimization: \\
    Mechanisms, Consequences, and Mitigation via Tie Training}

  % It is OKAY to include author information, even for blind submissions: the
  % style file will automatically remove it for you unless you've provided
  % the [accepted] option to the icml2026 package.

  % List of affiliations: The first argument should be a (short) identifier you
  % will use later to specify author affiliations Academic affiliations
  % should list Department, University, City, Region, Country Industry
  % affiliations should list Company, City, Region, Country

  % You can specify symbols, otherwise they are numbered in order. Ideally, you
  % should not use this facility. Affiliations will be numbered in order of
  % appearance and this is the preferred way.
  \icmlsetsymbol{equal}{*}

  \begin{icmlauthorlist}
    \icmlauthor{Christian Moya}{math}
    \icmlauthor{Alex Semendinger}{other}
    \icmlauthor{Guang Lin}{math,mech}
    \icmlauthor{Elliott Thornley}{mit}
  \end{icmlauthorlist}

  \icmlaffiliation{math}{Department of Mathematics, Purdue University, West Lafayette IN, USA}
  \icmlaffiliation{other}{Cambridge Boston Alignment Initiative, Cambridge MA, USA}
  \icmlaffiliation{mech}{School of Mechanical Engineering, Purdue University, West Lafayette IN, USA}
  \icmlaffiliation{mit}{Massachusetts Institute of Technology, Cambridge MA, USA}

  \icmlcorrespondingauthor{Christian Moya}{cmoyacal@purdue.edu}
 % \icmlcorrespondingauthor{Firstname2 Lastname2}{first2.last2@www.uk}

  % You may provide any keywords that you find helpful for describing your
  % paper; these are used to populate the "keywords" metadata in the PDF but
  % will not be shown in the document
  \icmlkeywords{Alignment, Spurious Correlations}

  \vskip 0.3in
]

% this must go after the closing bracket ] following \twocolumn[ ...

% This command actually creates the footnote in the first column listing the
% affiliations and the copyright notice. The command takes one argument, which
% is text to display at the start of the footnote. The \icmlEqualContribution
% command is standard text for equal contribution. Remove it (just {}) if you
% do not need this facility.

% Use ONE of the following lines. DO NOT remove the command.
% If you have no special notice, KEEP empty braces:
\printAffiliationsAndNotice{}  % no special notice (required even if empty)
%\printNotice{Accepted to ICML, 2026.}
% Or, if applicable, use the standard equal contribution text:
% \printAffiliationsAndNotice{\icmlEqualContribution}

% ========
% Abstract
% ========
\begin{abstract}
  Preference learning methods like Direct Preference Optimization (DPO) are known to induce reliance on spurious correlations, leading to sycophancy and length bias in today's language models and potentially severe goal misgeneralization in future systems. In this work, we provide a unified theoretical analysis of this phenomenon, characterizing the mechanisms of spurious learning, its consequences on deployment, and a provable mitigation strategy. Focusing on log-linear policies, we show that standard preference-learning objectives induce reliance on spurious features at the population level through two channels: mean spurious bias and causal-spurious correlation leakage. We then show that this reliance creates an irreducible vulnerability to distribution shift: more data from the same training distribution fails to reduce the model's dependence on spurious features. To address this, we propose \textit{tie training}, a data augmentation strategy using ties (equal-utility preference pairs) to introduce data-driven regularization. We demonstrate that this approach selectively reduces spurious learning without degrading causal learning. Finally, we validate our theory on log-linear models and provide empirical evidence that both the spurious learning mechanisms and the benefits of tie training persist for neural networks and large language models.
\end{abstract}
% =====================
% section: introduction
% =====================
\section{Introduction} \label{sec:introduction}
Aligning large language models (LLMs) with human preferences is a key challenge in building safe and useful AI systems. In current alignment pipelines, Reinforcement Learning from Human Feedback (RLHF) learns a reward model from preference data and optimizes a policy with respect to that reward~\cite{ziegler2019fine,ouyang2022training}. Direct Preference Optimization (DPO) simplifies this pipeline by directly optimizing the policy on preference pairs~\cite{rafailov2023direct}. These approaches are used to align widely deployed systems such as ChatGPT~\cite{ouyang2022training} and Claude~\cite{bai2022training}. Despite their success, existing theory provides limited insight into how preference optimization behaves under the distributional structure of real-world human feedback.

Understanding this behavior requires examining the structure of preference data itself. Preference optimization methods are trained on datasets of human comparisons~\cite{christiano2017deep,rafailov2023direct}, where annotators select preferred responses for given a prompt. These datasets encode recurring patterns, reflecting consistent annotator biases and shared superficial characteristics among preferred responses. As a result, feature-level correlations arise between surface attributes and preference labels that are not causally related to response quality.

Surface-level attributes such as length, politeness, formatting, or agreement with the user often correlate with preference labels during training~\cite{sharma2023towards,casper2023open}, but may not reflect true response quality. When these correlations shift at deployment, models that rely on them fail to generalize, a behavior we call policy misgeneralization: rather than learning to optimize response quality, the policy learns to optimize a proxy that is non-causally correlated with response quality on the training distribution.

Policy misgeneralization has important safety implications beyond standard failures under distribution shift. Prior work has raised concerns that AI systems may learn objectives that correlate with intended goals during training but pursue misaligned proxy objectives once those correlations break at deployment~\cite{langosco2022goal,shah2022goal}. In such cases, high training reward can reflect alignment with proxy signals rather than improvements in true task performance, masking failures that emerge under distribution shift~\cite{skalse2022defining}. While much of this literature focuses on hypothetical capable agents~\cite{ngo2024alignment,bengio2025superintelligent}, preference optimization in current LLMs provides a concrete setting where this failure mode manifests even without objective misspecification. Understanding the mechanisms by which spurious correlations emerge in these systems is therefore essential for developing robust alignment methods.

Despite these risks, existing work on spurious correlations in preference optimization remains largely empirical. Prior studies report failures such as verbosity~\cite{saito2023verbosity}, but describe symptoms rather than identify underlying mechanisms. While supervised learning has developed mathematical frameworks for analyzing spurious correlations through shortcut learning~\cite{geirhos2020shortcut}, preference optimization methods such as DPO lack analogous theory. Without such understanding, mitigation strategies remain largely heuristic and lack principled guarantees.

To address this gap, we develop a mathematical framework for spurious correlation learning in preference optimization. We analyze log-linear DPO as a representative and tractable testbed for pairwise preference optimization, and characterize how feature correlations interact with the optimization objective. Our contributions are:

\textit{(i)} We characterize the mechanism of spurious learning by analyzing the population equilibrium of the linearized log-linear DPO objective. We prove that mean spurious bias or causal-spurious correlation in the training distribution induces nonzero spurious parameters (Theorem~\ref{thm:mechanism}). This shows spurious learning arises structurally from the data, not from finite-sample effects or optimization noise.

\textit{(ii)} We analyze deployment consequences when spurious statistics shift between training and deployment. We use the expected preference margin as a population-level deployment proxy to characterize the shift term (Propositions \ref{prop:margin-approx} and \ref{prop:spurious-drives-shift}). To understand finite-sample behavior, we decompose deployment suboptimality into an irreducible shift term driven by spurious parameters and a reducible estimation term that decays as $n \to \infty$ (Theorem~\ref{thm:deployment-bound}). This shows that scaling training data cannot eliminate shift-induced error.

\textit{(iii)} We propose tie training, a data augmentation strategy that reduces spurious correlation reliance by adding preference pairs with equal utility but differing spurious features. These ties inject curvature along spurious directions, selectively regularizing spurious parameters without affecting causal learning~(Theorem~\ref{thm:tie-training}~(i)). We prove that such ties can reduce the irreducible shift error at deployment (Theorem~\ref{thm:tie-training}~(iii)).

We validate our framework through controlled experiments that progressively relax modeling assumptions. Linear models confirm quantitative agreement with theory. Neural networks show the same qualitative mechanisms persist despite hidden representations. In large language models, \textit{tie training} reduces spurious correlation learning without compromising in-distribution accuracy.
% ======================
% section: related works
% ======================
\section{Related Work} \label{sec:related-works}
\textbf{Spurious correlation learning.} Spurious correlations in supervised learning are a well-established failure mode~\cite{singla2021salient}. Models trained via empirical risk minimization (ERM) often exploit surface-level features that correlate with labels in the training distribution but lack a causal relationship to the target task, a phenomenon referred to as shortcut learning~\cite{geirhos2020shortcut}, simplicity bias~\cite{shah2020pitfalls,morwani2023simplicity}, or spurious feature reliance~\cite{arjovsky2019invariant}. When spurious correlations shift at deployment~\cite{zhou2021examining}, models suffer prediction errors~\cite{sagawa2020investigation}, biased outcomes~\cite{geirhos2018imagenet}, and performance degradation~\cite{xiao2020noise}. Proposed mitigations include data augmentation~\cite{chang2021towards,plumb2021finding}, re-weighting of minority examples~\cite{liu2021just}, and modified training dynamics~\cite{izmailov2022feature,kirichenko2022last}, though these approaches often require domain knowledge or explicit annotation of spurious features. Theoretically, spurious learning has been analyzed through optimization dynamics, where gradient descent preferentially fits easier features early in training, leading to gradient starvation~\cite{rahaman2019spectral,kalimeris2019sgd,qiu2024complexity}, as well as through NTK and linearized analyses that characterize implicit biases~\cite{pezeshki2021gradient,hermann2023foundations}. Most closely related to our setting,~\cite{bombari2025spurious} study high-dimensional linear models under ERM, deriving closed-form solutions that reveal how data covariance induces spurious feature reliance. However, these theories assume pointwise loss landscapes and do not extend to preference optimization, where pairwise comparisons induce fundamentally different learning dynamics.

\textbf{Empirical failures in preference optimization.} Prior work on preference optimization has documented spurious correlation learning primarily through empirical observations of reward-hacking behaviors. Studies show that RLHF and DPO models exploit surface-level artifacts, such as verbosity bias (preferring longer responses independently of quality~\cite{singhal2023long,saito2023verbosity}), sycophancy (agreeing with user beliefs to maximize reward~\cite{sharma2023towards}), and formatting bias (over-optimizing for numbered lists or stylistic markers~\cite{zhang2025lists}). Existing mitigations target individual symptoms through ad-hoc interventions, including length penalties~\cite{park2024disentangling} for verbosity or synthetic data filtering~\cite{chen2023alpagasus}. These approaches treat each bias in isolation without addressing the underlying learning dynamics that produce them. In contrast, we provide a population-level analysis that reveals the structural mechanisms driving these empirically observed failures.

\textbf{Theoretical analysis of preference optimization.} Theory for preference optimization has developed along several lines. Work on linear contextual dueling bandits and dueling reinforcement learning establishes regret minimization under realizable reward assumptions~\cite{dudik2015contextual,saha2023dueling}. Recent alignment work develops robust or safety-motivated analysis and training procedures, including robust formulations~\cite{xiong2023iterative,wu2024towards}, noise-aware losses~\cite{chowdhury2024provably}, privacy-preserving constraints~\cite{chen2025improved,zhou2025unified}, and divergence-based alignment objectives that explicitly separate preferred and rejected behaviors~\cite{haldar2025llm}. Complementary analyses in linear or log-linear regimes motivate simplified preference models and analyze learning behavior under idealized assumptions~\cite{zhu2023principled,chowdhury2024differentially,zhou2025unified}. However, across these lines of work, a critical assumption persists: that the learned feature representation is valid for the target task. These approaches address stochastic or adversarial failures through algorithmic modifications but do not characterize systematic spurious correlation learning.
% ======================
% section: preliminaries
% ======================
\section{Preliminaries} \label{sec:preliminaries}
% =========================
% preference learning setup
% =========================
% preference dataset
\subsection{Preference Learning Setup} \label{sub-sec:PL-setup}
\textbf{Preference dataset.} We consider a preference dataset $\mathcal{D} = \{(x^{(i)}, y_w^{(i)}, y_l^{(i)})\}_{i=1}^N$, where $(y_w^{(i)}, y_l^{(i)})$ denotes the human-preferred and rejected responses to prompt $x^{(i)} \in \mathcal{X}$, following standard pairwise preference supervision \cite{rafailov2023direct,christiano2017deep}. We represent each prompt--response pair $(x,y)$ with a feature vector $\phi(x,y) \in \mathbb{R}^d$ and train on the feature difference
$
\Delta \phi = \phi(x,y_w) - \phi(x,y_l).
$

\textbf{Causal and spurious feature decomposition.} We decompose each feature vector as $\phi(x,y) = [\phi_c(x,y); \phi_s(x,y)] \in \mathbb{R}^{d_c + d_s}$. The causal component $\phi_c(x,y) \in \mathbb{R}^{d_c}$ captures true response utility, while the spurious component $\phi_s(x,y) \in \mathbb{R}^{d_s}$ correlates with observed preferences in the training data through statistics that need not persist at deployment. This decomposition induces a corresponding split of feature differences, $\Delta \phi = [\Delta \phi_c; \Delta \phi_s]$. In practice, spurious features arise from data collection biases and domain-specific structure, such as annotators preferring longer or more formal responses at fixed content quality. We call $\phi_s$ spurious not because it cannot influence annotators but because, by design, these correlations should not determine policy behavior at test time. We formalize this distinction through the following invariance assumption.
\begin{assumption}[Invariance] \label{assumption:invariance}
Under interventions that modify spurious features while holding causal features fixed, human preferences remain unchanged.
\end{assumption}
% =========================================
% log-linear policy and Bradley-Terry model
% =========================================
\subsection{Log-Linear Policy and Data Generation} \label{sub-sec:policy-BT-model}
\textbf{Policy model.} We adopt a log-linear policy, a common regime for theoretical analysis that enables tractable characterization of learning dynamics \cite{zhu2023principled,zhou2025unified}. The policy takes the form 
$
\pi_\theta(y \mid x) \propto \exp(\theta^\top \phi(x,y)),
$
where $\theta \in \mathbb{R}^d$ denotes the learnable parameter vector. We decompose the parameter vector as $\theta = [\theta_c; \theta_s] \in \mathbb{R}^{d_c + d_s}$ to match the causal--spurious feature split. 

\textbf{Data generation.} Our analysis depends only on the feature differences $\Delta\phi$ and their stated moment conditions. We denote by $\theta^\dagger$ the ground-truth parameter encoding true utility. By Assumption~\ref{assumption:invariance}, spurious features do not affect human preferences, so $\theta^\dagger = [\theta^\dagger_c; \mathbf{0}]$. 

%In the controlled log-linear experiments (Section~\ref{sec:experiments}), we generate preference labels with a Bradley--Terry model~\cite{bradley1952rank}. Under this model, the probability that humans prefer $y_w$ over $y_l$ is $\sigma(\theta^{\dagger\top} \Delta \phi)$, where $\sigma(z) = (1 + e^{-z})^{-1}$. 

% ==============================
% direct preference optimization
% ==============================
\subsection{Direct Preference Optimization} \label{sub-sec:dpo}
We analyze Direct Preference Optimization (DPO) \cite{rafailov2023direct} as a method for fitting the preference model described above. DPO optimizes the policy relative to a reference policy $\pi_{\theta_{\mathrm{ref}}}$, so we express learning in terms of the deviation $\tilde{\theta} = \theta - \theta_{\mathrm{ref}}$. The DPO loss for a single preference example $(x,y_w,y_l)$ is
\begin{equation}
\ell_{\text{DPO}}(\theta; x, y_w, y_l)
= -\log \sigma\!\left(\beta\, \tilde{\theta}^\top \Delta\phi \right),
\label{eq:dpo_loss}
\end{equation}
where $\beta > 0$ controls the KL regularization strength. Equation~\eqref{eq:dpo_loss} shows that, under the log-linear parameterization, DPO reduces to logistic regression on feature differences. While this reduction provides a well-defined population objective, the sigmoid nonlinearity generally prevents closed-form characterization of the population optimum.

\textbf{Linearization regime.} To enable analytical progress, we work in a local regime where score differences remain moderate, allowing a first-order Taylor expansion of the sigmoid. Formally:
\begin{assumption}[Local regime] \label{assumption:local}
With high probability under the data distribution,
$
|\beta\,\tilde{\theta}^\top \Delta\phi| \ll 1 .
$
\end{assumption}
This regime arises near initialization, for bounded feature magnitudes, or when the scaled DPO margin $\beta \tilde \theta^\top \Delta \phi$ remains small. Under this linearization, the population optimum admits a closed-form solution that reveals how data structure drives spurious learning.
% ==================
% section: mechanism
% ==================
\section{Population-Level Spurious Learning} \label{sec:mechanism}
In this section, we analyze the population equilibrium of the DPO objective under a local linearization. We show that spurious correlation learning arises generically: mean spurious bias or causal--spurious correlation in the training data leads to nonzero spurious parameters at the population optimum.
% ============================================
% population gradient and early-training drift
% ============================================
\subsection{Population Gradient and Early-Training Drift}
\label{sub-sec:pop-gradient}
We study the population objective
$$
L(\theta)
:=  -\mathbb{E}\Big[\log\sigma\big(\beta\,\tilde{\theta}^\top\Delta\phi\big)\Big],
$$
where the expectation is taken over preference data generated using the ground truth parameters $\theta^\dagger$. A single preference pair induces the gradient
$
\nabla_\theta \ell(\theta;x,y_w,y_l) = -\beta\, w(\Delta_\theta)\,\Delta\phi,
$
where $\Delta_\theta:=\tilde{\theta}^\top\Delta\phi$ denotes the predicted score difference and $w(\Delta_\theta):=1-\sigma(\beta\Delta_\theta)\in(0,1)$. Hard or misclassified pairs receive larger weight $w(\Delta_\theta)$, while confidently satisfied preferences are downweighted. Taking expectations, the population gradient is
$$
\nabla_\theta L(\theta) = -\beta\,\mathbb{E}\big[w(\Delta_\theta)\Delta\phi\big].
$$
\textbf{Early-training drift.} Near the reference policy ($\tilde{\theta}\approx 0$), score differences are small, so $w(\Delta_\theta)\approx \tfrac{1}{2}$. The gradient simplifies to
$$
\nabla_\theta L(\theta) \approx -\frac{\beta}{2}\,\mu, 
\qquad \mu:=\mathbb{E}[\Delta\phi] = \begin{bmatrix}\mu_c\\ \mu_s\end{bmatrix},
$$
where $\mu_c := \mathbb{E}[\Delta\phi_c]$ and $\mu_s := \mathbb{E}[\Delta\phi_s]$ are the mean feature differences. Whenever $\mu_s\neq 0$, spurious features are learned from the first gradient step:
$
\nabla_{\theta_s}L(\theta)\approx -\tfrac{\beta}{2}\mu_s.
$
This shows that spurious features with nonzero mean differences are immediately learned, moving the DPO update in spurious directions.

Early-training drift shows that mean spurious bias drives immediate spurious learning. However, this does not guarantee spurious parameters remain nonzero at equilibrium. The gradient dynamics could drive them back to zero. We now characterize the population equilibrium to determine when spurious learning persists.
% ======================
% linearized equilibrium
% ======================
\subsection{Linearized Equilibrium} \label{sub-sec:linearized-equilibrium}
Under Assumption~\ref{assumption:local}, we linearize the weighting function $w(\Delta_\theta) \approx \frac{1}{2} - \frac{\beta}{4}\Delta_\theta$ and substitute into the population gradient. This yields (see Appendix~\ref{appendix:population-objective-gradient} for details)
\begin{equation}
\label{eq:linearized-gradient}
\nabla_\theta L(\theta) \;\approx\; -\frac{\beta}{2}\,\mu \;+\; \frac{\beta^2}{4}\, \Sigma\tilde{\theta}, 
\quad \Sigma:=\mathbb{E}[\Delta\phi\Delta\phi^\top].
\end{equation}
At the linearized stationary point $\theta^\star$, the gradient vanishes:
$$
\Sigma\,\tilde{\theta}^\star = \frac{2}{\beta}\,\mu, 
\qquad \tilde{\theta}^\star:=\theta^\star-\theta_{\mathrm{ref}}.
$$
Partitioning according to the causal-spurious split, we write
$$
\Sigma = \begin{bmatrix} \Sigma_{cc} & \Sigma_{cs}\\ \Sigma_{sc} & \Sigma_{ss}
\end{bmatrix}, 
\qquad \mu= \begin{bmatrix}\mu_c\\ \mu_s\end{bmatrix}, 
\qquad \tilde{\theta}^\star= \begin{bmatrix}\tilde{\theta}_c^\star\\ \tilde{\theta}_s^\star\end{bmatrix},
$$
where $\Sigma_{cc} := \mathbb{E}[\Delta\phi_c\Delta\phi_c^\top]$, $\Sigma_{ss} := \mathbb{E}[\Delta\phi_s\Delta\phi_s^\top]$, and $\Sigma_{cs} = \Sigma_{sc}^\top := \mathbb{E}[\Delta\phi_c\Delta\phi_s^\top]$ captures causal-spurious correlation. The equilibrium conditions are
\begin{align}
\Sigma_{cc}\tilde{\theta}_c^\star + \Sigma_{cs}\tilde{\theta}_s^\star
&= \frac{2}{\beta}\mu_c, \label{eq:block-eq1}\\
\Sigma_{sc}\tilde{\theta}_c^\star + \Sigma_{ss}\tilde{\theta}_s^\star
&= \frac{2}{\beta}\mu_s. \label{eq:block-eq2}
\end{align}
\textbf{Explicit solution via Schur complement.} The equilibrium conditions \eqref{eq:block-eq1}--\eqref{eq:block-eq2} couple causal and spurious parameters through the cross-covariance $\Sigma_{sc}$. To isolate the spurious component, we use the Schur complement method. Assume $\Sigma_{ss}$ and the Schur complement
$
S_c := \Sigma_{cc}-\Sigma_{cs}\Sigma_{ss}^{-1}\Sigma_{sc}
$
are invertible. These conditions hold when $\Sigma\succ0$. Under these assumptions, the spurious component at equilibrium is
\begin{equation}
\label{eq:spurious-weight}
\tilde{\theta}_s^\star = \frac{2}{\beta}\Sigma_{ss}^{-1} 
\Big[\underbrace{(I+\Sigma_{sc}S_c^{-1}\Sigma_{cs}\Sigma_{ss}^{-1})\mu_s}_{\text{mean-bias}} 
- \underbrace{\Sigma_{sc}S_c^{-1}\mu_c}_{\text{corr.-leakage}}
\Big].
\end{equation}
The full derivation is provided in Appendix~\ref{appendix:solving-equilibrium}.
% =====================
% Main Mechanism Result
% =====================
\subsection{Main Mechanism Result} \label{sub-sec:main-mechanism}
We now state our main result characterizing spurious learning at the population equilibrium.
\begin{theorem}[Spurious learning from mean bias and correlation leakage]
\label{thm:mechanism}
Under the linearized population dynamics (Equation~\eqref{eq:linearized-gradient}), assume $\Sigma=\mathbb{E}[\Delta\phi\Delta\phi^\top]\succ0$. If $\mu_s\neq 0$ (mean spurious bias) or $\Sigma_{sc}\neq 0$ (causal-spurious correlation), then generically $\tilde{\theta}_s^\star\neq 0$. That is, the population optimum assigns nonzero weight to spurious features.
\end{theorem}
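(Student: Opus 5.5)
We take the closed-form characterization of the linearized equilibrium as the starting point. Since $\Sigma\succ0$, the stationarity condition $\Sigma\tilde{\theta}^\star=\tfrac{2}{\beta}\mu$ has the unique solution $\tilde{\theta}^\star=\tfrac{2}{\beta}\Sigma^{-1}\mu$. Positive definiteness of $\Sigma$ implies that its principal block $\Sigma_{ss}$ and the Schur complement $S_c$ are both positive definite, hence invertible. Therefore the expression \eqref{eq:spurious-weight} is valid.

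Rather than work with that expression directly, we would rewrite it in the cleaner equivalent form obtained from the block-inverse formula:
\[
\tilde{\theta}_s^\star=\tfrac{2}{\beta}\,S_s^{-1}\big(\mu_s-\Sigma_{sc}\Sigma_{cc}^{-1}\mu_c\big),\qquad S_s:=\Sigma_{ss}-\Sigma_{sc}\Sigma_{cc}^{-1}\Sigma_{cs}\succ0 .
\]
This follows either by eliminating $\tilde{\theta}_c^\star=\Sigma_{cc}^{-1}(\tfrac{2}{\beta}\mu_c-\Sigma_{cs}\tilde{\theta}_s^\star)$ from \eqref{eq:block-eq1} and substituting into \eqref{eq:block-eq2}, or by the Woodbury identity applied to \eqref{eq:spurious-weight}. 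Since $S_s$ is invertible, we get the exact criterion
\[
\tilde{\theta}_s^\star=0 \iff \mu_s=\Sigma_{sc}\Sigma_{cc}^{-1}\mu_c .
\]
In words, spurious weight vanishes only if the spurious mean is exactly what linear regression of $\Delta\phi_s$ on $\Delta\phi_c$ would predict from the causal mean.

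We then read off the two channels as cases.

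\textbf{Case $\Sigma_{sc}=0$, $\mu_s\neq0$.} Here $\tilde{\theta}_s^\star=\tfrac{2}{\beta}\Sigma_{ss}^{-1}\mu_s\neq0$ unconditionally. This is the pure mean-bias channel.

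\textbf{Case $\mu_s=0$, $\Sigma_{sc}\neq0$.} Here $\tilde{\theta}_s^\star=-\tfrac{2}{\beta}S_s^{-1}\Sigma_{sc}\Sigma_{cc}^{-1}\mu_c$. This vanishes iff $\Sigma_{cc}^{-1}\mu_c\in\ker\Sigma_{sc}$. Since $\Sigma_{sc}\neq0$, its kernel is a proper subspace, so for generic $\mu_c$ the weight is nonzero. This is the correlation-leakage channel.

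\textbf{General case.} The failure set $\{\mu_s=\Sigma_{sc}\Sigma_{cc}^{-1}\mu_c\}$ is a proper algebraic subvariety of the parameter space $(\mu,\Sigma)$ restricted to $\Sigma\succ0$. It is proper because, for example, perturbing $\mu_s$ alone moves off it. A proper algebraic subvariety has Lebesgue measure zero and is nowhere dense, which is the precise meaning we give to ``generically''.

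The main obstacle is making ``generically'' rigorous without overclaiming. The two channels can cancel exactly: if $\mu_s\neq0$ and $\Sigma_{sc}\neq0$, one can have $\mu_s=\Sigma_{sc}\Sigma_{cc}^{-1}\mu_c$. This happens, for instance, when $\Delta\phi_s$ is exactly a linear function of $\Delta\phi_c$ in mean. So the statement cannot hold for every distribution, and the cleanest resolution is the explicit measure-zero characterization above.

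A subtlety in the leakage case deserves a remark. When $\Sigma_{sc}\neq0$, we should fix $\Sigma$ and vary $\mu_c$, or note that the condition $\Sigma_{cc}^{-1}\mu_c\in\ker\Sigma_{sc}$ has codimension at least one. In either formulation the zero set is measure zero.
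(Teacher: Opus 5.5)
Your proof is correct and follows the paper's route. Like the paper, you solve the linearized stationarity system in closed form by block elimination, then handle the mean-bias and leakage cases separately and show that the vanishing set has measure zero. The only difference is that you eliminate through the complementary Schur complement $S_s$. The paper's matrix $A=\tfrac{2}{\beta}\Sigma_{ss}^{-1}(I+\Sigma_{sc}S_c^{-1}\Sigma_{cs}\Sigma_{ss}^{-1})$ equals $\tfrac{2}{\beta}S_s^{-1}$, and its leakage term equals $-\tfrac{2}{\beta}S_s^{-1}\Sigma_{sc}\Sigma_{cc}^{-1}\mu_c$. Your factorization therefore makes the paper's similarity argument for the invertibility of $A$ unnecessary, and it turns the paper's ``proper affine constraint'' into the explicit criterion $\mu_s=\Sigma_{sc}\Sigma_{cc}^{-1}\mu_c$.
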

Equation~\eqref{eq:spurious-weight} reveals two distinct mechanisms:

\textit{(i) Mean spurious bias ($\mu_s \neq 0$):} 
When spurious features are asymmetrically distributed across preference pairs, they directly contribute to $\tilde{\theta}_s^\star$, even in the absence of correlation ($\Sigma_{sc} = 0$).

\textit{(ii) Correlation leakage ($\Sigma_{sc} \neq 0$):}  When spurious features correlate with causal features, weight intended for causal directions leaks into spurious ones. This operates even when spurious features are unbiased ($\mu_s = 0$).

\textbf{Deviation from ground truth.} By Assumption~\ref{assumption:invariance}, spurious features should not determine test-time behavior. Theorem~\ref{thm:mechanism} shows that the learned DPO update nevertheless satisfies $\tilde{\theta}_s^\star \neq 0$ under generic conditions on the training distribution. This creates a systematic bias: the learned policy deviates from the true preference model in directions that do not affect utility. Whether this deviation causes deployment failures depends on how spurious feature statistics differ between training and deployment, which we formalize in Section~\ref{sec:consequence}.
% ====================
% section: consequence
% ====================
\section{Spurious Learning Deployment Error} \label{sec:consequence}
Section~\ref{sec:mechanism} showed that learned parameters generically satisfy $\tilde{\theta}_s^\star \neq 0$. We now study the consequences for deployment. Spurious learning creates a \emph{potential vulnerability}: learned parameters depend on features that do not affect true utility. Whether this vulnerability translates into deployment error depends on how spurious statistics shift between training and deployment. We show that when spurious statistics shift, the population objective becomes sensitive to learned spurious parameters, with degradation occurring when the shift has the harmful sign. We then decompose deployment suboptimality into a shift component and an estimation component, showing that the former persists regardless of training set size.
% ========================
% distribution shift setup
% ========================
\subsection{Distribution Shift Setup} \label{sec:distribution-shift-setup}
Let $P$ denote the training distribution and $Q$ the deployment distribution over preference pairs $(x, y_w, y_l)$. These distributions may differ in their feature statistics. We denote:
\begin{align*}
\text{Training:} \quad & \mu^{(P)} = \mathbb{E}_P[\Delta\phi], \quad 
\Sigma^{(P)} = \mathbb{E}_P[\Delta\phi\Delta\phi^\top] \\
\text{Deployment:} \quad & \mu^{(Q)} = \mathbb{E}_Q[\Delta\phi], \quad 
\Sigma^{(Q)} = \mathbb{E}_Q[\Delta\phi\Delta\phi^\top]
\end{align*}
The model learns parameters $\theta_{\mathrm{train}}$ by optimizing on $P$ and deploys with these fixed parameters on $Q$. Differences between $\mu_s^{(P)}$ and $\mu_s^{(Q)}$ can induce deployment error through the learned spurious parameters $\tilde{\theta}_{s,\mathrm{train}}$. We defer the full mathematical characterization of canonical shift scenarios (suppression, adversarial, and rotation) to Appendix~\ref{appendix:shift-scenarios}.
% =================================
% deployment proxy: expected margin
% =================================
\subsection{Expected Margin as Deployment Metric} \label{sub-sec:margin-proxy}
To assess how shifts affect performance, we use the expected preference margin, which provides a tractable first-order characterization of model quality under the local regime.

\textbf{Margin definition.} The expected margin measures the average score gap between preferred and dispreferred responses:
$
m_D(\theta) := \mathbb{E}_{(x,y_w,y_l) \sim D}[\beta\,\tilde{\theta}^\top\Delta\phi].
$
The margin decomposes into causal and spurious components:
\begin{equation}
\label{eq:margin-decomposition}
m_D(\theta) = \underbrace{\beta\,\tilde{\theta}_c^\top\mu_c^{(D)}}_{=:m^{\text{causal}}_D(\theta)} + \underbrace{\beta\,\tilde{\theta}_s^\top\mu_s^{(D)}}_{=:m^{\text{spurious}}_D(\theta)}.
\end{equation}

We now show that margin differences provide a first-order approximation to objective differences.
\begin{proposition}[First-order margin approximation]
\label{prop:margin-approx}
Under the local regime (Assumption~\ref{assumption:local}), the DPO objective $\tilde{J}^{(D)}(\theta) := \mathbb{E}_D[\log\sigma(\beta\tilde{\theta}^\top\Delta\phi)]$ 
satisfies
$$
\tilde{J}^{(Q)}(\theta) - \tilde{J}^{(P)}(\theta) 
= \frac{1}{2}\big(m_Q(\theta) - m_P(\theta)\big) + O(\beta^2\|\tilde{\theta}\|^2).
$$
\end{proposition}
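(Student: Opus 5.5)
The plan is to Taylor-expand the scalar function $f(z) := \log\sigma(z)$ around $z=0$, apply the expansion pointwise to $z = \beta\tilde{\theta}^\top\Delta\phi$, and then take expectations under $P$ and $Q$ separately. The needed derivatives are
\[
f(0) = -\log 2, \qquad f'(z) = 1-\sigma(z), \qquad f'(0) = \tfrac12, \qquad f''(z) = -\sigma(z)(1-\sigma(z)).
\]
In particular $|f''(z)| \le \tfrac14$ for all $z$. Taylor's theorem with Lagrange remainder then gives, for every real $z$,
\[
\log\sigma(z) = -\log 2 + \tfrac12 z + R(z), \qquad |R(z)| \le \tfrac18 z^2 .
\]
The uniform bound on $f''$ makes the remainder valid globally, not just for small $z$. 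Assumption~\ref{assumption:local} then only serves to say that this remainder is genuinely second order, i.e.\ small relative to the linear term.

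Next, I substitute $z=\beta\tilde{\theta}^\top\Delta\phi$ and take expectations under a generic distribution $D$. This yields
\[
\tilde{J}^{(D)}(\theta) = -\log 2 + \tfrac12 m_D(\theta) + \mathbb{E}_D[R(\beta\tilde{\theta}^\top\Delta\phi)],
\]
using linearity and the definition of $m_D$. The remainder satisfies
\[
|\mathbb{E}_D[R]| \le \tfrac{\beta^2}{8}\,\tilde{\theta}^\top\Sigma^{(D)}\tilde{\theta} \le \tfrac{\beta^2}{8}\lambda_{\max}(\Sigma^{(D)})\|\tilde{\theta}\|^2 .
\]
I will carry out this step for $D=P$ and for $D=Q$ and then subtract. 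The constant $-\log 2$ cancels, the linear terms give exactly $\tfrac12(m_Q-m_P)$, and the two remainders combine to at most $\tfrac{\beta^2}{8}\big(\lambda_{\max}(\Sigma^{(P)})+\lambda_{\max}(\Sigma^{(Q)})\big)\|\tilde{\theta}\|^2$. This is $O(\beta^2\|\tilde{\theta}\|^2)$, with the implied constant depending only on the second moments of $\Delta\phi$ under $P$ and $Q$.

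The main obstacle is bookkeeping rather than analysis. The $O(\cdot)$ constant requires finite second moments $\Sigma^{(P)},\Sigma^{(Q)}$, which I will state explicitly as implicit in the setup of Section~\ref{sec:distribution-shift-setup}. I also have to note that Assumption~\ref{assumption:local} holds only ``with high probability,'' so the argument should not rely on a pointwise small-$z$ expansion. The global bound $|f''|\le\tfrac14$ handles this, since the estimate holds for every $z$ and no tail event needs separate control. If a sharper statement were wanted, the exact second-order coefficient $-\tfrac18$ could be recovered via $f''(0)=-\tfrac14$, with an $O(\beta^3)$ error, given third moments. That refinement is not needed for the stated claim.
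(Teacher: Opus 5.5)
Your proposal is correct and follows the paper's route: expand $\log\sigma$ at $0$ (value $\log\frac12$, slope $\frac12$), take expectations under each distribution to get $\log\frac12+\frac12 m_D(\theta)+O(\mathbb{E}_D[z^2])$, and subtract so the constants cancel. The differences are refinements in your favour. Your global Lagrange bound $|R(z)|\le z^2/8$ (from $\sigma(z)(1-\sigma(z))\le\frac14$) removes the paper's loose reliance on $|z|$ being small only with high probability. You also bound $\mathbb{E}_D[z^2]$ by $\beta^2\lambda_{\max}(\Sigma^{(D)})\|\tilde\theta\|^2$, whereas the paper invokes bounded features $\|\Delta\phi\|_2\le 1$ to get $\mathbb{E}_D[z^2]\le\beta^2\|\tilde\theta\|_2^2$; your bound recovers this, since that assumption gives $\lambda_{\max}(\Sigma^{(D)})\le 1$.
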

The proof follows from Taylor expansion of the log-sigmoid; see Appendix~\ref{appendix:deployment-proxy}.

Proposition~\ref{prop:margin-approx} shows that margin differences drive objective changes. We now isolate the spurious component by considering shifts that preserve causal statistics.
\begin{proposition}[Spurious margin drives shift]
\label{prop:spurious-drives-shift}
When causal statistics are stable ($\mu_c^{(Q)} = \mu_c^{(P)}$), the objective difference at $\theta_{\mathrm{train}}$ is governed by the spurious margin:
$$
\tilde{J}^{(Q)}(\theta_{\mathrm{train}}) - \tilde{J}^{(P)}(\theta_{\mathrm{train}}) 
\approx \frac{\beta}{2} \tilde{\theta}_{s,\mathrm{train}}^\top 
(\mu_s^{(Q)} - \mu_s^{(P)}),
$$
up to an $O(\beta^2\|\tilde{\theta}_{\mathrm{train}}\|^2)$ remainder.
\end{proposition}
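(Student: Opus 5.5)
The plan is to obtain this as a direct corollary of Proposition~\ref{prop:margin-approx}, evaluated at $\theta=\theta_{\mathrm{train}}$. That proposition already gives
$$
\tilde{J}^{(Q)}(\theta_{\mathrm{train}})-\tilde{J}^{(P)}(\theta_{\mathrm{train}})
=\tfrac{1}{2}\big(m_Q(\theta_{\mathrm{train}})-m_P(\theta_{\mathrm{train}})\big)+O(\beta^2\|\tilde{\theta}_{\mathrm{train}}\|^2).
$$
This holds for any fixed parameter satisfying the local regime. Here $\theta_{\mathrm{train}}$ is fixed after training on $P$ and is then evaluated on both $P$ and $Q$, so it qualifies, and the remainder is already of the claimed order.

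\emph{Step 1: split the margin.} I then substitute the margin decomposition \eqref{eq:margin-decomposition} for $D=P$ and $D=Q$. Because $\tilde{\theta}_{\mathrm{train}}$ is the same in both, the difference of margins is
$$
\beta\,\tilde{\theta}_{c,\mathrm{train}}^\top(\mu_c^{(Q)}-\mu_c^{(P)})+\beta\,\tilde{\theta}_{s,\mathrm{train}}^\top(\mu_s^{(Q)}-\mu_s^{(P)}).
$$
This uses only linearity of expectation, since $m_D$ is linear in $\mu^{(D)}$ for fixed $\theta$.

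\emph{Step 2: remove the causal term.} Under the hypothesis $\mu_c^{(Q)}=\mu_c^{(P)}$, the causal term vanishes exactly. Multiplying the remaining term by $\tfrac12$ gives $\tfrac{\beta}{2}\tilde{\theta}_{s,\mathrm{train}}^\top(\mu_s^{(Q)}-\mu_s^{(P)})$, with the same $O(\beta^2\|\tilde{\theta}_{\mathrm{train}}\|^2)$ remainder carried over unchanged.

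\emph{Main point of care: uniformity of the remainder.} The Taylor expansion $\log\sigma(z)=-\log 2+\tfrac{z}{2}-\tfrac{z^2}{8}+O(z^3)$ has a second-order remainder bounded by $\beta^2\,\mathbb{E}_D[(\tilde{\theta}^\top\Delta\phi)^2]\le\beta^2\|\tilde{\theta}\|^2\,\mathbb{E}_D\|\Delta\phi\|^2$. This bound needs bounded second moments under both $P$ and $Q$. I would therefore state that the local regime (Assumption~\ref{assumption:local}) is assumed to hold under $Q$ as well as $P$, with the second moments absorbed into the implicit constant. The constant $-\log 2$ cancels in the difference.

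Note that the second-order terms $-\tfrac{\beta^2}{8}\tilde{\theta}^\top(\Sigma^{(Q)}-\Sigma^{(P)})\tilde{\theta}$ are not zero when the spurious covariances shift. This is why the statement uses ``$\approx$'' and only claims the result up to the quadratic remainder, rather than claiming an exact identity.
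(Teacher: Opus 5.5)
Your proposal is correct and follows the paper's proof exactly: apply Proposition~\ref{prop:margin-approx} at $\theta_{\mathrm{train}}$, split the margin difference into causal and spurious parts via \eqref{eq:margin-decomposition}, and drop the causal term using $\mu_c^{(Q)}=\mu_c^{(P)}$. Your concern about uniformity of the remainder is handled in the paper by the normalization $\|\Delta\phi\|_2\le 1$ (assumption (A1)). That normalization gives $\mathbb{E}_D[(\beta\tilde{\theta}^\top\Delta\phi)^2]\le\beta^2\|\tilde{\theta}\|_2^2$ under both $P$ and $Q$, so no separate second-moment assumption is needed.
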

\begin{proof}
Apply Proposition~\ref{prop:margin-approx} with $\theta = \theta_{\mathrm{train}}$. 
From \eqref{eq:margin-decomposition}, the margin difference decomposes as
$
m_Q(\theta_{\mathrm{train}}) - m_P(\theta_{\mathrm{train}}) 
= \beta\tilde{\theta}_{c,\mathrm{train}}^\top(\mu_c^{(Q)} - \mu_c^{(P)}) 
+ \beta\tilde{\theta}_{s,\mathrm{train}}^\top(\mu_s^{(Q)} - \mu_s^{(P)}).
$
When $\mu_c^{(Q)} = \mu_c^{(P)}$, the causal term vanishes.
\end{proof}

Proposition~\ref{prop:spurious-drives-shift} identifies when deployment performance becomes sensitive to spurious statistics: when learned spurious parameters $\tilde{\theta}_{s,\mathrm{train}}$ interact with shifts in spurious statistics $\mu_s^{(Q)}-\mu_s^{(P)}$. To first order, degradation occurs when this interaction has the harmful sign. When spurious statistics remain stable ($\mu_s^{(Q)} = \mu_s^{(P)}$), spurious learning is benign despite $\tilde{\theta}_{s,\mathrm{train}} \neq 0$. We now show that this population-level vulnerability is irreducible with respect to sample size by decomposing deployment suboptimality.
% ===========================
% suboptimality decomposition
% ===========================
\subsection{Suboptimality Decomposition} \label{sub-sec:subopt-decomposition}
We now formalize the irreducibility of deployment degradation by decomposing suboptimality into a shift term and an estimation term. Let $\theta_Q^\star := \arg\max_{\theta \in \Theta_B} \tilde{J}^{(Q)}(\theta)$ denote the deployment-optimal parameters, where $\Theta_B := \{\theta \in \mathbb{R}^d : \|\theta - \theta_{\mathrm{ref}}\|_2 \le B\}$. Let $\hat{\theta}$ denote the finite-sample estimator trained on $n$ samples from $P$. We define deployment suboptimality as
$$
\mathrm{SubOpt}_Q(\hat{\theta}) := \tilde{J}^{(Q)}(\theta_Q^\star) - \tilde{J}^{(Q)}(\hat{\theta}).
$$
Inserting the population training optimum $\theta_{\mathrm{train}}$ yields:
\begin{align}
\label{eq:subopt-decomposition}
\mathrm{SubOpt}_Q(\hat{\theta}) 
&= \underbrace{\tilde{J}^{(Q)}(\theta_Q^\star) - 
\tilde{J}^{(Q)}(\theta_{\mathrm{train}})}_{\text{shift term}} \notag\\
&\quad + \underbrace{\tilde{J}^{(Q)}(\theta_{\mathrm{train}}) - 
\tilde{J}^{(Q)}(\hat{\theta})}_{\text{estimation term}}.
\end{align}
\textbf{Irreducible shift vs. reducible estimation.} The decomposition separates two sources of suboptimality. When $\hat{\theta}$ is consistent for the population training optimum $\theta_{\mathrm{train}}$, the \emph{estimation term} vanishes as $n \to \infty$. In contrast, the \emph{shift term} is determined by the population optimum $\theta_{\mathrm{train}}$ and persists regardless of sample size:
$$
\lim_{n \to \infty} \mathrm{SubOpt}_Q(\hat{\theta}) 
= \tilde{J}^{(Q)}(\theta_Q^\star) - \tilde{J}^{(Q)}(\theta_{\mathrm{train}}).
$$
By Proposition~\ref{prop:spurious-drives-shift}, when causal statistics are stable, the persistent vulnerability is driven by the interaction between
$\tilde{\theta}_{s,\mathrm{train}}$ and $\smash{\mu_s^{(Q)}-\mu_s^{(P)}}$. The magnitude depends on two factors: the learned spurious parameters $\tilde{\theta}_{s,\mathrm{train}}$ (which Section~\ref{sec:mechanism} showed emerge structurally from training) and the spurious shift $\smash{\mu_s^{(Q)} - \mu_s^{(P)}}$ (which depends on the deployment environment). Collecting more data from $P$ cannot eliminate this vulnerability, it only reduces the vanishing estimation component while leaving $\tilde{\theta}_{s,\mathrm{train}}$ unchanged.

% =====================
% main deployment bound
% =====================
\subsection{Main Deployment Bound} \label{sub-sec:main-deployment-bound}
The margin analysis in Section~\ref{sub-sec:margin-proxy} characterized deployment degradation at the population level, assuming access to $\theta_{\mathrm{train}}$. In practice, we learn from finite samples, obtaining an estimator $\hat{\theta}$ that deviates from $\theta_{\mathrm{train}}$. We now bound the estimation term $\tilde{J}^{(Q)}(\theta_{\mathrm{train}}) - \tilde{J}^{(Q)}(\hat{\theta})$ to complete the decomposition in Equation~\eqref{eq:subopt-decomposition}, confirming that the irreducible shift dominates as $n \to \infty$.

\textbf{Technical assumptions.} We require: (A1) bounded features, $\|\Delta\phi\|_2 \leq 1$ almost surely; (A2) bounded parameters, $\|\tilde{\theta}\|_2 \leq B$ for all $\theta \in \Theta_B$; (A3) local regime (Assumption~\ref{assumption:local}); and (A4) geometry transfer, meaning there exists $\kappa_\Pi \ge 1$ such that $\|v\|_{\Sigma^{(Q)}}^2 \le \kappa_\Pi \|v\|_{H_P}^2$ for all $v \in \mathbb{R}^d$, where $H_P := -\nabla^2 \tilde J^{(P)}(\theta_{\mathrm{train}}) + \lambda I$ is the positive definite local curvature of the training objective at $\theta_{\mathrm{train}}$. Assumption (A4) bounds deployment variation by training curvature and holds when $P$ and $Q$ are not too different. See Appendix~\ref{appendix:assumptions-deployment} for detailed discussion of these conditions.

We define $\hat{\theta}$ as the ridge-regularized MLE:
$$
\hat{\theta} := \arg\max_{\theta \in \Theta_B} 
\left[\frac{1}{n}\sum_{i=1}^n \log\sigma(\beta\tilde{\theta}^\top\Delta\phi^{(i)}) 
- \frac{\lambda}{2}\|\tilde{\theta}\|_2^2\right],
$$
where $\lambda > 0$ is the regularization parameter.
\begin{theorem}[Deployment sub-optimality bound]
\label{thm:deployment-bound}
Under assumptions (A1)--(A4), with probability at least $1 - \delta$:
\begin{equation}
\label{eq:deployment-bound}
\begin{aligned}
\mathrm{SubOpt}_Q(\hat{\theta})
\le
\underbrace{
\tilde J^{(Q)}(\theta_Q^\star)
-
\tilde J^{(Q)}(\theta_{\mathrm{train}})
}_{\text{shift term}}
+
\underbrace{
\kappa_{\mathrm{est}}\,\Gamma_n
}_{\text{estimation term}},
\end{aligned}
\end{equation}
where $\kappa_{\mathrm{est}}:=G_Q+\kappa_\Pi\Gamma_n/2$, $G_Q:=\|\nabla\tilde J^{(Q)}(\theta_{\mathrm{train}})\|_{H_P^{-1}}$, and $\Gamma_n:=2\beta\sqrt{2(d+\log(1/\delta))/n}+B\sqrt{\lambda}$.
\end{theorem}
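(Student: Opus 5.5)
The plan starts from the decomposition \eqref{eq:subopt-decomposition}. The shift term appears verbatim in \eqref{eq:deployment-bound}, so the whole task is to show that, with probability at least $1-\delta$,
$$\tilde J^{(Q)}(\theta_{\mathrm{train}})-\tilde J^{(Q)}(\hat\theta)\le \kappa_{\mathrm{est}}\Gamma_n .$$
The argument has two stages. First, a Taylor expansion of $\tilde J^{(Q)}$ around $\theta_{\mathrm{train}}$ reduces this to a bound on $\|\hat\theta-\theta_{\mathrm{train}}\|_{H_P}$. Second, a standard ridge-MLE concentration argument shows $\|\hat\theta-\theta_{\mathrm{train}}\|_{H_P}\le\Gamma_n$.

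\textbf{Step 1 (deployment Taylor bound).} Set $v:=\hat\theta-\theta_{\mathrm{train}}$. Expanding $\tilde J^{(Q)}$ to second order gives
$$\tilde J^{(Q)}(\theta_{\mathrm{train}})-\tilde J^{(Q)}(\hat\theta)=-\nabla\tilde J^{(Q)}(\theta_{\mathrm{train}})^\top v-\tfrac12 v^\top\nabla^2\tilde J^{(Q)}(\bar\theta)v$$
for some $\bar\theta$ on the segment. The gradient term is handled by Cauchy--Schwarz in the $H_P$ / $H_P^{-1}$ dual norms, which gives at most $G_Q\|v\|_{H_P}$. For the Hessian term, note that $-\nabla^2\tilde J^{(Q)}(\bar\theta)=\beta^2\mathbb{E}_Q[\sigma(1-\sigma)\Delta\phi\Delta\phi^\top]\preceq\tfrac{\beta^2}{4}\Sigma^{(Q)}$. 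Hence the Hessian term is at most a constant multiple of $\beta^2\|v\|^2_{\Sigma^{(Q)}}$, and (A4) converts this to $\kappa_\Pi\|v\|_{H_P}^2$. The constant must come out to exactly the $\tfrac12\kappa_\Pi$ in $\kappa_{\mathrm{est}}$. I expect that to need either a normalization with $\beta\le 2$ or reading (A4) as already absorbing the $\beta^2/4$ factor. If needed, I will state the bound with $\kappa_\Pi$ understood as including this factor. The result is
$$\text{estimation term}\le G_Q\|v\|_{H_P}+\tfrac{\kappa_\Pi}{2}\|v\|_{H_P}^2 .$$

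\textbf{Step 2 (ridge-MLE concentration).} Let $\hat F$ be the regularized empirical objective and $F$ its population analogue $\tilde J^{(P)}(\theta)-\tfrac\lambda2\|\tilde\theta\|^2$. I will use the first-order optimality of $\hat\theta$ together with the local regime (A3). Under (A3) the weights $\sigma(1-\sigma)$ stay near $1/4$, so the empirical and population curvatures are comparable to $H_P$ along the segment. A strong-concavity argument in the $H_P$ norm then gives
$$\|v\|_{H_P}\le\|\nabla\hat F(\theta_{\mathrm{train}})\|_{H_P^{-1}} .$$
Because $\theta_{\mathrm{train}}$ maximizes the unregularized $\tilde J^{(P)}$, this gradient splits into two parts: the centered empirical-process term $\frac1n\sum_i\beta(1-\sigma_i)\Delta\phi^{(i)}-\mathbb{E}[\cdot]$, and the ridge term $-\lambda\tilde\theta_{\mathrm{train}}$.

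The ridge term is easy. Since $H_P\succeq\lambda I$, its dual norm is at most $\lambda\|\tilde\theta_{\mathrm{train}}\|/\sqrt\lambda\le B\sqrt\lambda$ by (A2). For the empirical term, each summand has $\beta(1-\sigma_i)\Delta\phi^{(i)}$ bounded by $\beta$ by (A1). I will apply a vector Bernstein / self-normalized bound of the Zhu et al.\ (2023) type, which controls the $H_P^{-1}$ norm of the centered average by $2\beta\sqrt{2(d+\log(1/\delta))/n}$ with probability $1-\delta$, using $H_P^{-1}$ as the whitening. Adding the two pieces gives $\|v\|_{H_P}\le\Gamma_n$, and substituting into Step 1 yields exactly $G_Q\Gamma_n+\tfrac{\kappa_\Pi}{2}\Gamma_n^2=\kappa_{\mathrm{est}}\Gamma_n$.

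\textbf{Main obstacle.} I expect the delicate point to be Step 2's strong-concavity transfer. It requires that the empirical Hessian along the segment between $\theta_{\mathrm{train}}$ and $\hat\theta$ dominates $H_P$, which mixes a population object with a sample object. My first attempt is to invoke (A3) so the sigmoid weights are essentially constant, making the empirical Hessian $\approx\tfrac{\beta^2}{4}\hat\Sigma+\lambda I$. I would then bound $\|\hat\Sigma-\Sigma^{(P)}\|$ by matrix concentration, absorbing that error into the stated constants, or treat it as a higher-order term within the local regime.
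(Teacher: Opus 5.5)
\textbf{Step 1 is fine and matches the paper.} Your Step 1 is the paper's argument (its Steps 2--4). The bound $\sigma(1-\sigma)\le 1/4$ gives the smoothness inequality with $\tfrac{\beta^2}{4}\Sigma^{(Q)}$. Cauchy--Schwarz in the $H_P$/$H_P^{-1}$ pairing gives $G_Q\|v\|_{H_P}$. Geometry transfer then handles the quadratic term. The appendix states (A4) as $\|v\|^2_{\frac{\beta^2}{4}\Sigma^{(Q)}}\le\kappa_\Pi\|v\|^2_{H_P}$, which is exactly your ``absorbing'' reading, so no $\beta\le 2$ normalization is needed.

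\textbf{The gap is Step 2.} The paper never derives $\|\hat\theta-\theta_{\mathrm{train}}\|_{H_P}\le\Gamma_n$. It posits it as an extra hypothesis, the training-estimator concentration assumption attributed to Zhu et al. That assumption is listed among the hypotheses of the appendix version of the theorem and is simply invoked in the proof. You instead try to derive it from (A1)--(A4). That cannot succeed, because with this $\Gamma_n$ the inequality is not a consequence of (A1)--(A4).

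\textbf{Why the derivation fails.} The problem is the $\beta$ prefactor. The summands $\beta(1-\sigma_i)\Delta\phi^{(i)}$ have size $\beta$, but $H_P\approx\tfrac{\beta^2}{4}\Sigma^{(P)}+\lambda I$. Whitening by $H_P^{-1}$ therefore cancels the $\beta$, and a vector Bernstein bound gives order $\sqrt{(d+\log(1/\delta))/n}$, not $2\beta\sqrt{2(d+\log(1/\delta))/n}$.

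This is sharp. Take $d=1$, $\Delta\phi=\pm1$ with probability $1/2$ each, $Q=P$, $B=1$, $\beta=0.1$ and $\lambda\to0$. Then:
\begin{itemize}
\item $\tilde\theta_{\mathrm{train}}=0$ and $H_P=\beta^2/4+\lambda$.
\item The ridge-MLE solves $\sigma(\beta\hat{\tilde\theta})\approx\hat p$, where $\hat p$ is the fraction of $+1$'s.
\item Hence $\|\hat\theta-\theta_{\mathrm{train}}\|_{H_P}\ge 2|\hat p-\tfrac12|\approx|Z|/\sqrt n$, with $Z$ approximately standard normal, whatever $\beta$ is.
\end{itemize}
For large $n$, (A1)--(A4) all hold, yet for $\delta=0.05$ one has $\Gamma_n\approx 0.57/\sqrt n$. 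So the claimed bound fails with probability above one half.

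Under your reading of (A4) one may take $\kappa_\Pi=1$; here $G_Q=0$ and the shift term vanishes. Then even the final inequality fails: $\mathrm{SubOpt}_Q(\hat\theta)\approx Z^2/(2n)$, against a bound of $\tfrac12\Gamma_n^2\approx 0.16/n$. This is why the paper needs the concentration bound as a hypothesis rather than as a consequence.

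\textbf{Two further problems remain even with a corrected radius.}
\begin{itemize}
\item Zhu et al.'s bound does not apply as you use it. It is for a well-specified Bradley--Terry model, centered at the label-generating parameter, where the score is conditionally mean zero given the design. It also controls the unweighted empirical-design norm with a $1/\gamma$ constant, not $H_P$ at a population optimizer.
\item Your strong-concavity transfer needs the empirical weighted Gram matrix along the segment to dominate $\bar\Sigma^{(P)}$. This fails outright for $n<d$. Even in the example above it fails slightly, since $\sigma(1-\sigma)$ decreases away from $\theta_{\mathrm{train}}$. In general it holds only up to a curvature-ratio and matrix-concentration factor. That factor comes with an extra failure probability and a sample-size condition, and neither can be absorbed into the explicit constants of $\Gamma_n$.
\end{itemize}

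\textbf{Fix.} The only route to the stated bound is the paper's: take the concentration bound as an additional hypothesis. Your Step 1 then finishes the proof immediately.
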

\begin{proof}[Proof sketch]
The decomposition \eqref{eq:subopt-decomposition} holds by definition. For the estimation term, we expand $\tilde J^{(Q)}$ around $\theta_{\mathrm{train}}$. The linear term is bounded by  $G_Q\|\hat\theta-\theta_{\mathrm{train}}\|_{H_P}$, while the quadratic term is controlled by Assumption (A4). Concentration gives
$\|\hat\theta-\theta_{\mathrm{train}}\|_{H_P}\le \Gamma_n$. Full details are in Appendix~\ref{appendix-thm:deployment-bound}.
\end{proof}
\textbf{Interpretation.} Theorem~\ref{thm:deployment-bound} reveals the structure of deployment suboptimality. The estimation term decreases with more training data and vanishes in the large-sample limit (up to the $O(B\sqrt{\lambda})$ regularized bias). The shift term persists: even with infinite training data, deployment error remains bounded by the gap between $\theta_{\mathrm{train}}$ and $\theta_Q^\star$. 

By Proposition~\ref{prop:spurious-drives-shift}, when causal statistics are stable, learned spurious parameters make the deployment objective sensitive to shifted spurious statistics. This decomposition clarifies two sources of vulnerability: Section~\ref{sec:mechanism} showed that $\tilde{\theta}_{s,\mathrm{train}} \neq 0$ arises structurally from the training distribution, while the deployment shift $\smash{\mu_s^{(Q)} - \mu_s^{(P)}}$ depends on the environment. Spurious learning creates \emph{latent vulnerability}; harm materializes only when spurious statistics shift. Crucially, scaling training data cannot eliminate this vulnerability, as it leaves $\tilde{\theta}_{s,\mathrm{train}}$ unchanged.
% ===================
% section: mitigation
% ===================
\section{Tie Training Reduces Spurious Reliance} \label{sec:mitigation}
Section~\ref{sec:consequence} showed that deployment error contains an irreducible shift term driven by learned spurious parameters $\tilde{\theta}_{s,\mathrm{train}}$. We now describe a simple data-level intervention that directly targets this mechanism. We introduce \textit{tie training}, a data augmentation strategy that reduces spurious reliance by adding curvature selectively in spurious directions. The approach constructs preference pairs with equal utility but differing spurious features, assigns labels randomly, and mixes these ties with standard preference data during training.
% ================
% tie construction
% ================
\subsection{Tie Construction and Training} \label{sub-sec:tie-construction}
\begin{definition}[Tie pair]
A tie pair is a tuple $(x,y_A,y_B)$ with equal utility: $u^\dagger(x,y_A)=u^\dagger(x,y_B)$.
\end{definition}
We construct ties so that causal features match while spurious features differ: $\Delta\phi_c = 0$ and $\Delta\phi_s = \delta_s \neq 0$. For each tie, we assign the winner-loser label uniformly at random. Thus, ties enter the standard DPO loss as hard-labeled pairs and require no change to the objective. The random orientation gives $\mathbb{E}_{\mathrm{tie}}[\Delta\phi]=0$ and
$$
\Sigma^{\mathrm{tie}}:=\mathbb{E}_{\mathrm{tie}}[\Delta\phi\Delta\phi^\top]
=
\begin{bmatrix}
0 & 0\\
0 & \Sigma^{\mathrm{tie}}_{ss}
\end{bmatrix}.
$$
This covariance structure ensures ties add curvature only in spurious directions.

\textbf{Mixed training.} We train on a mixture $P_{\mathrm{mix}} := \alpha P + (1-\alpha)P_{\mathrm{tie}}$ with $\alpha\in(0,1)$. Under the local regime (Assumption~\ref{assumption:local}), the linearized equilibrium satisfies
\begin{equation}
\label{eq:theta-mix}
\tilde{\theta}^\star_{\mathrm{mix}}
=
\frac{2\alpha}{\beta}\big(\Sigma^{\mathrm{mix}}\big)^{-1}\mu^{(P)},
\quad
\Sigma^{\mathrm{mix}}:=\alpha\Sigma^{(P)}+(1-\alpha)\Sigma^{\mathrm{tie}}.
\end{equation}
Since $\Sigma^{\mathrm{tie}}$ has support only on spurious coordinates, mixed training increases the spurious curvature while scaling the raw causal–spurious second-moment by $\alpha$. This shrinks the spurious component relative to strict-only training. Full derivation is in Appendix~\ref{appendix:mixed-equilibrium}.
% ===========
% main result
% ===========
\subsection{Main Result} \label{sub-sec:tie-main-result}
We now formalize the effect of tie training on spurious reliance and deployment performance.
\begin{theorem}[Tie training reduces spurious reliance and deployment shift]
\label{thm:tie-training}
Under the conditions of Theorem~\ref{thm:deployment-bound}, the tie construction above, and a regularity condition made explicit in Appendix~\ref{appendix:proof-tie-training}:

\textbf{(i) Spurious shrinkage.}
Let $\theta^\star$ denote the strict-only population optimizer and $\theta^\star_{\mathrm{mix}}$ the mixed-training optimizer. Then
$$
\|\tilde{\theta}_{s,\mathrm{mix}}^\star\|_{\Sigma^{\mathrm{mix}}_{ss}}
\;\le\;
\|\tilde{\theta}_{s}^\star\|_{\Sigma^{(P)}_{ss}},
$$
where $\|v\|_A := \sqrt{v^\top A v}$. The inequality is strict when tie training adds curvature along active spurious directions.

\textbf{(ii) Shift reduction.}
If $\mu_c^{(Q)}=\mu_c^{(P)}$, the first-order deployment shift is bounded by the strict-only worst-case envelope: $|\tilde{\theta}^{\star\top}_{s,\mathrm{mix}}\delta\mu_s| \le \|\tilde{\theta}_s^\star\|_{\Sigma_{ss}^{(P)}} \cdot \|\delta\mu_s\|_{(\Sigma_{ss}^{(P)})^{-1}}, $ where $\smash{\delta \mu_s:= \mu_s^{(Q)} - \mu_s^{(P)}}$. In the scalar case ($d_s=1$), this strengthens to
$
\big|
\tilde{\theta}^{\star}_{s,\mathrm{mix}} \delta \mu_s
\big|
\;\le\;
\big|
\tilde{\theta}^{\star}_{s} \delta \mu_s
\big|.
$

\textbf{(iii) Finite-sample bound.}
Let $\hat{\theta}^{\mathrm{mix}}$ be trained on $n$ samples from $P_{\mathrm{mix}}$. With probability at least $1-\delta$,
\begin{align*}
\mathrm{SubOpt}_Q(\hat{\theta}^{\mathrm{mix}})
\le
\underbrace{
\tilde J^{(Q)}(\theta_Q^\star)
-
\tilde J^{(Q)}(\theta^\star_{\mathrm{mix}})
}_{\text{reduced shift}}
+
\underbrace{
\kappa_{\mathrm{est}}^{\mathrm{mix}}\Gamma_{n,\mathrm{mix}}
}_{\text{estimation}},
\end{align*}
where $\kappa_{\mathrm{est}}^{\mathrm{mix}}:=G_{Q,\mathrm{mix}}+\kappa_\Pi\Gamma_{n,\mathrm{mix}}/2$ and $G_{Q,\mathrm{mix}}:=\|\nabla\tilde J^{(Q)}(\theta^\star_{\mathrm{mix}})\|_{H^{-1}_{P_{\mathrm{mix}}}}$.
\end{theorem}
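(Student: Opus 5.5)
Part (i) comes from the block structure of the mixed equilibrium \eqref{eq:theta-mix}, part (ii) is Cauchy--Schwarz plus the scalar formula, and part (iii) reruns the proof of Theorem~\ref{thm:deployment-bound} with $P$ replaced by $P_{\mathrm{mix}}$.

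\textbf{Part (i).} Write $\Sigma^{\mathrm{mix}}$ in blocks: $\Sigma^{\mathrm{mix}}_{cc}=\alpha\Sigma_{cc}$, $\Sigma^{\mathrm{mix}}_{cs}=\alpha\Sigma_{cs}$, and $\Sigma^{\mathrm{mix}}_{ss}=\alpha\Sigma_{ss}+(1-\alpha)\Sigma^{\mathrm{tie}}_{ss}$. The right-hand side is $\tfrac{2\alpha}{\beta}\mu^{(P)}$. The factor $\alpha$ cancels in the causal block. Eliminating $\tilde\theta_c$ gives
\[
\tilde\theta_s = \tfrac{2}{\beta}\, M^{-1} r,
\]
where $M = \Sigma_{ss}-\Sigma_{sc}\Sigma_{cc}^{-1}\Sigma_{cs}$ is the spurious Schur complement and $r=\mu_s-\Sigma_{sc}\Sigma_{cc}^{-1}\mu_c$. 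For mixed training, $M$ is replaced by $M_{\mathrm{mix}} = M + \tfrac{1-\alpha}{\alpha}\Sigma^{\mathrm{tie}}_{ss} \succeq M$, while $r$ is unchanged. So both optimizers have the form $\tfrac{2}{\beta}N^{-1}r$, with $N_{\mathrm{mix}}\succeq N$ in Loewner order.

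The quantity that behaves monotonically is
\[
r^\top N^{-1} r=\|N^{-1}r\|_N^2 ,
\]
and $N\preceq N_{\mathrm{mix}}$ implies $N_{\mathrm{mix}}^{-1}\preceq N^{-1}$. The statement, however, uses the norm induced by $\Sigma_{ss}$ and $\Sigma^{\mathrm{mix}}_{ss}$ rather than by the Schur complements. This mismatch of norms is the main obstacle. I expect this is exactly the ``regularity condition'' deferred to the appendix. My first attempt is to assume (or reduce to) the case where the relevant norm is the Schur-complement norm, e.g.\ $\Sigma_{sc}=0$ or the norms compared after conditioning on causal features. In that case
\[
\|\tilde\theta_{s,\mathrm{mix}}\|^2_{M_{\mathrm{mix}}}=\tfrac{4}{\beta^2}\,r^\top M_{\mathrm{mix}}^{-1}r \le \tfrac{4}{\beta^2}\,r^\top M^{-1}r,
\]
with strict inequality when $\Sigma^{\mathrm{tie}}_{ss}$ is nonzero on the span of $M^{-1}r$. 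Otherwise I would state the regularity condition as the inequality $\tilde\theta_{s,\mathrm{mix}}^\top\Sigma^{\mathrm{mix}}_{ss}\tilde\theta_{s,\mathrm{mix}}\le r^\top M_{\mathrm{mix}}^{-1}r\cdot 4/\beta^2$ (and its analogue for $P$), and verify it where possible.

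\textbf{Part (ii).} Apply Cauchy--Schwarz in the $\Sigma_{ss}^{(P)}$ geometry:
\[
|\tilde\theta_{s,\mathrm{mix}}^\top\delta\mu_s|\le\|\tilde\theta_{s,\mathrm{mix}}\|_{\Sigma^{(P)}_{ss}}\,\|\delta\mu_s\|_{(\Sigma^{(P)}_{ss})^{-1}} .
\]
Then use $\Sigma^{(P)}_{ss}\preceq\Sigma^{\mathrm{mix}}_{ss}/\alpha$, together with part (i), to bound the first factor by $\|\tilde\theta_s^\star\|_{\Sigma^{(P)}_{ss}}$. Tracking the $\alpha$ normalization here needs care, and may again rely on the regularity condition. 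In the scalar case, $\tilde\theta_s=\tfrac{2}{\beta}r/N$ with $N_{\mathrm{mix}}\ge N>0$. Hence $|\tilde\theta_{s,\mathrm{mix}}|\le|\tilde\theta_s|$ directly, and multiplying by $|\delta\mu_s|$ gives the scalar claim. Combined with Proposition~\ref{prop:spurious-drives-shift}, this controls the first-order deployment shift.

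\textbf{Part (iii).} Repeat the proof of Theorem~\ref{thm:deployment-bound} verbatim, with $P_{\mathrm{mix}}$ in place of $P$ and $\theta^\star_{\mathrm{mix}}$ in place of $\theta_{\mathrm{train}}$. The tie features still satisfy $\|\Delta\phi\|\le1$ if the $\delta_s$ are normalized, so (A1) holds. For (A4), $\kappa_\Pi$ carries over because adding PSD tie curvature only increases $H_{P_{\mathrm{mix}}}$ relative to $\alpha H_P$, up to the $\alpha$ scaling, which I would absorb into $\kappa_\Pi$. The same concentration argument then yields $\Gamma_{n,\mathrm{mix}}$, and the decomposition \eqref{eq:subopt-decomposition} with $\theta^\star_{\mathrm{mix}}$ inserted gives the bound. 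The shift term is the reduced one by parts (i)--(ii).
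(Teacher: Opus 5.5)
Your elimination of $\tilde\theta_c$ is correct, and it is a genuinely different route from the paper's. In one respect it is sharper. With $K:=\Sigma_{sc}\Sigma_{cc}^{-1}\Sigma_{cs}$ and $T:=\tfrac{1-\alpha}{\alpha}\Sigma^{\mathrm{tie}}_{ss}$, ties act as a pure ridge $M_{\mathrm{mix}}=M+T$ on the spurious Schur complement, and the driving term $r$ is unchanged. This makes your scalar case of (ii) unconditional. The paper instead assumes $|b_s^{\mathrm{mix}}|\le|b_s^{(P)}|$; for $d_s=1$ your variables give $b_s^{\mathrm{mix}}=r(1+K/(M+T))$ and $b_s^{(P)}=r(1+K/M)$, so that hypothesis is in fact automatic. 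Your part (iii) is the same rerun of Theorem~\ref{thm:deployment-bound} on $P_{\mathrm{mix}}$ that the paper does. The gap is in passing from the $M$-geometry to the $\Sigma_{ss}$-norms of the statement. The condition you propose can carry (i), but it cannot carry the general-$d_s$ bound in (ii), and your (ii) argument does not close.

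Concretely, since $\Sigma^{(P)}_{ss}=M+K$ and $\Sigma^{\mathrm{mix}}_{ss}=\alpha(M_{\mathrm{mix}}+K)$, you have
\begin{align*}
\|\tilde\theta^\star_{s,\mathrm{mix}}\|^2_{\Sigma^{\mathrm{mix}}_{ss}}&=\tfrac{4\alpha}{\beta^2}\bigl(r^\top M_{\mathrm{mix}}^{-1}r+\|K^{1/2}M_{\mathrm{mix}}^{-1}r\|^2\bigr),\\
\|\tilde\theta^\star_{s}\|^2_{\Sigma^{(P)}_{ss}}&=\tfrac{4}{\beta^2}\bigl(r^\top M^{-1}r+\|K^{1/2}M^{-1}r\|^2\bigr).
\end{align*}
\begin{itemize}
\item \emph{A condition is unavoidable.} The only piece not monotone in $T$ is the leakage $\|K^{1/2}M_{\mathrm{mix}}^{-1}r\|$, which can grow when the tie covariance is misaligned with $M$. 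For example, take $M=\mathrm{diag}(1,\varepsilon)$, $r=e_1$, $K=k\,e_2e_2^\top$, and $T=vv^\top$ with $v=(1,1)^\top/\sqrt2$. The leakage jumps from $0$ to about $k$, and for $\alpha=1/2$, $k>1$ and small $\varepsilon$, both (i) and general (ii) fail.
\item \emph{Your condition is mis-specified.} The $P$-side inequality you need, $\|\tilde\theta^\star_s\|^2_{\Sigma^{(P)}_{ss}}\ge\tfrac{4}{\beta^2}r^\top M^{-1}r$, is automatic from $\Sigma_{ss}\succeq M$. Your mixed-side condition is equivalent to $\alpha\|K^{1/2}M_{\mathrm{mix}}^{-1}r\|^2\le(1-\alpha)\,r^\top M_{\mathrm{mix}}^{-1}r$. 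At $\alpha=1$ (your ``analogue for $P$'' in the same direction) this forces $\Sigma_{cs}\tilde\theta^\star_s=0$, which switches off leakage.
\item \emph{It loses the factor $\alpha$.} Your condition caps the left side of (i) without the factor $\alpha$. Your chain $\|\tilde\theta^\star_{s,\mathrm{mix}}\|^2_{\Sigma^{(P)}_{ss}}\le\alpha^{-1}\|\tilde\theta^\star_{s,\mathrm{mix}}\|^2_{\Sigma^{\mathrm{mix}}_{ss}}$ then only yields the envelope inflated by $\alpha^{-1/2}$.
\end{itemize}

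The paper keeps the $\alpha$. In (i) it uses $(\Sigma^{\mathrm{mix}}_{ss})^{-1}\preceq\alpha^{-1}(\Sigma^{(P)}_{ss})^{-1}$ to get $\|\tilde\theta^\star_{s,\mathrm{mix}}\|^2_{\Sigma^{\mathrm{mix}}_{ss}}\le\alpha\|\tilde\theta^\star_s\|^2_{\Sigma^{(P)}_{ss}}$. In (ii) it uses $A^{-1}BA^{-1}\preceq\alpha^{-1}A^{-1}\preceq\alpha^{-2}B^{-1}$, with $A=\Sigma^{\mathrm{mix}}_{ss}$ and $B=\Sigma^{(P)}_{ss}$, to cancel the $(2\alpha/\beta)^2$ prefactor. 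Both steps rest on its driving-term stability assumption $b_s^{\mathrm{mix}\top}B^{-1}b_s^{\mathrm{mix}}\le b_s^{(P)\top}B^{-1}b_s^{(P)}$.

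In your coordinates the natural repair is to assume the leakage does not grow:
\[
\|\Sigma_{cs}\tilde\theta^\star_{s,\mathrm{mix}}\|_{\Sigma_{cc}^{-1}}\le\|\Sigma_{cs}\tilde\theta^\star_s\|_{\Sigma_{cc}^{-1}},\quad\text{i.e.}\quad\|K^{1/2}M_{\mathrm{mix}}^{-1}r\|\le\|K^{1/2}M^{-1}r\|.
\]
This gives the $\alpha$-sharpened (i) at once. Combined with $M_{\mathrm{mix}}^{-1}MM_{\mathrm{mix}}^{-1}\preceq M_{\mathrm{mix}}^{-1}\preceq M^{-1}$, it also gives the same-matrix bound $\|\tilde\theta^\star_{s,\mathrm{mix}}\|_{\Sigma^{(P)}_{ss}}\le\|\tilde\theta^\star_s\|_{\Sigma^{(P)}_{ss}}$. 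Your Cauchy--Schwarz step then finishes (ii).
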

The proof combines the equilibrium characterization (Equation~\eqref{eq:theta-mix}) with the deployment bound framework; see Appendix~\ref{appendix-thm:tie-training}.
\begin{corollary}[Quantitative reduction under isotropic ties]
\label{cor:quantitative}
If $\Sigma^{\mathrm{tie}}_{ss}=\sigma^2 I_{d_s}$ and~$\Sigma_{ss}^{(P)} = \lambda_0I_{d_s}$, then
$$
\frac{\|\tilde{\theta}_{s,\mathrm{mix}}^\star\|_2}{\|\tilde{\theta}_s^\star\|_2} \le \frac{\alpha\lambda_0}{\alpha\lambda_0+(1-\alpha)\sigma^2}.
$$
\end{corollary}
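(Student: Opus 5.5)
The plan is to reduce both spurious components to closed forms in a common matrix, using the same Schur-complement elimination as in Equation~\eqref{eq:spurious-weight}, but eliminating the causal block instead of the spurious one. For the strict-only equilibrium $\Sigma\tilde\theta^\star=\frac{2}{\beta}\mu$, we solve \eqref{eq:block-eq1} for $\tilde\theta_c^\star$ (using $\Sigma_{cc}\succ0$, which follows from $\Sigma\succ0$) and substitute into \eqref{eq:block-eq2}. This gives
$$
\tilde\theta_s^\star=\frac{2}{\beta}S^{-1}r,\qquad S:=\Sigma_{ss}^{(P)}-\Sigma_{sc}\Sigma_{cc}^{-1}\Sigma_{cs},\qquad r:=\mu_s-\Sigma_{sc}\Sigma_{cc}^{-1}\mu_c .
$$
Here $S$ is the Schur complement of $\Sigma_{cc}$ in $\Sigma\succ0$, so $S\succ0$.

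Next, I would do the same elimination for the mixed equilibrium \eqref{eq:theta-mix}. The mixed system is $\Sigma^{\mathrm{mix}}\tilde\theta^\star_{\mathrm{mix}}=\frac{2\alpha}{\beta}\mu$. Its blocks are
$$
\Sigma^{\mathrm{mix}}_{cc}=\alpha\Sigma_{cc},\qquad \Sigma^{\mathrm{mix}}_{cs}=\alpha\Sigma_{cs},\qquad \Sigma^{\mathrm{mix}}_{ss}=\alpha\Sigma_{ss}^{(P)}+(1-\alpha)\Sigma^{\mathrm{tie}}_{ss}.
$$
The key observation is that $\Sigma^{\mathrm{mix}}_{sc}(\Sigma^{\mathrm{mix}}_{cc})^{-1}=\Sigma_{sc}\Sigma_{cc}^{-1}$, because the factors of $\alpha$ cancel. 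The reduced right-hand side is therefore $\frac{2\alpha}{\beta}r$, and the reduced spurious matrix is $\alpha S+(1-\alpha)\Sigma^{\mathrm{tie}}_{ss}$. Under the isotropic hypothesis $\Sigma^{\mathrm{tie}}_{ss}=\sigma^2 I$, and writing $c:=(1-\alpha)\sigma^2$, this gives
$$
\tilde\theta^\star_{s,\mathrm{mix}}=\frac{2\alpha}{\beta}(\alpha S+cI)^{-1}r=(\alpha S+cI)^{-1}\alpha S\,\tilde\theta_s^\star .
$$

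The ratio is then bounded by the operator norm of $T:=(\alpha S+cI)^{-1}\alpha S$. Since $T$ is a function of the symmetric matrix $S$, it is symmetric, and its eigenvalues are $\alpha s_i/(\alpha s_i+c)$, where the $s_i>0$ are the eigenvalues of $S$. The map $s\mapsto \alpha s/(\alpha s+c)$ is increasing, so it remains to bound $s_i$ from above. This is where $\Sigma_{ss}^{(P)}=\lambda_0 I$ is used: because $\Sigma_{sc}\Sigma_{cc}^{-1}\Sigma_{cs}\succeq0$, we have $S\preceq\lambda_0 I$, so every $s_i\le\lambda_0$. Hence
$$
\|\tilde\theta^\star_{s,\mathrm{mix}}\|_2\le\|T\|_{2}\,\|\tilde\theta^\star_s\|_2\le\frac{\alpha\lambda_0}{\alpha\lambda_0+(1-\alpha)\sigma^2}\,\|\tilde\theta^\star_s\|_2 .
$$
The case $\tilde\theta_s^\star=0$ is trivial, since then $r=0$ and $\tilde\theta^\star_{s,\mathrm{mix}}=0$ as well.

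I expect the main subtlety to be in the second step: checking that the leakage correction $\Sigma_{sc}\Sigma_{cc}^{-1}$ and the residual $r$ are unchanged by mixing, rather than rescaled in a way that spoils the comparison. The argument relies on the cancellation of $\alpha$ there, and on $\alpha S$ and $cI$ commuting so that $T$ is diagonalizable in the eigenbasis of $S$. The isotropy of $\Sigma^{\mathrm{tie}}_{ss}$ is exactly what guarantees this commutation.
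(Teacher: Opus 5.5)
Your proof is correct, and it takes a genuinely different route from the paper's, with a real payoff.

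\textbf{The paper's route.} The paper pivots on $\Sigma_{ss}$. It writes $\tilde\theta_s=\frac{2}{\beta}\Sigma_{ss}^{-1}b_s$, where the driving term $b_s$ is built from $S_c=\Sigma_{cc}-\Sigma_{cs}\Sigma_{ss}^{-1}\Sigma_{sc}$. Under isotropy $\Sigma^{\mathrm{mix}}_{ss}$ is a scalar multiple of the identity, so the ratio factorizes as $\frac{\alpha\lambda_0}{\alpha\lambda_0+(1-\alpha)\sigma^2}\cdot\|b_s^{\mathrm{mix}}\|_2/\|b_s^{(P)}\|_2$. The paper then removes the second factor by invoking the driving-term stability hypothesis (Assumption~\ref{assumption:driving-term}, which here reads $\|b_s^{\mathrm{mix}}\|_2\le\|b_s^{(P)}\|_2$). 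That is why the appendix version of the corollary is stated under the conditions of Theorem~\ref{appendix-thm:tie-training}.

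\textbf{Your route.} You pivot on $\Sigma_{cc}$ instead. Mixing rescales $\Sigma_{cc}$, $\Sigma_{cs}$ and the right-hand side by the same $\alpha$, so the reduced right-hand side $r$ is unchanged up to that factor. Only the reduced matrix moves, from $S$ to $\alpha S+cI$. This gives the exact linear relation $\tilde\theta^\star_{s,\mathrm{mix}}=T\tilde\theta^\star_s$, and the bound then follows from $S\preceq\Sigma^{(P)}_{ss}$ alone.

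\textbf{What your route buys.} You need no driving-term hypothesis. Your computation in fact shows that the hypothesis holds automatically in this setting. Since $b_s^{(P)}=\lambda_0S^{-1}r$ and $b_s^{\mathrm{mix}}=(\alpha\lambda_0+c)(\alpha S+cI)^{-1}r$, the ratio of their $i$-th components in the eigenbasis of $S$ is $\frac{s_i(\alpha\lambda_0+c)}{\lambda_0(\alpha s_i+c)}\le 1$, precisely because $s_i\le\lambda_0$. So you prove the corollary in the unconditional form stated in the main text. Your argument also gives two small strengthenings for free:
\begin{itemize}
\item isotropy of $\Sigma^{(P)}_{ss}$ can be weakened to $\Sigma^{(P)}_{ss}\preceq\lambda_0 I$;
\item you get the sharper factor $\alpha s_{\max}/(\alpha s_{\max}+c)$, where $s_{\max}=\lambda_{\max}(S)\le\lambda_0$.
\end{itemize}

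\textbf{What the paper's route buys.} It is uniform with the general-$d_s$ machinery of Theorem~\ref{thm:tie-training}. Once $\Sigma^{\mathrm{tie}}_{ss}$ is not a multiple of the identity, $T=(\alpha S+(1-\alpha)\Sigma^{\mathrm{tie}}_{ss})^{-1}\alpha S$ need not be symmetric and its operator norm can exceed one. This is exactly the commutation issue you flag, and there some extra control is unavoidable.
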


\textbf{Interpretation.} Theorem~\ref{thm:tie-training} shows that tie training directly targets the irreducible shift by shrinking spurious parameters through selective regularization. Part (i) establishes spurious weight reduction at the population level. Part (ii) bounds the deployment shift contribution, with pointwise reduction in the scalar case. Part (iii) provides finite-sample guarantees. Corollary~\ref{cor:quantitative} gives an explicit reduction as a function of~$\alpha$.
% ========================
% practical considerations
% ========================
\subsection{Practical Considerations} \label{sec:tie-practice}
We discuss three practical aspects of tie training next. \textbf{Soft ties:} Exact utility equality is not required; near-ties with $|u^\dagger(x,y_A)-u^\dagger(x,y_B)|\le \varepsilon$ yield similar effects. \textbf{Random labeling:} Using both $(y_A \succ y_B)$ and $(y_B \succ y_A)$ with equal probability ensures $\mathbb{E}_{\mathrm{tie}}[\Delta\phi]=0$; single-direction labeling injects bias. \textbf{Tie selection:} Systematic tie construction at scale remains an open problem; our experiments (Section~\ref{sec:experiments}) use manual construction or simple heuristics based on known spurious features.
% ====================
% section: experiments
% ====================
\section{Experiments} \label{sec:experiments}
We validate three theoretical predictions: (i) preference optimization learns spurious correlations (Theorem~\ref{thm:mechanism}), (ii) deployment error under spurious shift is irreducible (Theorem~\ref{thm:deployment-bound}), and (iii) tie training reduces this error (Corollary~\ref{cor:quantitative}). We evaluate these claims across three regimes of increasing realism: linear models, neural networks, and large language models. Code is available at \url{https://github.com/cmoyacal/tie-training}.

% ======================================
% linear models: quantitative validation
% ======================================
\subsection{Linear Models: Quantitative Validation}
\label{sub-sec:exp-linear}
\textbf{Setup.} We generate Gaussian features $\phi=[\phi_c;\phi_s]$ with $d_c=5$ causal and $d_s=5$ spurious dimensions. Preference labels are generated by a Bradley--Terry logistic teacher. Spurious features correlate with preferences under training distribution $P$ and shift at deployment $Q$. Full details are in Appendix~\ref{appendix:experiments-linear}.

\textbf{Results.} Figure~\ref{fig:linear-validation} validates all three theoretical predictions. Panel (a) shows that learned spurious parameters $\|\hat{\theta}_s\|$ match the closed-form prediction of spurious parameters (Theorem~\ref{thm:mechanism}). When the local regime assumption weakens and our quantitative predictions are no longer valid, we find empirically that including second-order curvature terms restores agreement (bottom panel). Panel (b-top) plots deployment suboptimality versus sample size $n$. The estimation error decays as $n \to \infty$, while the shift term plateaus, confirming irreducibility (Theorem~\ref{thm:deployment-bound}). Panel (b-bottom) reports the tie-induced reduction ratio $\|\theta_s^{\text{mix}}\|/\|\theta_s^{\text{strict}}\|$ across mixing fractions $\alpha$. Empirical curves match the theoretical reduction factor (Corollary~\ref{cor:quantitative}) for $\beta = 0.3$ and  ratio $\sigma^2/\lambda_0 = 5$.
\begin{figure}[t]
  \centering
  % --- Left Column (Part a) ---
  \begin{minipage}[b]{0.48\linewidth}
    \centering
    \textbf{(a)} \\ \vspace{1mm}
    \includegraphics[width=\linewidth]{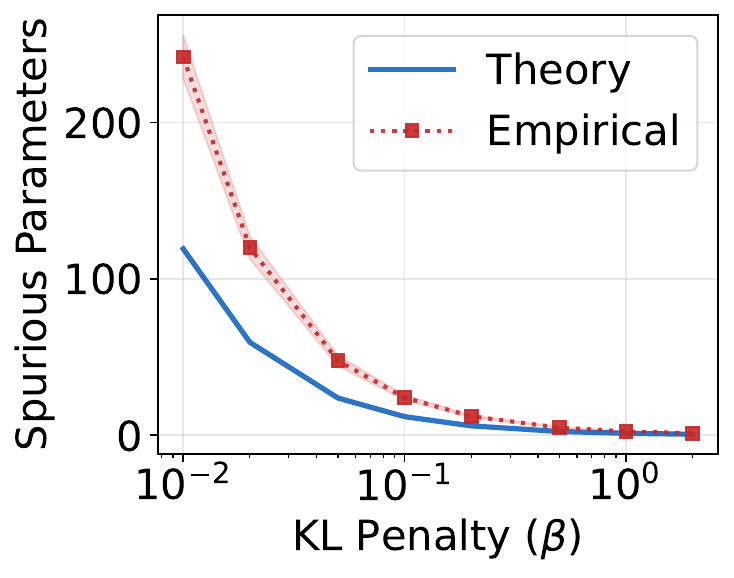}
    \vspace{2mm} 
    \includegraphics[width=\linewidth]{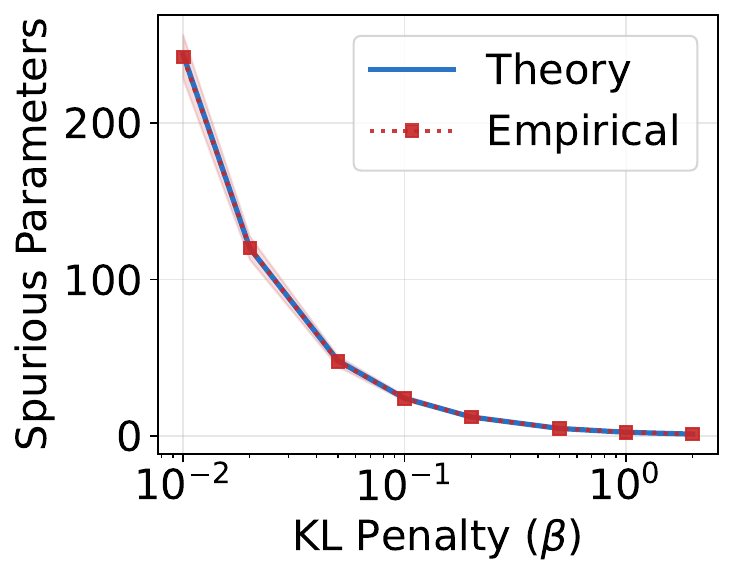}
  \end{minipage}
  \hfill
  % --- Right Column (Parts b and c) ---
  \begin{minipage}[b]{0.48\linewidth}
    \centering
    \textbf{(b)} \\ \vspace{1mm}
    \includegraphics[width=\linewidth]{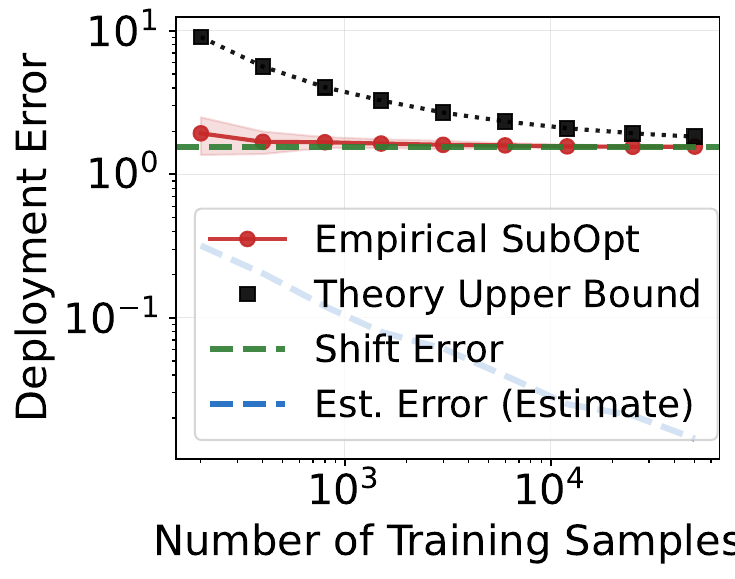}
    \vspace{2mm} 
    \includegraphics[width=\linewidth]{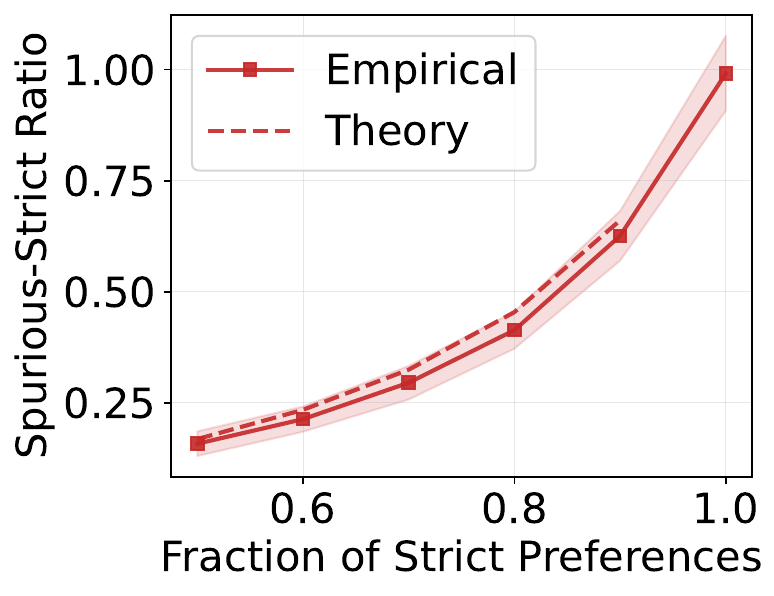}
  \end{minipage}

  \caption{
    Linear theory validation. \textbf{(a)} Learned spurious parameters norm against theoretical prediction (Theorem~\ref{thm:mechanism}) (top); second-order corrections restore agreement when local regime is violated (bottom). \textbf{(b)} Deployment suboptimality decomposition: estimation error decays as $n \to \infty$, while shift error persists, demonstrating irreducibility. As predicted by Theorem~\ref{thm:deployment-bound}, empirical deployment error $\text{SubOpt}_Q$ remains bounded as the number of training samples increases (top); tie reduction ratio satisfies the theoretical formula (Corollary~\ref{cor:quantitative}) across the fraction of strict preferences $\alpha$ and the spurious variance ratio $\sigma^2/\lambda_0 = 5$ (bottom).
  }
  \label{fig:linear-validation}
\end{figure}
These results confirm the population equilibrium characterization (Section~\ref{sec:mechanism}), the irreducibility decomposition (Section~\ref{sec:consequence}), and the tie training guarantees (Section~\ref{sec:mitigation}).
% ========================================
% neural networks: qualitative persistence
% ========================================
\subsection{Neural Networks: Qualitative Persistence} \label{sub-sec:exp-nn}
Neural networks hide the causal--spurious decomposition inside nonlinear representations, violating linear assumptions. We test whether the same mechanisms persist qualitatively.

\textbf{Setup.} We sample latent causal and spurious features, apply a random nonlinear mixing $\phi = g([\phi_c;\phi_s])$, and train an MLP scorer with DPO. The causal--spurious decomposition is hidden from the model. Details are in Appendix~~\ref{appendix:experiments-neural-nets}.

\textbf{Proxy metrics.}
Since $\|\theta_s\|$ is not observable, we measure spurious reliance using (i) the \emph{spurious gap},
$
\mathrm{Acc}_{\mathrm{aligned}}
-
\mathrm{Acc}_{\mathrm{misaligned}},
$
where the two terms denote pairwise preference accuracy when spurious features align with or oppose true utility, respectively, and (ii) \emph{adversarial accuracy}, the pairwise preference accuracy under reversed spurious correlation.

\textbf{Results.} Strict training learns spurious reliance, yielding spurious gap $\approx 0.7$ and adversarial accuracy $\approx 0.25$. Tie training ($\alpha=0.75$) reduces the spurious gap to $\approx 0.45$ and improves adversarial accuracy to  $\approx 0.7$, while preserving in-distribution accuracy. Figure~\ref{fig:nn-validation} shows that spurious gap decreases monotonically with tie mixing fraction $\alpha$, and tie training breaks the adversarial accuracy plateau observed under strict training as sample size increases. Consistent with Theorems~\ref{thm:deployment-bound} and~\ref{thm:tie-training}, tie training has limited visible effect at small $n$, where estimation error dominates, but its advantage emerges as estimation error decays and shift error remains. Although the theory does not apply exactly, the qualitative behavior matches the linear predictions.
\begin{figure}[t]
  \centering
  \includegraphics[width=0.49\linewidth]{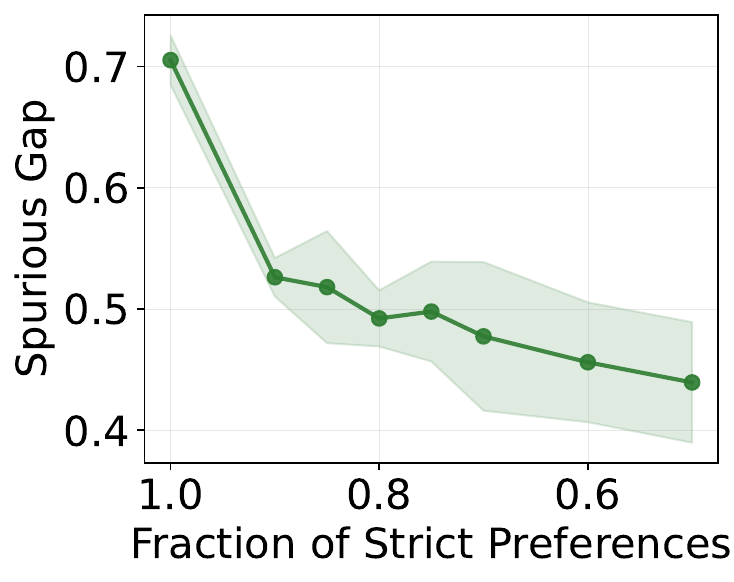}
  \hfill
  \includegraphics[width=0.49\linewidth]{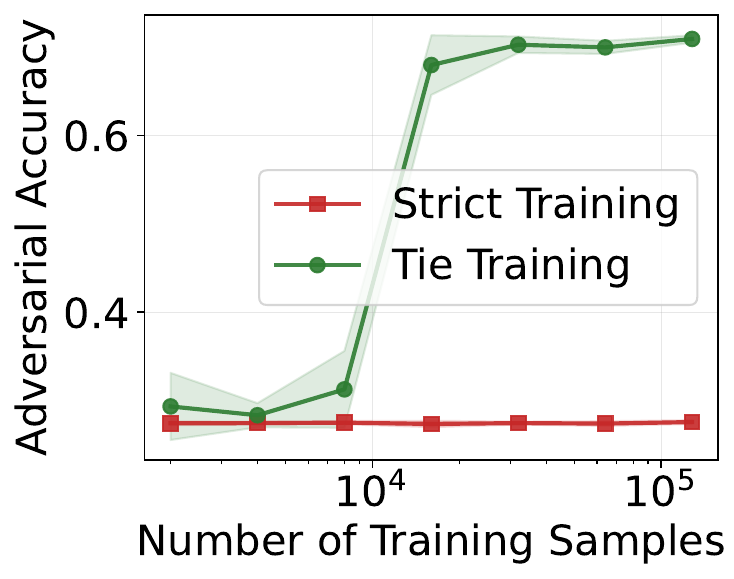}
  \caption{
  Neural network validation. \textbf{Left:} Spurious gap (accuracy on 
  aligned minus misaligned spurious conditions) decreases monotonically 
  with tie mixing fraction $\alpha$. \textbf{Right:} Strict training 
  exhibits a persistent adversarial accuracy plateau despite increasing 
  data; tie training (with $\alpha = 0.75$) breaks this plateau,  improving robustness from $\approx 0.25$ to $\approx 0.7$.
  }
  \label{fig:nn-validation}
\end{figure}
% =================================
% LLMs: synthetic hotel preferences
% =================================
\subsection{LLMs: Synthetic Hotel Preferences} \label{sub-sec:exp-llm}
\textbf{Dataset.} We construct a controlled hotel preference benchmark. Causal attributes affecting utility include price, distance, and rating. Spurious attributes (not causally affecting utility) include building age, renovation year, chain tier, lobby size, and employee count. These correlate with utility during training but do not affect true quality. Full details are in Appendix~\ref{appendix:LLMs-hotels}.

\textbf{Tie construction.} Informative ties satisfy three properties: (i) near-equal utility ($|u_A - u_B| < \tau$), (ii) strong spurious contrast (maximal $|\Delta\phi_s|$), and (iii) random labels. As an ablation, in Appendix~\ref{appendix:LLMs-hotels}, we also evaluate non-informative ties that use monotonic spurious assignment correlated with utility, providing weak regularization signal.

\textbf{Results.} Standard DPO achieves $92\%$ in-distribution accuracy but degrades to $74\%$ (suppressed correlation) and $64\%$ (adversarial correlation) under spurious shift. Informative tie training maintains $92\%$ in-distribution while improving to $83\%$ (suppressed) and $87\%$ (adversarial). Table~\ref{tab:llm-results} reports the results. 
\begin{table}[t]
\caption{
  Informative tie training (Tie-I) improves robustness under spurious shift without sacrificing in-distribution accuracy. 
  }
  \centering
  \begin{tabular}{lccc}
  \toprule
  Method  & In-Distr.~($P$) & Sup.~($Q_{\text{sup}}$) & Adv.~($Q_{\text{adv}}$) \\
  \midrule
  Strict & 92.25\% & 74.00\% & 64.20\% \\
  Tie-I & 92.40\% & 82.85\% & 86.70\% \\
  \bottomrule
  \end{tabular}
  \label{tab:llm-results}
\end{table}
% ===================
% section: discussion
% ===================
\section{Discussion} 
\label{sec:discussion}
\textbf{On our assumptions.}~Our closed-form results on spurious reliance (Section~\ref{sec:mechanism}), irreducible deployment error (Section~\ref{sec:consequence}), and tie training (Section~\ref{sec:mitigation}) hold in the local regime (Assumption~\ref{assumption:local}) for log-linear policies. Within this regime, the analysis identifies mean spurious bias and causal--spurious leakage as the mechanisms behind spurious learning, links them to irreducible deployment error, and motivates tie training as a targeted intervention.

Linear analysis is standard in the shortcut- and spurious-learning literature for isolating shift-induced failure modes. For example,~\cite{nagarajan2021understanding} use linear models to characterize spurious-feature failure under distribution shift in classification. Here, we study the corresponding question for pairwise preference optimization. To make the regime boundary explicit, we report the diagnostic $|\beta \tilde\theta^\top\Delta\phi|$ in our log-linear experiments (Appendix~\ref{appendix:experiments-linear}), confirming the parameter ranges where the local approximation holds. 

Beyond these assumptions, the same qualitative pattern appears in neural network and LLM experiments outside the local regime: spurious reliance emerges under correlated training data, additional in-distribution data fails to remove the resulting deployment error, and tie training reduces it. Thus, the theory remains useful as a diagnostic guide for designing the intervention.

Finally, the finite-sample deployment result also invokes boundedness and a transfer condition (Section~\ref{sub-sec:main-deployment-bound}). These assumptions control finite-sample behavior under shift but do not change the qualitative conclusion. The theory does not require the Bradley--Terry~\cite{bradley1952rank} annotator model; it depends only on feature-difference statistics.

\textbf{On the controlled experimental design.} 
~Our experiments are deliberately controlled: they separate causal and spurious attributes, vary spurious statistics across training and deployment, and keep the underlying preference structure fixed. This control is necessary to directly test the paper's central claims and is generally unavailable in real preference data, where causal and spurious factors are entangled. The method itself does not require explicit identification of spurious factors. The construction is used only to make the mechanism empirically testable. The spurious correlations we study are also reported in deployed alignment systems~\cite{casper2023open}, so the isolated mechanisms remain practically relevant despite the synthetic benchmark. Broader evaluation on natural preference data remains an important next step.

One might expect the controlled construction to require idealized ties. However, the ties in our LLM and nonlinear experiments are already approximate: near-ties can preserve a weak causal margin while injecting larger variation along spurious directions, showing that exact ties are not essential for the mechanism. The local log-linear framework also extends to this regime by replacing ideal tie statistics with approximate ones. Appendix~\ref{appendix:experiments-linear} gives this characterization and a log-linear near-tie sweep.

\textbf{Tie construction and the role of~$\alpha$.}~A limitation of the method is that the theory does not prescribe how to construct ties. However, tie training is not restricted to hand-crafted ties. The mechanism suggests three practical regimes. (i) \textit{Known spurious attribute:} when the spurious factor is identified, as in our LLM experiments, ties can be constructed by direct perturbation. (ii) \textit{Candidate axes:} when the spurious direction is unknown but plausible candidates can be listed (e.g., verbosity, style, formatting), ties are constructed by content-preserving perturbations across these axes. (iii) \textit{Fully agnostic:} cross-prompt constructions form tie-like pairs that can perturb many spurious directions at once.

This data-level view connects to RRM~\cite{liu2025rrm}, which addresses spurious correlations in reward learning through a causal framework and feature-agnostic augmentation. RRM targets reward-model training with soft labels for neutral pairs, whereas we use symmetric hard-labeled pairs in the unmodified DPO loss as a direct consequence of the mechanism we analyze. We view RRM's augmentation as introducing preference-neutral signal at the data level, complementary to ours.

The parameter $\alpha$ controls the strict--tie mixture: decreasing $\alpha$ increases tie mass, and Corollary~\ref{cor:quantitative} establishes monotone spurious shrinkage as tie mass grows. In practice, a moderate tie budget can induce spurious-direction regularization. We used an unoptimized $\alpha=0.3$ in the LLM experiment, with a full $\alpha$-sweep reported in Appendix~\ref{appendix:LLMs-hotels}. Tie effectiveness also depends on which pairs are added. Suppression scales with the alignment between ties and $\Sigma_{ss}$, so under a fixed budget, selecting informative pairs can yield greater suppression than uniformly increasing tie mass. Appendix~\ref{appendix:experiments-linear} further examines the $\alpha$-dependence of causal components. Two problems remain open: automatic tie construction and scalable allocation of a fixed tie budget toward high-impact directions.

% ===================
% section: conclusion
% ===================
\section{Conclusion}
\label{sec:conclusion}
This work analyzes spurious correlation learning in preference optimization and establishes three main results. First, under standard preference objectives, spurious correlations can be learned at the population level through mean spurious bias and causal--spurious correlation leakage (Theorem~\ref{thm:mechanism}). Second, when spurious statistics shift between training and deployment, such reliance can induce deployment degradation that persists even with unlimited training data drawn from the same distribution (Theorem~\ref{thm:deployment-bound}). Third, tie training, a simple data augmentation strategy using equal-utility preference pairs, can reduce spurious reliance by selectively regularizing spurious directions (Theorem~\ref{thm:tie-training}).

Together, these results suggest that robustness to spurious correlations in preference optimization depends not only on model capacity or data scale, but also on the structure of supervision. Tie training provides one concrete instance of this approach within standard preference objectives. Our analysis focuses on a local regime and assumes access to informative ties, enabling tractable characterization; extending these results beyond this regime and developing scalable tie construction methods remain important directions for future work. Further discussion appears in Appendix~\ref{appendix:limitations}.

% ================
% impact statement
% ================
\newpage
\section*{Impact statement}
This paper presents work whose goal is to advance the field of machine learning. There are many potential societal consequences of our work, none which we feel must be specifically highlighted here.

% ================
% acknowledgements
% ================
\section*{Acknowledgments}
We thank the anonymous reviewers for their helpful comments. We would also like to thank the Future Impact Group for their help in initiating this project. GL would like to thank the support of National Science Foundation (DMS-2533878, DMS-2053746, DMS-2134209, ECCS-2328241, CBET-2347401 and OAC-2311848), and U.S.~Department of Energy (DOE) Office of Science Advanced Scientific Computing Research program DE-SC0023161, the SciDAC LEADS Institute, and DOE–Fusion Energy Science, under grant number: DE-SC0024583.

% ==========
% references
% ==========
\bibliography{references}
\bibliographystyle{icml2026}
% ========
% APPENDIX
% ========
\newpage
\appendix
\onecolumn
% =====================================
% appendix: spurious learning mechanism
% =====================================
\section{Full Proof of Spurious Learning Mechanism (Theorem~\ref{thm:mechanism})} \label{appendix:spurious-learning-proof}
This section provides the full proof of Theorem 4.1 by progressively isolating the mechanisms that drive spurious learning in direct preference optimization for log-linear policies. We first restate the population objective and derive its exact gradient, then linearize the dynamics under the local regime assumption. We next decompose the resulting covariance structure into causal and spurious blocks and solve the corresponding equilibrium in closed form. Finally, we interpret the solution to separate and quantify the mean-bias and correlation-leakage contributions, completing the proof.

Our analysis relies on the local linearization assumption introduced in Section~\ref{sec:preliminaries} (Assumption \ref{assumption:local}), which we restate below for completeness.
\begin{assumption}[Local regime] \label{appendix-assumption:local}
We assume that the scaled DPO margin satisfies
$$
|\beta\,\tilde{\theta}^\top \Delta\phi| \ll 1 .
$$
with high probability under the data distribution.
\end{assumption}
This regime arises in several settings. First, during early training the policy remains close to the reference policy. Second, the margin is small when competing responses receive similar scores under the current policy. Finally, the regime naturally arises in regions of near-indifference. In these settings, the DPO objective admits a first-order linear approximation, enabling analytical characterization of the training dynamics.
% =================================
% population objective and gradient
% =================================
\subsection{Population Objective and Gradient} \label{appendix:population-objective-gradient}
We study the population objective:
$$
L(\theta)
:= \mathbb{E}_{(x,y_w,y_l)\sim\mathcal{D}}[\ell(\theta; x, y_w, y_l)] 
= -\mathbb{E}\Big[\log\sigma\big(\beta\,\tilde{\theta}^\top\Delta\phi\big)\Big],
$$
where $\tilde{\theta}:=\theta-\theta_{\mathrm{ref}}$ and $\Delta \phi = \phi(x,y_w) - \phi(x,y_l)$ is the feature difference vector.

\paragraph{Full DPO pairwise gradient.} It follows from the chain rule and $\frac{d}{dz}[-\log \sigma(z)] = \sigma(z) - 1$. In particular,
$$
\frac{d}{dz} \ell(z) = \frac{d}{dz} \big(-\log \sigma(z)\big) = - \frac{1}{\sigma(z)} \cdot \sigma(z)\big(1 - \sigma(z)\big) = -\big(1 - \sigma(z)\big) = \sigma(z) - 1.
$$
Defining the adaptive weight $w(\Delta_\theta) := 1 - \sigma(\beta \Delta_\theta) \in (0,1)$, where $\Delta_\theta := \tilde{\theta}^\top \Delta\phi$, and applying the chain rule with $z = \beta \tilde{\theta}^\top \Delta\phi$, the gradient is:
$$
\nabla_\theta \ell(\theta;x, y_w, y_l) = (\sigma(\beta\Delta_\theta) - 1) \cdot \beta\Delta\phi = -\beta \cdot w(\Delta_\theta) \cdot \Delta\phi
$$
\paragraph{Linearized population gradient.} 
Taking the expectation of the per-example gradient,
$$
\nabla_\theta L(\theta)
=
\mathbb{E}[\nabla_\theta \ell]
=
-\beta\,\mathbb{E}[w(\Delta_\theta)\Delta\phi].
$$
We decompose the expectation of this product via the vector--scalar covariance identity,
$$
\mathbb{E}[w(\Delta_\theta)\Delta\phi]
=
\mathbb{E}[w(\Delta_\theta)]\mathbb{E}[\Delta \phi]
+
\mathrm{Cov}\big(\Delta\phi,w(\Delta_\theta)\big),
$$
and linearize each term in turn (Steps 1--2) before recombining (Step 3).

Under the local regime (Assumption~\ref{appendix-assumption:local}), we derive the linearized population gradient next. To this end, let us first define the feature second moment matrix:
$$
\Sigma := \mathbb{E}[\Delta\phi \Delta\phi^\top] \in \mathbb{R}^{d \times d}
$$
This is related to the covariance matrix $C = \text{Cov}(\Delta\phi)$ by $\Sigma = C + \mu\mu^\top$ where $\mu = \mathbb{E}[\Delta\phi]$. We use second moments (rather than centered covariances) because they lead to simpler final formulas.

\paragraph{Step 1: We linearize the expected weight:}
$$
\mathbb{E}[w(\Delta_\theta)] \approx \mathbb{E}\left[\frac{1}{2} - \frac{\beta}{4}\tilde{\theta}^\top\Delta\phi\right] = \frac{1}{2} - \frac{\beta}{4}\tilde{\theta}^\top\mu
$$
This approximation follows from the first-order Taylor expansion of the sigmoid around zero: $\sigma(z) \approx \tfrac{1}{2} + \tfrac{z}{4}$, which is valid under Assumption~\ref{appendix-assumption:local}.

\paragraph{Step 2: We compute the covariance term as follows:}
$$
\text{Cov}(\Delta\phi, w(\Delta_\theta)) = \text{Cov}\left(\Delta\phi, \frac{1}{2} - \frac{\beta}{4}\tilde{\theta}^\top\Delta\phi\right) = -\frac{\beta}{4} \text{Cov}(\Delta\phi, \tilde{\theta}^\top\Delta\phi)
$$
Using the identity for covariance between a vector and a linear form yields:
\begin{align*}
\text{Cov}(\Delta\phi, \tilde{\theta}^\top\Delta\phi) &= \mathbb{E}[\Delta\phi \cdot (\tilde{\theta}^\top\Delta\phi)] - \mathbb{E}[\Delta\phi]\mathbb{E}[\tilde{\theta}^\top\Delta\phi] \\
&= \mathbb{E}[\Delta\phi \Delta\phi^\top]\tilde{\theta} - \mu(\mu^\top\tilde{\theta}) = \Sigma\tilde{\theta} - \mu\mu^\top\tilde{\theta}
\end{align*}
Therefore:
$$
\text{Cov}(\Delta\phi, w(\Delta_\theta)) = -\frac{\beta}{4}(\Sigma\tilde{\theta} - \mu\mu^\top\tilde{\theta})
$$

\paragraph{Step 3: We combine the terms:}
\begin{align*}
\nabla_\theta L(\theta) &= -\beta \cdot \mathbb{E}[w(\Delta_\theta)] \cdot \mu - \beta \cdot \text{Cov}(\Delta\phi, w(\Delta_\theta)) \\
&= -\beta\left(\frac{1}{2} - \frac{\beta}{4}\tilde{\theta}^\top\mu\right)\mu - \beta\left(-\frac{\beta}{4}(\Sigma\tilde{\theta} - \mu\mu^\top\tilde{\theta})\right) \\
&= -\frac{\beta}{2}\mu + \frac{\beta^2}{4}(\tilde{\theta}^\top\mu)\mu + \frac{\beta^2}{4}\Sigma\tilde{\theta} - \frac{\beta^2}{4}\mu\mu^\top\tilde{\theta} \\
&= -\frac{\beta}{2}\mu + \frac{\beta^2}{4}(\mu^\top\tilde{\theta})\mu + \frac{\beta^2}{4}\Sigma\tilde{\theta} - \frac{\beta^2}{4}\mu(\mu^\top\tilde{\theta})
\end{align*}
The terms $\frac{\beta^2}{4}(\mu^\top\tilde{\theta})\mu$ and $-\frac{\beta^2}{4}\mu(\mu^\top\tilde{\theta})$ cancel exactly (this cancellation is why using $\Sigma$ (rather than $C$) gives a clean final formula). Thus, the linearized population gradient takes the form
\begin{align} \label{appendix-eq:linearized-gradient}
\nabla_\theta L(\theta) \approx -\frac{\beta}{2}\mu + \frac{\beta^2}{4}\Sigma\tilde{\theta},    
\end{align}
accurate up to third-order terms in the scaled margin.
% =======================
% solving the equilibrium
% =======================
\subsection{Solving the Equilibrium} \label{appendix:solving-equilibrium} 
We now derive the explicit equilibrium expressions for the causal and spurious components of the linearized population optimum stated in Section~\ref{sec:mechanism}.

\begin{assumption}[Spurious-Causal Feature Decomposition] \label{assumption:spurious_causal} We decompose the feature map into causal and spurious components, i.e., 
$$
\phi(x,y) = (\phi_c(x,y),\phi_s(x,y)),
$$
where $\phi_c(x,y)\in \mathbb{R}^{d_c}$ contains \textit{causal features} and $\phi_s(x,y) \in \mathbb{R}^{d_s}$ contains \textit{spurious features}. This induces the same decomposition for parameters and pairwise differences: $\tilde{\theta} = (\tilde{\theta}_c, \tilde{\theta}_s)$, $\Delta \phi =(\Delta \phi_c, \Delta \phi_s)$.
\end{assumption}

\paragraph{Setup.} Using the linearized population gradient~\eqref{appendix-eq:linearized-gradient}, the equilibrium condition $\nabla_\theta L(\theta^\star)=\mathbf{0}$ yields
$$
\Sigma\tilde{\theta}^\star = \frac{2}{\beta}\mu,
$$
where $\tilde{\theta}^\star=\theta^\star-\theta_{\mathrm{ref}}$. Writing $\tilde{\theta}^\star=[\tilde{\theta}_c^{\star\top},\tilde{\theta}_s^{\star\top}]^\top$,
$\mu=[\mu_c^\top,\mu_s^\top]^\top$, and
$$
\Sigma=
\begin{bmatrix}
\Sigma_{cc} & \Sigma_{cs}\\
\Sigma_{sc} & \Sigma_{ss}
\end{bmatrix},
$$
the equilibrium equations decompose as
\begin{align}
\Sigma_{cc}\tilde{\theta}_c^\star + \Sigma_{cs}\tilde{\theta}_s^\star &= \frac{2}{\beta}\mu_c, \label{appendix-eq:block1}\\
\Sigma_{sc}\tilde{\theta}_c^\star + \Sigma_{ss}\tilde{\theta}_s^\star &= \frac{2}{\beta}\mu_s. \label{appendix-eq:block2}
\end{align}

\paragraph{Solving for the spurious component.} Assume $\Sigma_{ss}$ is invertible. From~\eqref{appendix-eq:block2},
$$
\tilde{\theta}_s^\star
=
\Sigma_{ss}^{-1}\!\left(\frac{2}{\beta}\mu_s - \Sigma_{sc}\tilde{\theta}_c^\star\right).
$$
Substituting into~\eqref{appendix-eq:block1} gives
$$
\Big(\Sigma_{cc}-\Sigma_{cs}\Sigma_{ss}^{-1}\Sigma_{sc}\Big)\tilde{\theta}_c^\star
=
\frac{2}{\beta}\Big(\mu_c-\Sigma_{cs}\Sigma_{ss}^{-1}\mu_s\Big).
$$
\paragraph{Schur complement.} Define the Schur complement:
$$
S_c := \Sigma_{cc}-\Sigma_{cs}\Sigma_{ss}^{-1}\Sigma_{sc}.
$$
Assuming $S_c$ is invertible, the causal component satisfies
$$
\tilde{\theta}_c^\star
=
\frac{2}{\beta}S_c^{-1}\Big(\mu_c-\Sigma_{cs}\Sigma_{ss}^{-1}\mu_s\Big).
$$
\paragraph{Explicit spurious equilibrium.}
Substituting the expression for $\tilde{\theta}_c^\star$ back yields
\begin{align*}
\tilde{\theta}_s^\star
&=
\Sigma_{ss}^{-1}\!\left(\frac{2}{\beta}\mu_s-\Sigma_{sc}\tilde{\theta}_c^\star\right)\\
&=
\frac{2}{\beta}\Sigma_{ss}^{-1}\!\left[
\mu_s-\Sigma_{sc}S_c^{-1}\mu_c
+\Sigma_{sc}S_c^{-1}\Sigma_{cs}\Sigma_{ss}^{-1}\mu_s
\right]\\
&=
\frac{2}{\beta}\Sigma_{ss}^{-1}\!\left[
\big(I+\Sigma_{sc}S_c^{-1}\Sigma_{cs}\Sigma_{ss}^{-1}\big)\mu_s
-\Sigma_{sc}S_c^{-1}\mu_c
\right].
\end{align*}
This completes the derivation of the explicit spurious equilibrium reported in Section~\ref{sec:mechanism}.
% ==========================================
% interpretation of the equilibrium solution
% ==========================================
\subsection{Interpretation of the Equilibrium Solution} \label{appendix:interpretation-equilibrium}
We now interpret the equilibrium solution derived above and state our main result characterizing spurious learning at the population level. The following theorem formalizes when and why the population optimizer assigns nonzero weight to spurious features under the linearized dynamics.
\begin{theorem}[Spurious learning from mean bias and correlation leakage]
\label{appendix-thm:mechanism}
Under the linearized population dynamics (Equation~\eqref{appendix-eq:linearized-gradient}), assume $\Sigma=\mathbb{E}[\Delta\phi\Delta\phi^\top]\succ 0$ (equivalently, $\Sigma_{ss}\succ 0$ and $S_c:=\Sigma_{cc}-\Sigma_{cs}\Sigma_{ss}^{-1}\Sigma_{sc}\succ 0$).
If either $\mu_s\neq 0$ (mean spurious bias) or $\Sigma_{sc}\neq 0$ (causal--spurious leakage), then generically\footnote{Outside a Lebesgue measure-zero 
set of problem instances.} the population optimum satisfies $\tilde{\theta}_s^\star\neq 0$.
\end{theorem}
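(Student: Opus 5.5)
The plan is to use the closed-form equilibrium already derived in Appendix~\ref{appendix:solving-equilibrium} and to convert the genericity claim into a statement about the zero set of a nontrivial analytic (in fact rational) function of the problem data $(\mu,\Sigma)$. First, $\Sigma\succ0$ gives $\Sigma_{ss}\succ0$ (a principal submatrix) and $S_c\succ0$ (the Schur complement of a positive definite matrix). So the linearized stationarity condition $\Sigma\tilde\theta^\star=\tfrac{2}{\beta}\mu$ has the unique solution $\tilde\theta^\star=\tfrac{2}{\beta}\Sigma^{-1}\mu$, and its spurious block is exactly Equation~\eqref{eq:spurious-weight}. Writing $\Sigma^{-1}$ blockwise, $\tilde\theta_s^\star=\tfrac{2}{\beta}\big(-\Sigma_{ss}^{-1}\Sigma_{sc}S_c^{-1}\mu_c + (\Sigma_{ss}^{-1}+\Sigma_{ss}^{-1}\Sigma_{sc}S_c^{-1}\Sigma_{cs}\Sigma_{ss}^{-1})\mu_s\big)$. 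Equivalently, $\tilde\theta_s^\star=\tfrac{2}{\beta}\big[(\Sigma^{-1})_{sc}\mu_c+(\Sigma^{-1})_{ss}\mu_s\big]$.

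Next I show that the vanishing locus $\{\tilde\theta_s^\star=0\}$ is a proper algebraic subset within each case of the hypothesis. It is cleanest to note that $\tilde\theta_s^\star=0$ iff $\mu = \Sigma\tilde\theta^\star$ with $\tilde\theta^\star=[v;0]$, i.e.\ iff $\mu_c=\Sigma_{cc}v$ and $\mu_s=\Sigma_{sc}v$ for some $v$. Eliminating $v$, this is equivalent to $\mu_s=\Sigma_{sc}\Sigma_{cc}^{-1}\mu_c$, a polynomial identity (after clearing $\det\Sigma_{cc}$) in the entries of $(\mu,\Sigma)$.

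In Case (i), $\mu_s\neq0$ and $\Sigma_{sc}=0$, so the identity forces $\mu_s=0$, a contradiction. Hence $\tilde\theta_s^\star=\tfrac{2}{\beta}\Sigma_{ss}^{-1}\mu_s\neq0$ always, not merely generically.

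In Case (ii), $\Sigma_{sc}\neq0$ and $\mu$ is arbitrary. The set $\{\mu_s=\Sigma_{sc}\Sigma_{cc}^{-1}\mu_c\}$ is, for fixed $\Sigma$, an affine subspace of $\mu$-space of codimension $d_s\ge1$. Over the open set of admissible $(\mu,\Sigma)$ it is the zero set of a polynomial map that is not identically zero, since it fails at $\mu_c=0,\mu_s\neq0$. It therefore has Lebesgue measure zero, which is the footnote's notion of genericity.

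I would also remark that the non-generic case is exactly the \emph{cancellation} of mean bias against leakage in Equation~\eqref{eq:spurious-weight}. In particular, with $\mu_s=0$ and $\Sigma_{sc}\neq0$, the condition reduces to $\Sigma_{sc}\Sigma_{cc}^{-1}\mu_c=0$, which fails generically in $\mu_c$ because $\Sigma_{sc}\Sigma_{cc}^{-1}\neq0$.

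The main obstacle is making ``generically'' precise with a consistent parameter space. The moments $(\mu,\Sigma)$ are not free, since $\Sigma-\mu\mu^\top\succeq0$ must hold. I would handle this by noting that this constraint set has nonempty interior in $\mathbb{R}^d\times\mathbb{S}^d$, and that a nonzero polynomial vanishes only on a measure-zero subset of any open set. Nontriviality is then checked at a single interior point, for example one with $\mu_c=0$ and small $\mu_s\neq0$.
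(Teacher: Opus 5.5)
Your argument is correct, but it gets there by a different route than the paper.

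The paper stays with the $S_c$-based closed form. It shows that the mean-bias coefficient $A=\frac{2}{\beta}\Sigma_{ss}^{-1}(I+\Sigma_{sc}S_c^{-1}\Sigma_{cs}\Sigma_{ss}^{-1})$ is invertible, by similarity to the identity plus a PSD matrix. It then splits on $\mu_s$:
\begin{itemize}
\item for $\mu_s\neq0$, vanishing requires the exact cancellation $A\mu_s=\frac{2}{\beta}\Sigma_{ss}^{-1}\Sigma_{sc}S_c^{-1}\mu_c$, which it calls a proper affine constraint;
\item for $\mu_s=0$, vanishing requires $\mu_c\in\ker(\Sigma_{sc}S_c^{-1})$.
\end{itemize}

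You instead characterize the vanishing locus directly through the ansatz $\tilde\theta^\star=[v;0]$. The factor $2/\beta$ you dropped is absorbed into $v$, and $\Sigma_{cc}\succ0$ follows from $\Sigma\succ0$. This gives the single criterion $\mu_s=\Sigma_{sc}\Sigma_{cc}^{-1}\mu_c$. Equivalently, complementing on the causal block, $\tilde\theta_s^\star=\frac{2}{\beta}S_s^{-1}(\mu_s-\Sigma_{sc}\Sigma_{cc}^{-1}\mu_c)$ with $S_s:=\Sigma_{ss}-\Sigma_{sc}\Sigma_{cc}^{-1}\Sigma_{cs}\succ0$. You also split on $\Sigma_{sc}$ rather than on $\mu_s$.

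Your route buys several things:
\begin{itemize}
\item An explicit exceptional set: the spurious mean coincides exactly with its linear prediction from the causal mean.
\item An immediate proof that the pure mean-bias case $\Sigma_{sc}=0$ holds deterministically, not merely generically.
\item A clean reading of the disjunctive hypothesis: it only excludes $\mu_s=0,\ \Sigma_{sc}=0$, where $\tilde\theta_s^\star=0$ identically.
\item A rigorous version of what the paper leaves informal. This is the polynomial zero-set argument on the admissible moment set, together with relative genericity in $\mu_c$ on the measure-zero slice $\mu_s=0$. That last step matches the paper's Case 2.
\end{itemize}

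The paper's route buys something interpretive instead. It keeps the mean-bias and correlation-leakage terms separately visible, which is the point the theorem is meant to convey.
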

\begin{proof}
The linearized population gradient takes the form (see Appendix~\ref{appendix:population-objective-gradient} for details)
$$
g(\tilde\theta)
=
-\frac{\beta}{2}\mu
+
\frac{\beta^2}{4}\Sigma\tilde\theta,
$$
so the equilibrium satisfies $g(\tilde\theta^\star)=0$, i.e.
\begin{equation*}
\Sigma\tilde\theta^\star = \frac{2}{\beta}\mu.
\end{equation*}
Using the spurious-causal feature decomposition (Assumption~\ref{assumption:spurious_causal}), we can write the equilibrium condition in block form as
\begin{align}
\Sigma_{cc}\tilde\theta_c^\star + \Sigma_{cs}\tilde\theta_s^\star &= \frac{2}{\beta}\mu_c,
\label{appendix-eq:block-sys1}\\
\Sigma_{sc}\tilde\theta_c^\star + \Sigma_{ss}\tilde\theta_s^\star &= \frac{2}{\beta}\mu_s.
\label{appendix-eq:block-sys2}
\end{align}
Assume $\Sigma_{ss}\succ 0$ and $S_c\succ 0$ so the solution is unique. Solving by Schur complement (see Appendix~\ref{appendix:solving-equilibrium} for details) yields,
\begin{equation*}
\tilde\theta_c^\star
=
\frac{2}{\beta}S_c^{-1}\big(\mu_c-\Sigma_{cs}\Sigma_{ss}^{-1}\mu_s\big),
\end{equation*}
Substituting into \eqref{appendix-eq:block-sys2} yields
\begin{align}
\tilde\theta_s^\star
&=
\frac{2}{\beta}\Sigma_{ss}^{-1}\Big[
\mu_s-\Sigma_{sc}S_c^{-1}\mu_c+\Sigma_{sc}S_c^{-1}\Sigma_{cs}\Sigma_{ss}^{-1}\mu_s
\Big]
\nonumber\\
&=
\underbrace{\frac{2}{\beta}\Sigma_{ss}^{-1}
\Big(I+\Sigma_{sc}S_c^{-1}\Sigma_{cs}\Sigma_{ss}^{-1}\Big)\mu_s}_{\text{(i) mean-bias term}}
\;\;+\;\;
\underbrace{\left(-\frac{2}{\beta}\Sigma_{ss}^{-1}\Sigma_{sc}S_c^{-1}\mu_c\right)}_{\text{(ii) correlation-leakage term}}.
\label{appendix-eq:two-mechanisms-decomp}
\end{align}
This proves the claimed decomposition into a component driven by $\mu_s$ and a component driven by $\Sigma_{sc}$. It remains to show: if either $\mu_s\neq 0$ or $\Sigma_{sc}\neq 0$, then generically $\tilde\theta_s^\star\neq 0$.

\textbf{Case 1: $\mu_s\neq 0$.}
The mean-bias term in \eqref{appendix-eq:two-mechanisms-decomp} equals $A\mu_s$ with
$
A:=\frac{2}{\beta}\Sigma_{ss}^{-1}
\big(I+\Sigma_{sc}S_c^{-1}\Sigma_{cs}\Sigma_{ss}^{-1}\big).
$
The matrix $A$ is invertible: indeed,
$I+\Sigma_{sc}S_c^{-1}\Sigma_{cs}\Sigma_{ss}^{-1}$ is similar to
$
I+\Sigma_{ss}^{-1/2}\Sigma_{sc}S_c^{-1}\Sigma_{cs}\Sigma_{ss}^{-1/2},
$
whose second term is positive semidefinite. Hence $A\mu_s\neq0$ when $\mu_s\neq0$. Since $\tilde\theta_s^\star$ also contains the leakage term, $\tilde\theta_s^\star=0$ can occur only through exact cancellation,
$
A\mu_s
=
\frac{2}{\beta}\Sigma_{ss}^{-1}\Sigma_{sc}S_c^{-1}\mu_c.
$
This equality defines a proper affine constraint on the problem instance. Thus, generically $\tilde\theta_s^\star\neq0$.

\textbf{Case 2: $\mu_s=0$ but $\Sigma_{sc}\neq 0$.}
Then $\tilde\theta_c^\star=\frac{2}{\beta}S_c^{-1}\mu_c$. Substituting into \eqref{appendix-eq:block-sys2} gives
$$
\tilde\theta_s^\star = -\Sigma_{ss}^{-1}\Sigma_{sc}\tilde\theta_c^\star
= -\frac{2}{\beta}\Sigma_{ss}^{-1}\Sigma_{sc}S_c^{-1}\mu_c.
$$
If $\Sigma_{sc}\neq 0$, then for generic $\mu_c$ this expression is nonzero (failing only when $\mu_c$ lies in the nullspace of $\Sigma_{sc}S_c^{-1}$, again a measure-zero condition). Thus, generically $\tilde\theta_s^\star\neq 0$.

Combining the two cases establishes the theorem.
\end{proof}
\newpage

% ==========================
% appendix: deployment error
% ==========================
\section{Deployment Error and Margin-Based Analysis} \label{appendix:deployment_error}
This appendix analyzes how spurious correlations learned during training induce vulnerabilities at deployment. Let $P$ denote the training preference distribution over triples $(x,y_w,y_l)$, and let $Q$ denote the deployment preference distribution over the same space. We write
$$
\theta_{\mathrm{train}} \in \arg\max_{\theta \in \Theta_B} \tilde{J}^{(P)}(\theta),
\qquad
\tilde{\theta}_{\mathrm{train}} := \theta_{\mathrm{train}} - \theta_{\mathrm{ref}},
$$
for the population optimizer of the log-linear DPO surrogate under $P$, i.e., $\tilde{J}^{(P)}(\theta) := \mathbb{E}_P[\log\sigma(\beta\,\tilde{\theta}^\top\Delta\phi)]$. The learned parameters are fixed after training; deployment changes the distribution from $P$ to $Q$, not the parameters.

Our analysis operates directly on the log-linear DPO surrogate and the induced margins. In the local regime, the surrogate admits a first-order expansion in terms of the expected preference margin, which provides a transparent characterization of deployment degradation under distribution shift.
% ===========
% assumptions
% ===========
\section{Assumptions for Deployment Analysis} \label{appendix:assumptions-deployment}
We start by providing detailed discussion of the assumptions used in the deployment suboptimality analysis.

\begin{assumption}[Bounded feature differences] \label{appendix-assumption:bounded-features}
We assume that feature differences are uniformly bounded:
$$
\|\Delta\phi(x,y_w,y_l)\|_2 \le 1
\quad \text{for all preference pairs } (x,y_w,y_l).
$$
\end{assumption}
This is a normalization convention rather than a substantive restriction. Specifically, if the feature map $\phi(x,y)$ satisfies
$$
R := \sup_{x,y_w,y_l} \|\phi(x,y_w) - \phi(x,y_l)\|_2 < \infty,
$$
we rescale $\tilde{\phi} = \phi/R$ and reparametrize $\tilde{\theta} = R\theta$, preserving all margins: $\tilde{\theta}^\top\tilde{\phi} = \theta^\top\phi$. In practice, the assumption holds whenever features are normalized (e.g., embeddings projected onto the unit sphere), bounded by construction (e.g., indicator features, clipped activations), or supported on a compact set.

\begin{assumption}[Bounded parameters] \label{appendix-assumptions:bounded-parameters}
The deviation from the reference parameters is bounded:
$$
\|\tilde{\theta}\|_2 \le B
\quad \text{for all } \theta \in \Theta_B,
\qquad
\tilde{\theta} := \theta - \theta_{\mathrm{ref}}.
$$  
\end{assumption}
This assumption restricts optimization to a compact parameter set and ensures that preference score differences $\beta\,\tilde{\theta}^\top\Delta\phi$ remain bounded. The bound $B$ may be enforced explicitly (e.g., via constrained optimization) or implicitly (e.g., via early stopping).

\paragraph{Joint implication.}
Under Assumptions~\ref{appendix-assumption:bounded-features} and \ref{appendix-assumptions:bounded-parameters}, preference score differences satisfy
$$
|\beta\,\tilde{\theta}^\top\Delta\phi|
\le
\beta\,\|\tilde{\theta}\|_2\,\|\Delta\phi\|_2
\le
\beta B.
$$
Thus the local regime (Assumption~\ref{appendix-assumption:local-2}) holds uniformly over $\Theta_B$ whenever $\beta B \le \epsilon$.
\begin{assumption}[Local regime]  \label{appendix-assumption:local-2}
Preference score differences remain small:
$$
|\beta\,\tilde{\theta}^\top \Delta\phi|
\le \epsilon \ll 1
\quad \text{with high probability under } P \text{ and } Q.
$$    
\end{assumption}
\paragraph{Role in the analysis.}
The local regime assumption is used in two places.
\begin{enumerate}
    \item It justifies the first-order expansion of the surrogate in terms of the margin (Proposition~\ref{appendix-prop:margin-approx}).
    
    \item It ensures that the pairwise Fisher curvature remains close to a constant multiple of the unweighted second moment, enabling the geometry transfer (Assumption~\ref{appendix-assumption:geometry-transfer}).
\end{enumerate}
The local regime is also naturally satisfied for tie and near-tie pairs, since these pairs have small margins by construction.

\paragraph{Quantitative scale.} 
The linearization $\sigma(z) \approx \frac{1}{2} + \frac{z}{4}$ has absolute error below $0.02$ for $|z| \le 1$ and below $0.12$ for $|z| \le 2$. The approximation is accurate when $|\beta\,\tilde{\theta}^\top\Delta\phi| \lesssim 1$.

\paragraph{Beyond the local regime.} 
When $\beta B \gg 1$, the weight function $w(\Delta_\theta) = \sigma(z)(1-\sigma(z))$ varies strongly with the margin and saturates toward zero for large $|z|$. In that case, second-order expansions around $\theta_{\mathrm{ref}}$ no longer provide uniform control over the objective, and first-order margin approximations remain qualitatively informative but the quantitative bounds derived below require modification.

\paragraph{Deployment curvature.}
Let $\|v\|_A^2 := v^\top A v$ for any positive semi-definite matrix $A$. Define the (pairwise) Fisher information under deployment $Q$, evaluated at $\theta_{\mathrm{train}}$:
$$
\bar{\Sigma}^{(Q)}
:=
\mathbb{E}_{(x,y_w,y_l)\sim Q}
\!\left[
\beta^2\,w(\Delta_\theta)\,\Delta\phi\Delta\phi^\top
\right],
\qquad
w(\Delta_\theta):=\sigma(z)(1-\sigma(z)),\;\; z:=\beta\,\tilde{\theta}_{\mathrm{train}}^\top\Delta\phi.
$$
Define analogously $\bar{\Sigma}^{(P)}$ by replacing $Q$ with $P$. Let $H_P := \bar{\Sigma}^{(P)} + \lambda I$ for regularization parameter $\lambda >0$.

Informally, in the local regime, $w(z) = \frac{1}{4} + O(z^2)$, so
$$
\bar{\Sigma}^{(Q)}
=
\frac{\beta^2}{4}\,\Sigma^{(Q)}
+
O\!\left(\beta^2\,\mathbb{E}_Q[z^2\,\Delta\phi\,\Delta\phi^\top]\right),
\qquad
\Sigma^{(Q)} := \mathbb{E}_Q[\Delta\phi\,\Delta\phi^\top].
$$
Thus, locally, the deployment Fisher is controlled by the same second-moment structure that appears in the population gradient analysis (Appendix~\ref{appendix:population-objective-gradient}).

\begin{assumption}[Geometry transfer] \label{appendix-assumption:geometry-transfer}
There exists $\kappa_\Pi \ge 1$ such that for all $v\in\mathbb{R}^d$,
$$
\|v\|_{\frac{\beta^2}{4}\Sigma^{(Q)}}^2 \le \kappa_\Pi \|v\|_{H_P}^2.
$$
\end{assumption}
This assumption controls how much the local curvature of the loss can change between training and deployment. Without it, the estimation error measured under $P$ would not transfer to $Q$: the Hessian under $Q$ could be arbitrarily larger, making the quadratic bound on the estimation term vacuous.

\paragraph{Sufficient conditions for geometry transfer.} Assumption~\ref{appendix-assumption:geometry-transfer} holds with an explicit constant $\kappa_\Pi$ under any of the following conditions.

\textbf{Condition 1 (Direct curvature domination).}
If there exists $c_\Pi \ge 1$ such that
$$
\tfrac{\beta^2}{4}\Sigma^{(Q)} \preceq c_\Pi\, H_P,
$$
then Assumption~\ref{appendix-assumption:geometry-transfer} holds with $\kappa_\Pi \le c_\Pi$.

\textbf{Condition 2 (Bounded density ratio).}
If $Q \ll P$ and $\frac{dQ}{dP} \le \rho_{\max}$ almost surely, then by change of measure:
$$
\bar{\Sigma}^{(Q)}
=
\mathbb{E}_P\!\left[\frac{dQ}{dP}\,\beta^2\,w(z)\,\Delta\phi\,\Delta\phi^\top\right]
\preceq
\rho_{\max}\,\bar{\Sigma}^{(P)}
\preceq
\rho_{\max}\,H_P,
$$
which bounds the worst-case reweighting of preference pairs under $Q$. To obtain Assumption~\ref{appendix-assumption:geometry-transfer} in the form above, one applies the same argument to the raw second moment, $\frac{\beta^2}{4}\Sigma^{(Q)} \preceq \rho_{\max}\frac{\beta^2}{4}\Sigma^{(P)}$, together with a curvature lower bound $\frac{\beta^2}{4}\Sigma^{(P)} \preceq c'\,H_P$ as supplied by Condition~3. The resulting constant is $\kappa_\Pi \le c'\rho_{\max}$.

\textbf{Condition 3 (Covariance transfer under local regime).} Assume the local regime holds uniformly, so that
$
|\beta\,\tilde{\theta}_{\mathrm{train}}^\top\Delta\phi|\le \beta B
$
and therefore
$
\beta^2\gamma_{\beta,B}\,\Delta\phi\Delta\phi^\top
\preceq
\beta^2 w(\Delta_\theta)\,\Delta\phi\Delta\phi^\top
\preceq
\frac{\beta^2}{4}\,\Delta\phi\Delta\phi^\top
$,
where $\gamma_{\beta,B}:=\sigma(\beta B)(1-\sigma(\beta B))$. If the feature covariances satisfy
$$
\Sigma^{(Q)} := \mathbb{E}_Q[\Delta\phi\Delta\phi^\top]
\preceq
c\,\big(\Sigma^{(P)}+\lambda I\big)
$$
for some $c\ge 1$, then
$$
\frac{\beta^2}{4}\,\Sigma^{(Q)}
\preceq
\frac{c}{4}\,\beta^2\big(\Sigma^{(P)}+\lambda I\big).
$$
Moreover, since $w(z) \ge \gamma_{\beta,B}$ uniformly on $\Theta_B$,
$$
\bar{\Sigma}^{(P)}
\succeq
\beta^2\gamma_{\beta,B}\,\Sigma^{(P)}.
$$
If additionally $\beta^2\gamma_{\beta,B}\le 1$ (which holds in particular whenever $\beta\le 1$, since then $\beta^2\gamma_{\beta,B}\le 1/4$), then
$$
H_P = \bar{\Sigma}^{(P)} + \lambda I
\succeq
\beta^2\gamma_{\beta,B}\,\Sigma^{(P)} + \lambda I
\succeq
\gamma_{\beta,B}\,\beta^2(\Sigma^{(P)} + \lambda I),
$$
where the last step uses $\beta^2\gamma_{\beta,B}\le 1$. Combining,
$$
\frac{\beta^2}{4}\,\Sigma^{(Q)}
\preceq
\frac{c}{4\gamma_{\beta,B}}\,H_P,
$$
so, provided $\beta^2\gamma_{\beta,B}\le 1$, Assumption~\ref{appendix-assumption:geometry-transfer} holds with $\kappa_\Pi \le c/(4\gamma_{\beta,B})$.
% ==================================
% deployment proxy: expected margins
% ==================================
\subsection{Deployment Proxy: Expected Margins} \label{appendix:deployment-proxy}
We now characterize deployment degradation through the expected preference margin.

\paragraph{Margin definition.} The expected margin under distribution~$D$ measures the average score gap between preferred and dispreferred responses:
$
m_D(\theta) := \mathbb{E}_{(x,y_w,y_l) \sim D}[\beta\,\tilde{\theta}^\top\Delta\phi].
$
The margin decomposes into causal and spurious components:
\begin{equation}
\label{appendix-eq:margin-decomposition}
m_D(\theta) = \underbrace{\beta\,\tilde{\theta}_c^\top\mu_c^{(D)}}_{=:m^{\text{causal}}_D(\theta)} + \underbrace{\beta\,\tilde{\theta}_s^\top\mu_s^{(D)}}_{=:m^{\text{spurious}}_D(\theta)}.
\end{equation}
where $\mu_c^{(D)} := \mathbb{E}_D[\Delta\phi_c]$ and $\mu_s^{(D)} := \mathbb{E}_D[\Delta\phi_s]$.

Under the training distribution $P$, the spurious term may contribute positively to the learned classifier. Under deployment $Q$, this term may shrink, flip sign, or become pure noise, depending on the shift scenario (Section~\ref{appendix:shift-scenarios}).

\paragraph{Margin approximation of objective differences.} We now restate Proposition~\ref{prop:margin-approx} and provide the margin approximation proof.
\begin{proposition}[First-order margin approximation]
\label{appendix-prop:margin-approx}
Under the local regime (Assumption~\ref{appendix-assumption:local-2}), the pairwise surrogate objective $\tilde{J}^{(D)}(\theta) := \mathbb{E}_D[\log\sigma(\beta\tilde{\theta}^\top\Delta\phi)]$ 
satisfies
$$
\tilde{J}^{(Q)}(\theta) - \tilde{J}^{(P)}(\theta) 
= \frac{1}{2}\big(m_Q(\theta) - m_P(\theta)\big) + O(\beta^2\|\tilde{\theta}\|^2).
$$
\end{proposition}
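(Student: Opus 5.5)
The plan is a second-order Taylor expansion of the scalar function $f(z):=\log\sigma(z)$ about $z=0$, applied pointwise to the scaled margin $z=\beta\tilde\theta^\top\Delta\phi$, followed by taking expectations under $D\in\{P,Q\}$ and subtracting. We have $f(0)=-\log 2$, $f'(z)=1-\sigma(z)$ so $f'(0)=\tfrac12$, and $f''(z)=-\sigma(z)(1-\sigma(z))$, so $|f''(z)|\le \tfrac14$ for all $z$. Taylor's theorem with Lagrange remainder then gives, for every pair,
\[
\log\sigma(z) = -\log 2 + \tfrac12 z + R(z), \qquad |R(z)|\le \tfrac18 z^2 .
\]

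Next we take expectations under $D$. The constant and the linear term give $\tilde J^{(D)}(\theta) = -\log 2 + \tfrac12 m_D(\theta) + \mathbb{E}_D[R(z)]$, using the definition $m_D(\theta)=\mathbb{E}_D[\beta\tilde\theta^\top\Delta\phi]$. Subtracting the $P$ expression from the $Q$ expression cancels the constant $-\log 2$, leaving $\tfrac12(m_Q-m_P)$ plus the remainder difference $\mathbb{E}_Q[R]-\mathbb{E}_P[R]$.

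It remains to show the remainder difference is $O(\beta^2\|\tilde\theta\|^2)$. Cauchy--Schwarz gives $z^2\le \beta^2\|\tilde\theta\|_2^2\|\Delta\phi\|_2^2$. Under the bounded-feature normalization $\|\Delta\phi\|_2\le 1$ (Assumption~\ref{appendix-assumption:bounded-features}), each expected remainder is at most $\tfrac18\beta^2\|\tilde\theta\|^2$, so the difference is at most $\tfrac14\beta^2\|\tilde\theta\|^2$.

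The main obstacle is that the local regime, Assumption~\ref{appendix-assumption:local-2}, holds only with high probability rather than almost surely. This is harmless, because the bound $|R(z)|\le z^2/8$ is global: it needs only the uniform bound on $f''$, not smallness of $z$. So the expectation can be controlled by $\tfrac18\beta^2\|\tilde\theta\|^2\,\mathbb{E}_D\|\Delta\phi\|^2$, which requires only finite second moments of $\Delta\phi$. The role of the local regime is interpretive: it guarantees this remainder is small relative to the first-order term. If a sharper statement is wanted, one can note that $f'''(0)=0$, so on the high-probability event $|z|\le\epsilon$ the remainder equals $-z^2/8+O(z^4)$. The quadratic contribution then depends on the second moments $\Sigma^{(Q)}$ versus $\Sigma^{(P)}$, which is consistent with the stated order $O(\beta^2\|\tilde\theta\|^2)$.
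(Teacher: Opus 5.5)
Your proposal is correct and follows essentially the same route as the paper: Taylor-expand $\log\sigma$ at $0$ with $(\log\sigma)'(0)=\tfrac12$, take expectations under $D$, bound $\mathbb{E}_D[(\beta\tilde\theta^\top\Delta\phi)^2]\le\beta^2\|\tilde\theta\|_2^2$ via Cauchy--Schwarz and $\|\Delta\phi\|_2\le 1$, and subtract to cancel $\log\tfrac12$. Your global Lagrange bound $|R(z)|\le z^2/8$ (from $|(\log\sigma)''|\le\tfrac14$) is a small but useful sharpening: it justifies taking the paper's $O(z^2)$ remainder inside the expectation, even though the local regime holds only with high probability.
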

\begin{proof}
Let $z := \beta\,\tilde{\theta}^\top\Delta\phi$. Under Assumption~\ref{appendix-assumption:local-2}, we have $|z|\le \epsilon \ll 1$ with high probability. Since $\log\sigma$ is $C^2$, a Taylor expansion at $0$ gives
$$
\log\sigma(z)
=
\log\sigma(0) + (\log\sigma)'(0)\,z + O(z^2)
=
\log\frac{1}{2} + \frac{1}{2}z + O(z^2),
$$
because $(\log\sigma)'(z)=1-\sigma(z)$ and hence $(\log\sigma)'(0)=1-\sigma(0)=1/2$.

Taking expectations under $D$ yields
\begin{align*}
\tilde{J}^{(D)}(\theta)
&= \mathbb{E}_D[\log\sigma(z)]
= \log\frac{1}{2} + \frac{1}{2}\,\mathbb{E}_D[z] + O(\mathbb{E}_D[z^2]) \\
&= \log\frac{1}{2} + \frac{1}{2}\,m_D(\theta) + O(\mathbb{E}_D[(\beta\,\tilde{\theta}^\top\Delta\phi)^2]).
\end{align*}
Under Assumption~\ref{appendix-assumption:bounded-features}, $\|\Delta\phi\|_2\le 1$, so
$$
\mathbb{E}_D[(\beta\,\tilde{\theta}^\top\Delta\phi)^2]
\le
\beta^2\|\tilde{\theta}\|_2^2\,\mathbb{E}_D[\|\Delta\phi\|_2^2]
\le
\beta^2\|\tilde{\theta}\|_2^2,
$$
and therefore
$$
\tilde{J}^{(D)}(\theta)
=
\log\frac{1}{2} + \frac{1}{2}m_D(\theta) + O(\beta^2\|\tilde{\theta}\|_2^2).
$$
Applying this with $D=P$ and $D=Q$ and subtracting cancels the constant $\log(1/2)$, giving
$$
\tilde{J}^{(Q)}(\theta) - \tilde{J}^{(P)}(\theta)
=
\frac{1}{2}\big(m_Q(\theta)-m_P(\theta)\big)
+
O(\beta^2\|\tilde{\theta}\|_2^2),
$$
as claimed.
\end{proof}
We now show that spurious margin drives shift under stable causal statistics.
\begin{proposition}[Spurious Margin Drives Shift]
\label{appendix-prop:spurious-drives-shift}
When causal statistics are stable ($\mu_c^{(Q)} = \mu_c^{(P)}$), the objective difference at $\theta_{\mathrm{train}}$ is determined by the spurious margin:
$$
\tilde{J}^{(Q)}(\theta_{\mathrm{train}}) - \tilde{J}^{(P)}(\theta_{\mathrm{train}}) 
= \frac{\beta}{2} \tilde{\theta}_{s,\mathrm{train}}^\top 
(\mu_s^{(Q)} - \mu_s^{(P)}) + O(\beta^2\|\tilde{\theta}_{\mathrm{train}}\|^2).
$$
\end{proposition}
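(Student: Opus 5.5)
The plan is to specialize Proposition~\ref{appendix-prop:margin-approx} to $\theta=\theta_{\mathrm{train}}$ and then evaluate the margin difference with the decomposition \eqref{appendix-eq:margin-decomposition}. The parameters are fixed after training, so $\theta_{\mathrm{train}}$ is just a point in $\Theta_B$. Proposition~\ref{appendix-prop:margin-approx} holds pointwise in $\theta$ under the local regime (Assumption~\ref{appendix-assumption:local-2}), which is stated for both $P$ and $Q$, together with bounded features (Assumption~\ref{appendix-assumption:bounded-features}). Applying it gives
\[
\tilde{J}^{(Q)}(\theta_{\mathrm{train}}) - \tilde{J}^{(P)}(\theta_{\mathrm{train}}) = \tfrac{1}{2}\big(m_Q(\theta_{\mathrm{train}}) - m_P(\theta_{\mathrm{train}})\big) + O(\beta^2\|\tilde{\theta}_{\mathrm{train}}\|^2).
\]

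Next I expand each margin. By linearity of expectation, $m_D(\theta_{\mathrm{train}})=\beta\,\tilde{\theta}_{\mathrm{train}}^\top\mu^{(D)}$. Splitting into causal and spurious blocks,
\[
m_Q - m_P = \beta\,\tilde{\theta}_{c,\mathrm{train}}^\top\big(\mu_c^{(Q)}-\mu_c^{(P)}\big) + \beta\,\tilde{\theta}_{s,\mathrm{train}}^\top\big(\mu_s^{(Q)}-\mu_s^{(P)}\big).
\]
The hypothesis $\mu_c^{(Q)}=\mu_c^{(P)}$ removes the causal term. Multiplying the remaining term by $\tfrac12$ gives the claimed leading term $\tfrac{\beta}{2}\tilde{\theta}_{s,\mathrm{train}}^\top(\mu_s^{(Q)}-\mu_s^{(P)})$.

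The one point to watch is the remainder. The $O(\cdot)$ terms from $P$ and from $Q$ are each bounded by a constant times $\mathbb{E}_D[z^2]\le\beta^2\|\tilde{\theta}_{\mathrm{train}}\|_2^2$. Their difference is therefore also $O(\beta^2\|\tilde{\theta}_{\mathrm{train}}\|^2)$, with a constant that does not depend on $\theta$. That constant comes from the bound on $(\log\sigma)''=-\sigma(1-\sigma)$, which has absolute value at most $1/4$. With the remainder controlled this way, the proof is a direct corollary of Proposition~\ref{appendix-prop:margin-approx}, and I do not expect any real obstacle.
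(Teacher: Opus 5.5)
Your proposal is correct and follows the paper's proof: apply Proposition~\ref{appendix-prop:margin-approx} at $\theta_{\mathrm{train}}$, split the margin difference into causal and spurious blocks via \eqref{appendix-eq:margin-decomposition}, and use $\mu_c^{(Q)}=\mu_c^{(P)}$ to drop the causal term. Your explicit remainder control via $|(\log\sigma)''|\le 1/4$ is a small sharpening the paper leaves implicit: it gives a bound of $\tfrac{1}{8}\beta^2\|\tilde{\theta}_{\mathrm{train}}\|_2^2$ per distribution that holds for every $z$, not only with high probability.
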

\begin{proof}
Apply Proposition~\ref{appendix-prop:margin-approx} with $\theta = \theta_{\mathrm{train}}$. By~\eqref{appendix-eq:margin-decomposition}:
$$
m_Q(\theta_{\mathrm{train}}) - m_P(\theta_{\mathrm{train}})
=
\beta\,\tilde{\theta}_{c,\mathrm{train}}^{\top}(\mu_c^{(Q)} - \mu_c^{(P)})
+
\beta\,\tilde{\theta}_{s,\mathrm{train}}^{\top}(\mu_s^{(Q)} - \mu_s^{(P)}).
$$
When $\mu_c^{(Q)} = \mu_c^{(P)}$, the causal term vanishes:
$$
m_Q(\theta_{\mathrm{train}}) - m_P(\theta_{\mathrm{train}})
=
\beta\,\tilde{\theta}_{s,\mathrm{train}}^{\top}(\mu_s^{(Q)} - \mu_s^{(P)}).
$$
Substituting into Proposition~\ref{appendix-prop:margin-approx} yields the result.
\end{proof}
\paragraph{Interpretation.}
Deployment degradation is driven by the spurious margin because the training optimizer $\tilde{\theta}_{s,\mathrm{train}}$ was optimized to exploit the spurious correlation under $P$. When the deployment distribution $Q$ breaks this correlation, the same parameter becomes a source of vulnerability. The magnitude of degradation is controlled by $\|\tilde{\theta}_{s,\mathrm{train}}\|$, the same quantity that tie training reduces (see Appendix~\ref{appendix:tie-training}).

% ===================================
% deployment suboptimality definition
% ===================================
\subsection{Deployment Suboptimality Definition} \label{appendix:deployment-suboptimality}
Let 
$$
\theta_Q^\star := \arg\max_{\theta \in \Theta_B} \tilde{J}^{(Q)}(\theta)
$$ 
denote the deployment-optimal parameters, where $\Theta_B := \{\theta \in \mathbb{R}^d : \|\theta - \theta_{\mathrm{ref}}\|_2 \le B\}$. Let $\hat{\theta}$ denote the empirical estimator trained on $n$ samples from $P$. Define the deployment suboptimality:
$$
\mathrm{SubOpt}_Q(\hat{\theta}) := \tilde{J}^{(Q)}(\theta_Q^\star) - \tilde{J}^{(Q)}(\hat{\theta}).
$$
\paragraph{Decomposition.} Inserting the population training optimizer $\theta_{\mathrm{train}}$:
\begin{equation}
\label{appendix-eq:subopt-decomposition}
\mathrm{SubOpt}_Q(\hat{\theta}) 
= \underbrace{\tilde{J}^{(Q)}(\theta_Q^\star) - 
\tilde{J}^{(Q)}(\theta_{\mathrm{train}})}_{\text{shift term (distributional,irreducible)}} 
+ \underbrace{\tilde{J}^{(Q)}(\theta_{\mathrm{train}}) - 
\tilde{J}^{(Q)}(\hat{\theta})}_{\text{estimation term (statistical, reducible)}}.
\end{equation}

\paragraph{Irreducible and reducible components.} The shift term depends on $\theta_{\mathrm{train}}$ and does not depend on sample size. As $n \to \infty$, $\hat{\theta} \to \theta_{\mathrm{train}}$ and the estimation term vanishes (as demonstrated in Appendix~\ref{appendix:deployment_bound}):
$$
\lim_{n \to \infty} \mathrm{SubOpt}_Q(\hat{\theta}) 
= \tilde{J}^{(Q)}(\theta_Q^\star) - \tilde{J}^{(Q)}(\theta_{\mathrm{train}}).
$$
Proposition~\ref{appendix-prop:spurious-drives-shift} shows that, when causal statistics are stable, the deployment objective evaluated at $\theta_{\mathrm{train}}$ is first-order sensitive to
$\tilde{\theta}_{s,\mathrm{train}}^\top(\mu_s^{(Q)}-\mu_s^{(P)})$.
Thus, when the $Q$-optimal parameter removes or reverses this spurious reliance, the persistent shift term is driven by the same learned spurious component. The estimation term vanishes as $\hat\theta\to\theta_{\mathrm{train}}$, so additional samples from $P$ do not remove this structural bottleneck. The fundamental bottleneck is spurious component  $\tilde{\theta}_{s,\mathrm{train}}$, which arises structurally from the training distribution (Section~\ref{sec:mechanism}).
% ====================================
% proof deployment suboptimality bound
% ====================================
\subsection{Proof Deployment Suboptimality Bound}   \label{appendix:deployment_bound}
This appendix provides the proof of the deployment suboptimality bound (Theorem~\ref{thm:deployment-bound}). For simplicity, we state the following high-probability concentration bound as an assumption. It follows from the self-normalized MLE analysis of~\cite{zhu2023principled}, specialized to our regularized estimator and bounded-feature setting.
\begin{assumption}[Training-estimator concentration~\cite{zhu2023principled}] \label{appendix-assumption:estimation}
With probability at least $1 - \delta$,
$$
\|\hat{\theta} - \theta_{\mathrm{train}}\|_{H_P} \le \Gamma_n,
$$
where
$$
\Gamma_n
=
2\beta\sqrt{\frac{2(d + \log(1/\delta))}{n}} + B\sqrt{\lambda}.
$$
\end{assumption}
This is a standard self-normalized concentration bound for regularized generalized linear estimators~\cite{zhu2023principled}; in our setting, $H_P = \bar{\Sigma}^{(P)} + \lambda I$ plays the role of the regularized Fisher matrix. The first term captures statistical error and vanishes as $n \to \infty$; the second reflects regularization bias and vanishes as $\lambda \to 0$.

\begin{theorem}[Deployment sub-optimality bound]
\label{appendix-thm:deployment-bound}
Under Assumptions~\ref{appendix-assumption:bounded-features},~\ref{appendix-assumptions:bounded-parameters},~\ref{appendix-assumption:local-2},~\ref{appendix-assumption:geometry-transfer}, and~\ref{appendix-assumption:estimation}, with probability at least $1 - \delta$:
\begin{equation}
\label{eq:deployment-bound}
\mathrm{SubOpt}_Q(\hat{\theta}) 
\leq 
\underbrace{\tilde{J}^{(Q)}(\theta_Q^\star) - 
\tilde{J}^{(Q)}(\theta_{\mathrm{train}})}_{\text{shift term }}
+ 
\underbrace{G_Q\Gamma_n+\frac{\kappa_\Pi}{2} \Gamma_n^2}_{\text{estimation term }},
\end{equation}
where $G_Q:=\|\nabla \tilde{J}^{(Q)}(\theta_\mathrm{train}) \|_{H^{-1}_P}$.
\end{theorem}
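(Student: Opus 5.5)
The argument starts from the decomposition \eqref{appendix-eq:subopt-decomposition}, which holds identically. The shift term is left untouched. All the work goes into bounding the estimation term $\tilde J^{(Q)}(\theta_{\mathrm{train}})-\tilde J^{(Q)}(\hat\theta)$ on the event of Assumption~\ref{appendix-assumption:estimation}, which has probability at least $1-\delta$. On that event we have $\|\hat\theta-\theta_{\mathrm{train}}\|_{H_P}\le\Gamma_n$, so the task reduces to a deterministic bound on the variation of $\tilde J^{(Q)}$ over an $H_P$-ellipsoid of radius $\Gamma_n$ around $\theta_{\mathrm{train}}$.

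Set $v:=\hat\theta-\theta_{\mathrm{train}}$. Both points lie in the convex set $\Theta_B$, so the segment between them does too. By Taylor's theorem with integral (or mean-value) remainder,
\[
\tilde J^{(Q)}(\theta_{\mathrm{train}})-\tilde J^{(Q)}(\hat\theta)
=-\nabla\tilde J^{(Q)}(\theta_{\mathrm{train}})^\top v-\tfrac12 v^\top\nabla^2\tilde J^{(Q)}(\xi)\,v
\]
for some $\xi$ on the segment.

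The linear term is handled by the dual-norm Cauchy--Schwarz inequality:
\[
|\nabla\tilde J^{(Q)}(\theta_{\mathrm{train}})^\top v|\le\|\nabla\tilde J^{(Q)}(\theta_{\mathrm{train}})\|_{H_P^{-1}}\|v\|_{H_P}\le G_Q\Gamma_n .
\]

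For the quadratic term, the Hessian is computed explicitly:
\[
-\nabla^2\tilde J^{(Q)}(\xi)=\mathbb{E}_Q\big[\beta^2\sigma(z)(1-\sigma(z))\Delta\phi\Delta\phi^\top\big],\qquad z=\beta\tilde\xi^\top\Delta\phi .
\]
Since $\sigma(1-\sigma)\le\tfrac14$ pointwise, this gives $0\preceq-\nabla^2\tilde J^{(Q)}(\xi)\preceq\tfrac{\beta^2}{4}\Sigma^{(Q)}$ uniformly in $\xi$. Concavity of $\tilde J^{(Q)}$ only controls the sign of the Hessian, which is why the matrix upper bound is needed here. Hence
\[
-\tfrac12 v^\top\nabla^2\tilde J^{(Q)}(\xi)v\le\tfrac12\|v\|^2_{\frac{\beta^2}{4}\Sigma^{(Q)}}\le\tfrac{\kappa_\Pi}{2}\|v\|_{H_P}^2\le\tfrac{\kappa_\Pi}{2}\Gamma_n^2,
\]
using the geometry-transfer Assumption~\ref{appendix-assumption:geometry-transfer}.

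Adding the two bounds gives $G_Q\Gamma_n+\tfrac{\kappa_\Pi}{2}\Gamma_n^2=\kappa_{\mathrm{est}}\Gamma_n$, which is the estimation term in the statement. Combining with the untouched shift term yields \eqref{eq:deployment-bound}.

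The main point needing care is the quadratic remainder. It must be evaluated at the intermediate point $\xi$, not at $\theta_{\mathrm{train}}$. The uniform bound $w\le\tfrac14$ is what makes the argument work at arbitrary $\xi$ without requiring the local regime at $\xi$. This also explains why Assumption~\ref{appendix-assumption:geometry-transfer} is phrased with $\tfrac{\beta^2}{4}\Sigma^{(Q)}$ rather than with the $Q$-Fisher matrix at $\theta_{\mathrm{train}}$. A second check is the direction of the sign convention: we are bounding the decrease in the maximized objective, so the linear term appears with a minus sign and we bound its absolute value.
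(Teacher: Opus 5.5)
Your proposal is correct and follows the paper's proof essentially step for step. It uses the same decomposition, the same uniform Hessian bound $-\nabla^2\tilde J^{(Q)}\preceq\tfrac{\beta^2}{4}\Sigma^{(Q)}$ from $\sigma(1-\sigma)\le\tfrac14$, dual-norm Cauchy--Schwarz in $H_P$, geometry transfer for the quadratic term, and the concentration event. Your mean-value Taylor remainder at $\xi$ is just the pointwise form of the smoothness inequality the paper invokes in its Step 2.
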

The estimation term decomposes into a first-order contribution controlled by the deployment gradient norm $G_Q$ and a second-order contribution controlled by the geometry transfer constant $\kappa_\Pi$. The gradient norm measures how far $\hat{\theta}$ is from being optimal under $Q$: when $P = Q$, this quantity is small near $\theta_{\mathrm{train}}$; under distribution shift, it captures the first-order cost of deploying a $P$-optimal estimator under $Q$.
\begin{proof}
Let $\theta_{\mathrm{train}}$ denote the (population) optimizer of the training surrogate under $P$ over $\Theta_B$, and let $\theta_Q^\star$ denote the (population) optimizer under $Q$. We organize the proof into six steps.

\paragraph{Step 1 (Decomposition).} 
From~\eqref{appendix-eq:subopt-decomposition}:
\begin{equation}
\label{appendix-eq:subopt-decomp}
\mathrm{SubOpt}_Q(\hat\theta)
=
\underbrace{\tilde J^{(Q)}(\theta_Q^\star) - \tilde J^{(Q)}(\theta_{\mathrm{train}})}_{\text{shift term}}
+
\underbrace{\tilde J^{(Q)}(\theta_{\mathrm{train}}) - \tilde J^{(Q)}(\hat\theta)}_{\text{estimation term}}.
\end{equation}
It remains to bound the estimation term $\tilde{J}^{(Q)}(\theta_{\mathrm{train}}) - \tilde{J}^{(Q)}(\hat{\theta})$.

\paragraph{Step 2 (Smoothness upper bound).}
For each $\theta\in\Theta_B$, the negative Hessian of $\tilde J^{(Q)}$ is
$$
-\nabla^2\tilde J^{(Q)}(\theta)
=
\mathbb{E}_Q\!\left[\beta^2\,\sigma(z_\theta)(1-\sigma(z_\theta))\,\Delta\phi\Delta\phi^\top\right],
\qquad
z_\theta:=\beta(\theta-\theta_{\mathrm{ref}})^\top\Delta\phi.
$$
Since $\sigma(z)(1-\sigma(z)) \leq 1/4$ for all $z$,
$$
-\nabla^2\tilde J^{(Q)}(\theta)
\preceq
\frac{\beta^2}{4}\Sigma^{(Q)}
\quad\text{for all }\theta\in\Theta_B.
$$
Hence, by the standard smoothness inequality for concave functions, applied around $\theta_{\mathrm{train}}$,
$$
\tilde J^{(Q)}(\hat\theta)
\ge
\tilde J^{(Q)}(\theta_{\mathrm{train}})
+
\nabla\tilde J^{(Q)}(\theta_{\mathrm{train}})^\top(\hat\theta-\theta_{\mathrm{train}})
-
\frac12\|\hat\theta-\theta_{\mathrm{train}}\|_{\frac{\beta^2}{4}\Sigma^{(Q)}}^2.
$$
Rearranging gives
\begin{equation}
\label{appendix-eq:smoothness-applied}
\tilde J^{(Q)}(\theta_{\mathrm{train}})-\tilde J^{(Q)}(\hat\theta)
\le
\nabla\tilde J^{(Q)}(\theta_\mathrm{train})^\top(\theta_{\mathrm{train}}-\hat\theta)
+
\frac12\|\hat\theta-\theta_{\mathrm{train}}\|_{\frac{\beta^2}{4}\Sigma^{(Q)}}^2.
\end{equation}

\paragraph{Step 3 (Bound the linear term).}
By Cauchy--Schwarz in the Euclidean inner product:
$$
\nabla\tilde{J}^{(Q)}(\theta_\mathrm{train})^\top(\theta_{\mathrm{train}} - \hat{\theta})
\leq \underbrace{
\|\nabla\tilde{J}^{(Q)}(\theta_\mathrm{train})\|_{H_P^{-1}}}_{=:G_Q}
\cdot
\|\hat{\theta} - \theta_{\mathrm{train}}\|_{H_P}.
$$
This uses the identity $g^\top v = g^\top H_P^{-1/2} H_P^{1/2} v \leq \|H_P^{-1/2}g\|\,\|H_P^{1/2}v\| = \|g\|_{H_P^{-1}}\|v\|_{H_P}$, which holds for any positive definite $H_P$ without requiring geometry transfer.

\paragraph{Step 4 (Geometry transfer for the quadratic term).}
By Assumption~\ref{appendix-assumption:geometry-transfer},
$$
\|\hat\theta-\theta_{\mathrm{train}}\|_{\frac{\beta^2}{4}\Sigma^{(Q)}}^2
\le
\kappa_\Pi \|\hat\theta-\theta_{\mathrm{train}}\|_{H_P}^2.
$$

\paragraph{Step 5 (Concentration).}
By Assumption~\ref{appendix-assumption:estimation}, with probability at least $1 - \delta$:
$$
\|\hat{\theta} - \theta_{\mathrm{train}}\|_{H_P} \leq \Gamma_n.
$$

\paragraph{Step 6 (Combine).}
Substituting Steps~3--5 into~\eqref{appendix-eq:smoothness-applied}:
$$
\tilde{J}^{(Q)}(\theta_{\mathrm{train}}) - \tilde{J}^{(Q)}(\hat{\theta})
\leq
G_Q\,\Gamma_n
+
\frac{\kappa_\Pi}{2}\,\Gamma_n^2.
$$
Inserting into the decomposition from Step~1 yields the stated bound.
\end{proof}

\paragraph{Interpretation.}
The bound reveals three distinct sources of deployment suboptimality:
\begin{enumerate}
    \item \textbf{Shift term:} The irreducible gap between the $P$-optimal and $Q$-optimal parameters, driven by $\tilde{\theta}_{s,\mathrm{train}}^{\top}(\mu_s^{(Q)} - \mu_s^{(P)})$ when causal statistics are stable (Proposition~\ref{appendix-prop:spurious-drives-shift}).
    \item \textbf{Gradient term:} The first-order cost of deploying a $P$-trained estimator under $Q$. This quantity $G_Q$ measures how far the population training optimizer $\theta_{\mathrm{train}}$ is from stationarity under $Q$. It is small when $P \approx Q$ near $\theta_{\mathrm{train}}$ and grows with distribution shift.
    \item \textbf{Quadratic term:} The second-order estimation penalty, controlled by the geometry transfer constant $\kappa_\Pi$ and the statistical rate $\Gamma_n^2 = O(d/n + B^2\lambda)$.
\end{enumerate}
Tie training (Appendix~\ref{appendix:tie-training}) reduces the shift term by suppressing $\|\tilde{\theta}_{s,\mathrm{train}}^\star\|$ and simultaneously reduces the gradient term by making the estimator less sensitive to spurious distributional changes.

\newpage
% =============================
% appendix: tie training theory
% =============================
\section{Tie Training Theory} \label{appendix:tie-training}
Theorem~\ref{appendix-thm:mechanism} identifies two drivers of spurious learning: spurious mean bias $\mu_s = \mathbb{E}[\Delta\phi_s]$ and causal--spurious leakage $\Sigma_{sc} = \mathbb{E}[\Delta\phi_c\Delta\phi_s^\top]$. Both are properties of the training distribution, so the natural remedy is a data-level intervention. Pairs with matched causal features ($\Delta\phi_c = 0$), varying spurious features ($\Delta\phi_s \neq 0$), and zero spurious mean bias ($\mathbb{E}_\mathrm{tie}[\Delta \phi_s] = 0$) dilute the effective contribution of both drivers under mixture training. This section formalizes this construction as \emph{tie training}, derives the resulting mixed-training equilibrium, proves that tie training reduces the spurious  parameter magnitude (Theorem~\ref{appendix-thm:tie-training}), with the reduction being unconditional in the scalar spurious case ($d_s = 1$), and derives a quantitative reduction bound (Corollary~\ref{appendix-cor:quantitative}) that sharpens the main theorem.
% ========================
% tie construction details
% ========================
\subsection{Tie Construction Details} \label{appendix:tie-construction}
We formalize the tie data-generating process and labeling scheme. We start by defining a tie pair:
\begin{definition}[Tie pair]
A tie pair is a tuple $(x,y_A,y_B)$ with equal utility,
$u^\dagger(x,y_A)=u^\dagger(x,y_B)$.
\end{definition}

\paragraph{Feature structure.} We construct tie pairs so that causal features match while spurious features differ: $\Delta\phi_c := \phi_c(x,y_A)-\phi_c(x,y_B)=0$ and $\Delta\phi_s := \phi_s(x,y_A)-\phi_s(x,y_B)=\delta_s\neq 0$. Thus, ties vary only in spurious coordinates.

\paragraph{Random labeling.} For each tie pair, we assign the winner-loser label uniformly at random. With probability $1/2$, we label $y_A \succ y_B$ and set $\Delta\phi=[0;\delta_s]$; with probability $1/2$, we label $y_B \succ y_A$ and set $\Delta\phi=[0;-\delta_s]$. The zero-mean property $\mathbb{E}_{\mathrm{tie}}[\Delta\phi] = 0$ follows from the symmetric random labeling, not from any assumption on the distribution of $\delta_s$. Then
$$
\Sigma^{\mathrm{tie}}:=\mathbb{E}_{\mathrm{tie}}[\Delta\phi\Delta\phi^\top]
=
\begin{bmatrix}
0 & 0\\
0 & \Sigma^{\mathrm{tie}}_{ss}
\end{bmatrix}, \qquad \Sigma^{\mathrm{tie}}_{ss}\succeq 0.
$$
\paragraph{Example.} For hotel recommendations: $y_A$ = ``Excellent service, prime location'' (concise); $y_B$ = ``Excellent service, prime location, with detailed amenities list'' (verbose). Both have equal utility but differ in length, a common spurious feature.
% ===========================
% expected gradient from ties
% ===========================
\subsection{Expected Gradient from Ties} \label{sub-sec:expected-gradient}
 We now derive the expected gradient contribution of tie examples. Under Assumption~\ref{appendix-assumption:local}, the linearized \textit{weight function} is:
$$
w(\Delta_{\theta}) = 1 - \sigma(\beta \tilde{\theta}^\top \Delta\phi)
\approx
\frac{1}{2} - \frac{\beta}{4}\tilde{\theta}^\top \Delta\phi.
$$

\paragraph{Expected tie gradient.} Then, for a single tie example, the DPO gradient is
$$
\nabla_\theta \ell_{\mathrm{tie}}
=
-\beta \bigl(1-\sigma(\beta \tilde{\theta}^\top \Delta\phi)\bigr)\Delta\phi.
$$
Taking expectation over random tie labeling and using $\mathbb{E}_{\mathrm{tie}}[\Delta\phi]=0$ gives
\begin{align*}
g_{\mathrm{tie}}(\tilde{\theta})
&=
-\beta\,\mathbb{E}_{\mathrm{tie}}\!\left[
\bigl(1-\sigma(\beta \tilde{\theta}^\top \Delta\phi)\bigr)\Delta\phi
\right] \\
&=
-\beta\,\mathrm{Cov}_{\mathrm{tie}}\!\left(
\Delta\phi,\,
1-\sigma(\beta \tilde{\theta}^\top \Delta\phi)
\right).
\end{align*}
The second equality uses $\mathbb{E}_{\mathrm{tie}}[\Delta\phi] = 0$, so $\mathbb{E}[w \cdot \Delta\phi] = \operatorname{Cov}(\Delta\phi, w)$. Under the local regime (Assumption~\ref{appendix-assumption:local}), this gives
$$
g_{\mathrm{tie}}(\tilde{\theta}) \approx \frac{\beta^2}{4}\,\Sigma^{\mathrm{tie}}\,\tilde{\theta}.
$$

% =======================
% linearized tie dynamics
% =======================
\subsection{Linearized Tie Dynamics} \label{appendix:linearized-tie-dynamics}
We now characterize the structural effect of ties on the population gradient.

\paragraph{Tie covariance.} Let us define the tie covariance matrix as
$$
\Sigma^{\mathrm{tie}}
:=
\mathbb{E}_{\mathrm{tie}}[\Delta\phi\Delta\phi^\top]
\in \mathbb{R}^{d\times d}.
$$
By the tie construction (Appendix~\ref{appendix:tie-construction}), its block structure is
$$
\Sigma^{\mathrm{tie}}
=
\begin{bmatrix}
0 & 0 \\
0 & \Sigma^{\mathrm{tie}}_{ss}
\end{bmatrix},
\qquad
\Sigma^{\mathrm{tie}}_{ss} \succeq 0.
$$
We assume $\Sigma^{\mathrm{tie}}_{ss} \succeq 0$. When $\Sigma^{\mathrm{tie}}_{ss} \succ 0$, tie training provides curvature in all spurious directions. In general, $\Sigma^{\mathrm{tie}}_{ss}$ may be low-rank, in which case the additional suppression from ties acts only along the subspace spanned by the tie perturbations, while the remaining spurious directions are regularized by the strict preference data.

\paragraph{Interpretation.} To first order, the tie gradient $g_{\mathrm{tie}}(\tilde{\theta}) \approx \frac{\beta^2}{4}\Sigma^{\mathrm{tie}}\tilde{\theta}$ acts as a data-dependent quadratic regularizer on the spurious parameters while leaving causal parameters unaffected. The regularization strength is controlled by $\Sigma^{\mathrm{tie}}_{ss}$, which is a design choice determined by the tie distribution.
% ==========================
% mixed training equilibrium
% ==========================
\subsection{Mixed Training Equilibrium} \label{appendix:mixed-equilibrium}
We now solve for the equilibrium parameters under mixed strict–tie training. We start by defining the data model. 
\paragraph{Mixed data model.} We consider training on a mixture of strict preference pairs from distribution $P$ and tie pairs from $P_{\mathrm{tie}}$. Let $\alpha\in(0,1)$ denote the fraction of strict preferences, and define the mixed distribution
$$
P_{\mathrm{mix}} := \alpha P + (1-\alpha)P_{\mathrm{tie}}.
$$
All expectations below are taken with respect to $P_{\mathrm{mix}}$ unless stated otherwise.

\paragraph{Population gradient under mixed training.} Under the above data model and local regime (Assumption~\ref{appendix-assumption:local}), the population gradient decomposes linearly:
$$
\nabla_\theta L_{\mathrm{mix}}(\tilde{\theta})
=
\alpha\, g_P(\tilde{\theta}) + (1-\alpha)\, g_{\mathrm{tie}}(\tilde{\theta}).
$$
Using the local expansions
$$
g_P(\tilde{\theta})
=
-\frac{\beta}{2}\mu^{(P)} + \frac{\beta^2}{4}\Sigma^{(P)}\tilde{\theta},
\qquad
g_{\mathrm{tie}}(\tilde{\theta})
=
\frac{\beta^2}{4}\Sigma^{\mathrm{tie}}\tilde{\theta},
$$
we obtain
$$
\nabla_\theta L_{\mathrm{mix}}(\tilde{\theta})
=
-\alpha\frac{\beta}{2}\mu^{(P)}
+
\frac{\beta^2}{4}\Sigma^{\mathrm{mix}}\tilde{\theta},
$$
where the effective covariance is
$$
\Sigma^{\mathrm{mix}} := \alpha\Sigma^{(P)} + (1-\alpha)\Sigma^{\mathrm{tie}}.
$$
Since $\Sigma^{(P)} \succ 0$ and $\Sigma^{\mathrm{tie}} \succeq 0$, the effective covariance satisfies $\Sigma^{\mathrm{mix}} \succ 0$ for all $\alpha \in (0,1]$.

\paragraph{Structure of the effective covariance.} Writing the block decomposition of the covariance
$$
\Sigma^{\mathrm{mix}}
=
\begin{bmatrix}
\Sigma^{\mathrm{mix}}_{cc} & \Sigma^{\mathrm{mix}}_{cs} \\
\Sigma^{\mathrm{mix}}_{sc} & \Sigma^{\mathrm{mix}}_{ss}
\end{bmatrix},
$$
and using the tie construction ($\Sigma^{\mathrm{tie}}_{cc}=\Sigma^{\mathrm{tie}}_{cs}=0$),
we obtain
$$
\Sigma^{\mathrm{mix}}_{cc} = \alpha\,\Sigma^{(P)}_{cc},
\qquad
\Sigma^{\mathrm{mix}}_{sc} = \alpha\,\Sigma^{(P)}_{sc},
\qquad
\Sigma^{\mathrm{mix}}_{ss} = \alpha\,\Sigma^{(P)}_{ss} + (1-\alpha)\,\Sigma^{\mathrm{tie}}_{ss}.
$$
Thus tie training scales the raw causal–spurious second-moment blocks by $\alpha$ and adds positive semidefinite mass to the spurious block.

\paragraph{Population equilibrium.} At the population equilibrium, we have $\nabla_\theta L_{\mathrm{mix}}(\tilde{\theta}^\star)=0$, yielding
$$
\alpha\frac{\beta}{2}\mu^{(P)}
=
\frac{\beta^2}{4}\Sigma^{\mathrm{mix}}\tilde{\theta}^{\star}_{\mathrm{mix}},
$$
and therefore
\begin{align} \label{appendix-eq:mixed-equilibrium}
\tilde{\theta}^{\star}_{\mathrm{mix}}
=
\frac{2\alpha}{\beta}(\Sigma^{\mathrm{mix}})^{-1}\mu^{(P)}.    
\end{align}
\paragraph{Spurious block.} 
Applying the same Schur complement algebra as in Appendix~\ref{appendix:solving-equilibrium} to the mixed system $\Sigma^{\mathrm{mix}}\tilde{\theta}^{\star}_{\mathrm{mix}} = \frac{2\alpha}{\beta}\mu^{(P)}$, the spurious component satisfies
$$
\tilde{\theta}^{\star}_{s,\mathrm{mix}}
= \frac{2\alpha}{\beta}\,(\Sigma^{\mathrm{mix}}_{ss})^{-1}
\Big[
\big(I + \Sigma^{\mathrm{mix}}_{sc}(S^{\mathrm{mix}}_c)^{-1}\Sigma^{\mathrm{mix}}_{cs}(\Sigma^{\mathrm{mix}}_{ss})^{-1}\big)\mu^{(P)}_s
- \Sigma^{\mathrm{mix}}_{sc}(S^{\mathrm{mix}}_c)^{-1}\mu^{(P)}_c
\Big],
$$
where $S^{\mathrm{mix}}_c := \Sigma^{\mathrm{mix}}_{cc} - \Sigma^{\mathrm{mix}}_{cs}(\Sigma^{\mathrm{mix}}_{ss})^{-1}\Sigma^{\mathrm{mix}}_{sc}$.

\paragraph{Effect on the causal block.}
Tie training does not introduce direct causal bias, since the tie construction contributes neither causal mean nor causal--spurious cross-moments. The effect on the causal block is indirect through the Schur complement. 
% =======================================
% proof of Theorem~\ref{thm:tie-training}
% =======================================
\subsection{Proof of Theorem~\ref{thm:tie-training}} \label{appendix:proof-tie-training}
Equipped with the mixed-training equilibrium~\eqref{appendix-eq:mixed-equilibrium} derived in Appendix~\ref{appendix:mixed-equilibrium}, we now provide the full proof of Theorem~\ref{thm:tie-training}. To this end, we use the following regularity assumption on the interaction between the tie covariance and the spurious driving term.

\begin{assumption}[Spurious driving term stability] \label{assumption:driving-term}
The spurious driving term satisfies
$$
b_s(\Sigma^{\mathrm{mix}}, \mu^{(P)})^\top (\Sigma_{ss}^{(P)})^{-1}\, b_s(\Sigma^{\mathrm{mix}}, \mu^{(P)})
\;\le\;
b_s(\Sigma^{(P)}, \mu^{(P)})^\top (\Sigma_{ss}^{(P)})^{-1}\, b_s(\Sigma^{(P)}, \mu^{(P)}),
$$
where $b_s(\Sigma, \mu) := (I + \Sigma_{sc}S_c^{-1}\Sigma_{cs}\Sigma_{ss}^{-1})\mu_s - \Sigma_{sc}S_c^{-1}\mu_c$ and $S_c := \Sigma_{cc} - \Sigma_{cs}\Sigma_{ss}^{-1}\Sigma_{sc}$.
\end{assumption}
This regularity condition requires that the contraction of the raw causal--spurious block $\Sigma_{sc}^{\mathrm{mix}}=\alpha\Sigma_{sc}^{(P)}$ is not offset by amplification through the Schur complement and spurious-block inverses. In the scalar spurious case ($d_s=1$), the theorem below gives the stronger pointwise shift reduction whenever the corresponding scalar monotonicity condition $|b_s^{\mathrm{mix}}|\le |b_s^{(P)}|$ holds. This condition is immediate in pure mean-bias settings with $\Sigma_{cs}^{(P)}=0$, where $b_s^{\mathrm{mix}}=b_s^{(P)}=\mu_s$; in pure-leakage and mixed regimes it is the scalar instance of Assumption~\ref{assumption:driving-term}. For general $d_s>1$, Assumption~\ref{assumption:driving-term} rules out adversarial tie designs that increase the effective spurious driving term despite adding spurious curvature.

\begin{theorem}[Tie training reduces spurious reliance and deployment shift]
\label{appendix-thm:tie-training}
Assume the conditions of Theorem~\ref{thm:deployment-bound}, the tie construction
in Appendix~\ref{appendix:tie-construction}, and Assumption~\ref{assumption:driving-term}. Then:

\textbf{(i) Spurious shrinkage (population level).}
Let $\theta^\star$ denote the strict-only population optimizer under $P$, and $\theta^\star_{\mathrm{mix}}$ the mixed-training optimizer under
$P_{\mathrm{mix}} = \alpha P + (1-\alpha)P_{\mathrm{tie}}$. Then
$$
\|\tilde{\theta}_{s,\mathrm{mix}}^\star\|_{\Sigma^{\mathrm{mix}}_{ss}}
\;\le\;
\|\tilde{\theta}_{s}^\star\|_{\Sigma^{(P)}_{ss}},
$$
where $\Sigma^{\mathrm{mix}}_{ss} = \alpha\Sigma^{(P)}_{ss} + (1-\alpha)\Sigma^{\mathrm{tie}}_{ss}$ and $\|v\|_A := \sqrt{v^\top A v}$ denotes the weighted norm. The $\Sigma_{ss}$-norm measures the contribution of spurious parameters to the variance of implicit reward margins; the theorem thus guarantees that tie training reduces the reward-variance attributable to spurious features. The inequality is strict whenever tie curvature acts on the mixed spurious driving direction and the driving-term stability bound is not tight.

\textbf{(ii) Shift reduction under stable causal statistics.}
Suppose $\mu_c^{(Q)} = \mu_c^{(P)}$ and let $\delta\mu_s := \mu_s^{(Q)} - \mu_s^{(P)}$.
 
\textit{Scalar case ($d_s = 1$).} If the scalar driving term is monotone under mixing, i.e.,
$$
|b_s^{\mathrm{mix}}| \le |b_s^{(P)}|,
$$
then the first-order deployment shift satisfies
$$
\big|\tilde{\theta}_{s,\mathrm{mix}}^{\star}\,\delta\mu_s\big|
\;\le\;
\big|\tilde{\theta}_{s}^{\star}\,\delta\mu_s\big|,
$$
up to the same local-regime remainder $O(\beta^2\|\tilde{\theta}\|^2)$.
 
\textit{General case ($d_s \ge 1$).} The first-order spurious shift contribution satisfies
$$
\big|\tilde{\theta}_{s,\mathrm{mix}}^{\star\top}\delta\mu_s\big|
\;\le\;
\|\tilde{\theta}_{s}^{\star}\|_{\Sigma_{ss}^{(P)}}
\cdot
\|\delta\mu_s\|_{(\Sigma_{ss}^{(P)})^{-1}},
$$
and the right-hand side is controlled by the same-matrix norm shrinkage established in the proof below.

\textbf{(iii) Deployment bound.}
Let $\hat{\theta}^{\mathrm{mix}}$ be the ridge-regularized estimator trained on $P_{\mathrm{mix}}$ with $n$ samples. Then, with probability at least $1-\delta$,
$$
\mathrm{SubOpt}_Q(\hat{\theta}^{\mathrm{mix}})
\le
\underbrace{\tilde{J}^{(Q)}(\theta_Q^\star)
-
\tilde{J}^{(Q)}(\theta^\star_{\mathrm{mix}})}_{\text{reduced shift}}
+
\underbrace{G_{Q,\mathrm{mix}} \Gamma_{n,\mathrm{mix}} + \frac{\kappa_\Pi}{2}\Gamma_{n,\mathrm{mix}}^2}_{\text{estimation}}.
$$
\end{theorem}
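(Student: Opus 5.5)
The plan is to reduce all three parts to the closed-form equilibria $\tilde\theta^\star=\frac{2}{\beta}(\Sigma^{(P)})^{-1}\mu^{(P)}$ and $\tilde\theta^\star_{\mathrm{mix}}=\frac{2\alpha}{\beta}(\Sigma^{\mathrm{mix}})^{-1}\mu^{(P)}$ from \eqref{appendix-eq:mixed-equilibrium}. In block form these read $\tilde\theta^\star_s=\frac{2}{\beta}(\Sigma^{(P)}_{ss})^{-1}b_s^{(P)}$ and $\tilde\theta^\star_{s,\mathrm{mix}}=\frac{2\alpha}{\beta}(\Sigma^{\mathrm{mix}}_{ss})^{-1}b_s^{\mathrm{mix}}$, where $b_s^{(P)}:=b_s(\Sigma^{(P)},\mu^{(P)})$. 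Since the mixed system has right-hand side $\alpha\mu^{(P)}$, I first note that $b_s^{\mathrm{mix}}$ is defined with the mixed blocks but the unscaled $\mu^{(P)}$, and the factor $\alpha$ stays outside.

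\textbf{Part (i).} With $v=\tilde\theta^\star_{s,\mathrm{mix}}$, I compute
\[
\|v\|^2_{\Sigma^{\mathrm{mix}}_{ss}}=\tfrac{4\alpha^2}{\beta^2}\,(b_s^{\mathrm{mix}})^\top(\Sigma^{\mathrm{mix}}_{ss})^{-1}b_s^{\mathrm{mix}}.
\]
The key matrix inequality is $\Sigma^{\mathrm{mix}}_{ss}=\alpha\Sigma^{(P)}_{ss}+(1-\alpha)\Sigma^{\mathrm{tie}}_{ss}\succeq\alpha\Sigma^{(P)}_{ss}$. By operator monotonicity of inversion this gives $(\Sigma^{\mathrm{mix}}_{ss})^{-1}\preceq\alpha^{-1}(\Sigma^{(P)}_{ss})^{-1}$. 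Hence
\[
\|v\|^2_{\Sigma^{\mathrm{mix}}_{ss}}\le \tfrac{4\alpha}{\beta^2}\,(b_s^{\mathrm{mix}})^\top(\Sigma^{(P)}_{ss})^{-1}b_s^{\mathrm{mix}}.
\]
Assumption~\ref{assumption:driving-term} then bounds this by $\frac{4\alpha}{\beta^2}(b_s^{(P)})^\top(\Sigma^{(P)}_{ss})^{-1}b_s^{(P)}=\alpha\|\tilde\theta_s^\star\|^2_{\Sigma^{(P)}_{ss}}\le\|\tilde\theta_s^\star\|^2_{\Sigma^{(P)}_{ss}}$, using $\alpha<1$. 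Strictness follows from strictness in the Loewner step along the direction $(\Sigma^{(P)}_{ss})^{-1/2}b_s^{\mathrm{mix}}$ whenever $\Sigma^{\mathrm{tie}}_{ss}$ acts there, or simply from $\alpha<1$ when $\tilde\theta^\star_s\neq0$.

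\textbf{Part (ii).} In the general case I apply Cauchy--Schwarz in the $\Sigma^{(P)}_{ss}$ geometry:
\[
|v^\top\delta\mu_s|\le\|v\|_{\Sigma^{(P)}_{ss}}\|\delta\mu_s\|_{(\Sigma^{(P)}_{ss})^{-1}}.
\]
It remains to show $\|v\|_{\Sigma^{(P)}_{ss}}\le\|\tilde\theta^\star_s\|_{\Sigma^{(P)}_{ss}}$. This is the one nonroutine step, because part (i) controls $v$ in the mixed norm, not the $P$ norm. I would handle it by writing $\|v\|^2_{\Sigma^{(P)}_{ss}}=\frac{4\alpha^2}{\beta^2}(b^{\mathrm{mix}}_s)^\top M b^{\mathrm{mix}}_s$ with $M=(\Sigma^{\mathrm{mix}}_{ss})^{-1}\Sigma^{(P)}_{ss}(\Sigma^{\mathrm{mix}}_{ss})^{-1}$. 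Since $\alpha\Sigma^{(P)}_{ss}\preceq\Sigma^{\mathrm{mix}}_{ss}$, the whitened operator $K:=(\Sigma^{(P)}_{ss})^{1/2}(\Sigma^{\mathrm{mix}}_{ss})^{-1}(\Sigma^{(P)}_{ss})^{1/2}$ satisfies $K\preceq\alpha^{-1}I$, so $K^2\preceq\alpha^{-1}K\preceq\alpha^{-2}I$. This yields $\alpha^2M\preceq(\Sigma^{(P)}_{ss})^{-1}$, and the driving-term assumption then finishes the bound. Via part (iii) this is the step I flag as the main obstacle, and the whitening argument is what I would try first.

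In the scalar case everything commutes. Then $|v|=\frac{2\alpha|b_s^{\mathrm{mix}}|}{\beta(\alpha\Sigma_{ss}+(1-\alpha)\Sigma^{\mathrm{tie}}_{ss})}\le\frac{2|b_s^{(P)}|}{\beta\Sigma_{ss}}=|\tilde\theta^\star_s|$ under the monotonicity hypothesis. Multiplying by $|\delta\mu_s|$ and invoking Proposition~\ref{prop:spurious-drives-shift} for the $O(\beta^2\|\tilde\theta\|^2)$ remainder gives the claim.

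\textbf{Part (iii).} I rerun the proof of Theorem~\ref{appendix-thm:deployment-bound} verbatim with $P$ replaced by $P_{\mathrm{mix}}$ and $\theta_{\mathrm{train}}$ by $\theta^\star_{\mathrm{mix}}$. The decomposition, smoothness step, Cauchy--Schwarz in $H_{P_{\mathrm{mix}}}$, geometry transfer and concentration (Assumption~\ref{appendix-assumption:estimation} for the mixed estimator, with rate $\Gamma_{n,\mathrm{mix}}$) all go through unchanged. The only check is that geometry transfer still holds relative to $H_{P_{\mathrm{mix}}}$. Adding tie curvature only increases the spurious block, while the causal block is scaled by $\alpha$, so transfer holds with $\kappa_\Pi$ replaced by at most $\kappa_\Pi/\alpha$. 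I would either absorb this factor into $\kappa_\Pi$ or state it as part of the hypothesis.
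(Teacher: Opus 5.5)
Your proposal is correct and follows the paper's proof essentially step for step: the Loewner bound $\Sigma^{\mathrm{mix}}_{ss}\succeq\alpha\Sigma^{(P)}_{ss}$ plus Assumption~\ref{assumption:driving-term} for (i), the same-matrix bound $\alpha^2(\Sigma^{\mathrm{mix}}_{ss})^{-1}\Sigma^{(P)}_{ss}(\Sigma^{\mathrm{mix}}_{ss})^{-1}\preceq(\Sigma^{(P)}_{ss})^{-1}$ followed by Cauchy--Schwarz for (ii) (your whitening by $(\Sigma^{(P)}_{ss})^{1/2}$ is equivalent to the paper's congruence by $(\Sigma^{\mathrm{mix}}_{ss})^{-1/2}$), and a verbatim rerun of Theorem~\ref{appendix-thm:deployment-bound} for (iii). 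Your two extra remarks are valid refinements rather than departures: first, strictness in (i) already follows from $\alpha<1$ whenever $\tilde\theta^\star_s\neq 0$; second, geometry transfer must be re-checked against $H_{P_{\mathrm{mix}}}$, which the paper silently assumes with the same $\kappa_\Pi$ (your $\kappa_\Pi/\alpha$ is only approximate, since $H_{P_{\mathrm{mix}}}$ is evaluated at $\theta^\star_{\mathrm{mix}}$ rather than $\theta_{\mathrm{train}}$).
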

\begin{proof}
Let $P_{\mathrm{mix}}:=\alpha P+(1-\alpha)P_{\mathrm{tie}}$ be the mixed training distribution, and let $\theta_{\mathrm{mix}}^\star$ denote the corresponding population optimizer (equilibrium) in the local regime. Let $\theta^\star$ denote the strict-only population optimizer under $P$. 

We use the mixed linearized equilibrium derived in Appendix~\ref{appendix:mixed-equilibrium}:
\begin{equation}
\label{appendix-eq:mix-eq-global}
\tilde\theta_{\mathrm{mix}}^\star
=
\frac{2\alpha}{\beta}\,(\Sigma^{\mathrm{mix}})^{-1}\mu^{(P)},
\qquad
\Sigma^{\mathrm{mix}}:=\alpha\Sigma^{(P)}+(1-\alpha)\Sigma^{\mathrm{tie}},
\end{equation}
and the strict-only equilibrium
\begin{equation}
\label{appendix-eq:strict-eq}
\tilde\theta^\star
=
\frac{2}{\beta}\,(\Sigma^{(P)})^{-1}\mu^{(P)}.
\end{equation}
From the Schur complement decomposition (Appendix~\ref{appendix:mixed-equilibrium}), the spurious components satisfy
$$
\tilde\theta_{s,\mathrm{mix}}^\star = \frac{2\alpha}{\beta}\,(\Sigma_{ss}^{\mathrm{mix}})^{-1}\,b_s^{\mathrm{mix}},
\qquad
\tilde\theta_s^\star = \frac{2}{\beta}\,(\Sigma_{ss}^{(P)})^{-1}\,b_s^{(P)},
$$
where $b_s^{\mathrm{mix}} := b_s(\Sigma^{\mathrm{mix}}, \mu^{(P)})$ and $b_s^{(P)} := b_s(\Sigma^{(P)}, \mu^{(P)})$.

\paragraph{Proof of Part (i): spurious weight reduction.}

\paragraph{Step 1: PSD inverse bound.}
Since $\Sigma_{ss}^{\mathrm{mix}} = \alpha\,\Sigma_{ss}^{(P)} + (1-\alpha)\,\Sigma_{ss}^{\mathrm{tie}} \succeq \alpha\,\Sigma_{ss}^{(P)}$, we have $(\Sigma_{ss}^{\mathrm{mix}})^{-1} \preceq \frac{1}{\alpha}(\Sigma_{ss}^{(P)})^{-1}$. Therefore
\begin{align*}
\|\tilde\theta_{s,\mathrm{mix}}^\star\|_{\Sigma_{ss}^{\mathrm{mix}}}^2
&= \left(\frac{2\alpha}{\beta}\right)^2 (b_s^{\mathrm{mix}})^\top (\Sigma_{ss}^{\mathrm{mix}})^{-1}\,b_s^{\mathrm{mix}} \\
&\le \left(\frac{2\alpha}{\beta}\right)^2 \cdot \frac{1}{\alpha}\,(b_s^{\mathrm{mix}})^\top (\Sigma_{ss}^{(P)})^{-1}\,b_s^{\mathrm{mix}} \\
&= \alpha\left(\frac{2}{\beta}\right)^2 (b_s^{\mathrm{mix}})^\top (\Sigma_{ss}^{(P)})^{-1}\,b_s^{\mathrm{mix}}.
\end{align*}

\paragraph{Step 2: Driving term stability.}
By Assumption~\ref{assumption:driving-term},
$$
(b_s^{\mathrm{mix}})^\top (\Sigma_{ss}^{(P)})^{-1}\,b_s^{\mathrm{mix}}
\;\le\;
(b_s^{(P)})^\top (\Sigma_{ss}^{(P)})^{-1}\,b_s^{(P)}.
$$

\paragraph{Combining.} Using $\alpha \le 1$:
$$
\|\tilde\theta_{s,\mathrm{mix}}^\star\|_{\Sigma_{ss}^{\mathrm{mix}}}^2
\;\le\;
\alpha \left(\frac{2}{\beta}\right)^2 (b_s^{(P)})^\top (\Sigma_{ss}^{(P)})^{-1}\,b_s^{(P)}
\;\le\;
\|\tilde\theta_s^\star\|_{\Sigma_{ss}^{(P)}}^2.
$$

\paragraph{Strictness.} If $\Sigma_{ss}^{\mathrm{tie}}$ adds positive curvature along a nonzero component of $b_s^{\mathrm{mix}}$, the PSD bound in Step~1 is strict along that direction, yielding a strict inequality.

\paragraph{Proof of Part (ii):}

\paragraph{Intermediate result: same-matrix norm reduction.}
We first establish that
\begin{equation}
\label{appendix-eq:same-matrix-norm}
\|\tilde{\theta}_{s,\mathrm{mix}}^\star\|_{\Sigma_{ss}^{(P)}}
\;\le\;
\|\tilde{\theta}_{s}^{\star}\|_{\Sigma_{ss}^{(P)}}.
\end{equation}
Let $A := \Sigma_{ss}^{\mathrm{mix}}$ and $B := \Sigma_{ss}^{(P)}$. From the equilibrium expression:
$$
\|\tilde{\theta}_{s,\mathrm{mix}}^\star\|_{B}^2
=
\left(\frac{2\alpha}{\beta}\right)^2
(b_s^{\mathrm{mix}})^\top A^{-1} B\, A^{-1}\, b_s^{\mathrm{mix}}.
$$
Since $A = \alpha B + (1-\alpha)\Sigma_{ss}^{\mathrm{tie}} \succeq \alpha B$, we have $B \preceq \frac{1}{\alpha}A$. Applying congruence by $A^{-1/2}$:
$$
A^{-1/2}\,B\,A^{-1/2} \preceq \frac{1}{\alpha}\,I,
$$
so
$$
A^{-1}\,B\,A^{-1}
=
A^{-1/2}\!\left(A^{-1/2}\,B\,A^{-1/2}\right)\!A^{-1/2}
\preceq
\frac{1}{\alpha}\,A^{-1}
\preceq
\frac{1}{\alpha^2}\,B^{-1},
$$
where the last step uses $A^{-1} \preceq \frac{1}{\alpha}B^{-1}$ from the PSD inverse bound in Part~(i). Substituting:
$$
\|\tilde{\theta}_{s,\mathrm{mix}}^\star\|_{B}^2
\le
\left(\frac{2\alpha}{\beta}\right)^2 \cdot \frac{1}{\alpha^2}\,
(b_s^{\mathrm{mix}})^\top B^{-1}\, b_s^{\mathrm{mix}}
=
\left(\frac{2}{\beta}\right)^2
(b_s^{\mathrm{mix}})^\top B^{-1}\, b_s^{\mathrm{mix}}.
$$
By Assumption~\ref{assumption:driving-term}:
$$
(b_s^{\mathrm{mix}})^\top B^{-1}\, b_s^{\mathrm{mix}}
\le
(b_s^{(P)})^\top B^{-1}\, b_s^{(P)},
$$
so
$$
\|\tilde{\theta}_{s,\mathrm{mix}}^\star\|_{\Sigma_{ss}^{(P)}}^2
\le
\left(\frac{2}{\beta}\right)^2
(b_s^{(P)})^\top (\Sigma_{ss}^{(P)})^{-1}\, b_s^{(P)}
=
\|\tilde{\theta}_{s}^{\star}\|_{\Sigma_{ss}^{(P)}}^2,
$$
establishing~\eqref{appendix-eq:same-matrix-norm}.
 
\paragraph{Scalar case ($d_s = 1$).}
When $d_s = 1$, all spurious quantities are scalar. From the equilibrium expressions:
$$
|\tilde{\theta}_{s,\mathrm{mix}}^\star|
=
\frac{2\alpha}{\beta}\,\frac{|b_s^{\mathrm{mix}}|}{\Sigma_{ss}^{\mathrm{mix}}},
\qquad
|\tilde{\theta}_s^\star|
=
\frac{2}{\beta}\,\frac{|b_s^{(P)}|}{\Sigma_{ss}^{(P)}}.
$$
Since $\Sigma_{ss}^{\mathrm{mix}} = \alpha\,\Sigma_{ss}^{(P)} + (1-\alpha)\,\Sigma_{ss}^{\mathrm{tie}} \ge \alpha\,\Sigma_{ss}^{(P)}$:
$$
|\tilde{\theta}_{s,\mathrm{mix}}^\star|
\le
\frac{2\alpha}{\beta}\,\frac{|b_s^{\mathrm{mix}}|}{\alpha\,\Sigma_{ss}^{(P)}}
=
\frac{2}{\beta}\,\frac{|b_s^{\mathrm{mix}}|}{\Sigma_{ss}^{(P)}}.
$$
Under the condition $|b_s^{\mathrm{mix}}| \le |b_s^{(P)}|$:
$$
|\tilde{\theta}_{s,\mathrm{mix}}^\star|
\le
\frac{2}{\beta}\,\frac{|b_s^{(P)}|}{\Sigma_{ss}^{(P)}}
=
|\tilde{\theta}_s^\star|.
$$
The shift inequality follows immediately:
$$
|\tilde{\theta}_{s,\mathrm{mix}}^\star\,\delta\mu_s|
=
|\tilde{\theta}_{s,\mathrm{mix}}^\star|\,|\delta\mu_s|
\le
|\tilde{\theta}_s^\star|\,|\delta\mu_s|
=
|\tilde{\theta}_s^\star\,\delta\mu_s|.
$$
 
\paragraph{General case ($d_s \ge 1$).}
For general $d_s$, Cauchy--Schwarz in the $\Sigma_{ss}^{(P)}$-inner product gives
$$
|\tilde{\theta}_{s,\mathrm{mix}}^{\star\top}\delta\mu_s|
\le
\|\tilde{\theta}_{s,\mathrm{mix}}^\star\|_{\Sigma_{ss}^{(P)}}
\cdot
\|\delta\mu_s\|_{(\Sigma_{ss}^{(P)})^{-1}}.
$$
By the same-matrix norm reduction~\eqref{appendix-eq:same-matrix-norm}:
$$
|\tilde{\theta}_{s,\mathrm{mix}}^{\star\top}\delta\mu_s|
\le
\|\tilde{\theta}_{s}^{\star}\|_{\Sigma_{ss}^{(P)}}
\cdot
\|\delta\mu_s\|_{(\Sigma_{ss}^{(P)})^{-1}}.
$$
This bounds the mixed-training shift by the strict-only worst-case envelope: the right-hand side equals the maximum of $|\tilde{\theta}_s^{\star\top}v|$ over all deployment shifts $v$ with $\|v\|_{(\Sigma_{ss}^{(P)})^{-1}} = \|\delta\mu_s\|_{(\Sigma_{ss}^{(P)})^{-1}}$, so tie training guarantees the mixed model stays within this envelope.

\paragraph{Proof of Part (iii): deployment bound.}
Apply Theorem~\ref{appendix-thm:deployment-bound} with the training distribution replaced by $P_{\mathrm{mix}}$ and the corresponding population optimizer $\theta_{\mathrm{mix}}^\star$ and ridge-MLE $\hat\theta^{\mathrm{mix}}$. The same proof yields, with probability $1-\delta$,
$$
\mathrm{SubOpt}_Q(\hat{\theta}^{\mathrm{mix}})
\le
\underbrace{\tilde{J}^{(Q)}(\theta_Q^\star) - \tilde{J}^{(Q)}(\theta_{\mathrm{mix}}^\star)}_{\text{shift}(P_{\mathrm{mix}})}
+
\underbrace{G_{Q,\mathrm{mix}} \Gamma_{n,\mathrm{mix}} + \frac{\kappa_\Pi}{2}\Gamma_{n,\mathrm{mix}}^2}_{\text{estimation}(P_{\mathrm{mix}})},
$$
where $\Gamma_{n,\mathrm{mix}}$ is the same self-normalized radius but computed with the mixed-training curvature $H_{P_{\mathrm{mix}}}$ (and the same $\beta,\lambda,B$). This proves Part (iii).
\end{proof}
% ============================
% quantitative reduction bound
% ============================
\subsection{Quantitative Reduction Bound} \label{appendix:quantitative-reduction-bound}
We conclude this appendix by providing the full proof of Corollary~\ref{cor:quantitative}. For completeness, we restate the corollary below and then provide the proof. 
\begin{corollary}[Quantitative reduction under isotropic ties]
\label{appendix-cor:quantitative}
Under the conditions of Theorem~\ref{appendix-thm:tie-training}, if ties are isotropic, i.e., $\Sigma^{\mathrm{tie}}_{ss}=\sigma^2 I_{d_s}$, and $\Sigma^{(P)}_{ss} = \lambda_0 I_{d_s}$, then
$$
\frac{\|\tilde{\theta}_{s,\mathrm{mix}}^\star\|_2}{\|\tilde{\theta}_s^\star\|_2}
\;\le\;
\frac{\alpha\lambda_0}{\alpha\lambda_0 + (1-\alpha)\sigma^2}.
$$
\end{corollary}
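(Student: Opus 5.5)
The plan is to reduce the corollary to a direct comparison of the two closed-form spurious components from the proof of Theorem~\ref{appendix-thm:tie-training}:
$$
\tilde\theta_{s,\mathrm{mix}}^\star=\tfrac{2\alpha}{\beta}(\Sigma_{ss}^{\mathrm{mix}})^{-1}b_s^{\mathrm{mix}}, \qquad \tilde\theta_s^\star=\tfrac{2}{\beta}(\Sigma_{ss}^{(P)})^{-1}b_s^{(P)}.
$$
Under the isotropic hypotheses, both spurious blocks are scalar multiples of the identity. We have $\Sigma_{ss}^{(P)}=\lambda_0 I$ and $\Sigma_{ss}^{\mathrm{mix}}=(\alpha\lambda_0+(1-\alpha)\sigma^2)I$. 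Writing $\gamma:=\alpha\lambda_0+(1-\alpha)\sigma^2$, this gives
$$
\|\tilde\theta_{s,\mathrm{mix}}^\star\|_2=\tfrac{2\alpha}{\beta\gamma}\|b_s^{\mathrm{mix}}\|_2,\qquad \|\tilde\theta_s^\star\|_2=\tfrac{2}{\beta\lambda_0}\|b_s^{(P)}\|_2 .
$$
The ratio is therefore
$$
\frac{\|\tilde\theta_{s,\mathrm{mix}}^\star\|_2}{\|\tilde\theta_s^\star\|_2}=\frac{\alpha\lambda_0}{\gamma}\cdot\frac{\|b_s^{\mathrm{mix}}\|_2}{\|b_s^{(P)}\|_2}.
$$
The whole claim then reduces to showing $\|b_s^{\mathrm{mix}}\|_2\le\|b_s^{(P)}\|_2$.

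For that step I will invoke Assumption~\ref{assumption:driving-term}, which is among the hypotheses inherited from Theorem~\ref{appendix-thm:tie-training}. Because $(\Sigma_{ss}^{(P)})^{-1}=\lambda_0^{-1}I$, the weighted inequality in that assumption reads
$$
\lambda_0^{-1}\|b_s^{\mathrm{mix}}\|_2^2\le\lambda_0^{-1}\|b_s^{(P)}\|_2^2 .
$$
This is exactly the Euclidean comparison we need. Substituting it into the ratio yields the stated bound.

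I will also handle the degenerate case. If $b_s^{(P)}=0$, then $\tilde\theta_s^\star=0$, and the assumption forces $b_s^{\mathrm{mix}}=0$. Both sides are then zero, so the ratio statement is vacuous, or holds under the convention $0/0$. I will note this case explicitly.

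The only step with real content is the driving-term comparison. The isotropy of $\Sigma_{ss}^{(P)}$ is what turns the weighted assumption into a Euclidean one, so no independent control of how the Schur complement $S_c^{\mathrm{mix}}$ changes is required.

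It is worth remarking that in the pure mean-bias case $\Sigma_{cs}^{(P)}=0$, we have $b_s^{\mathrm{mix}}=b_s^{(P)}=\mu_s^{(P)}$. There the bound holds with equality and no extra assumption is needed. This matches the linear experiments, where the empirical ratio tracks the formula.
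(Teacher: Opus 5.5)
Your proposal is correct and matches the paper's proof step for step. Isotropy makes both spurious blocks scalar multiples of the identity, so the ratio factors as $\frac{\alpha\lambda_0}{\alpha\lambda_0+(1-\alpha)\sigma^2}\cdot\|b_s^{\mathrm{mix}}\|_2/\|b_s^{(P)}\|_2$, and with $\Sigma_{ss}^{(P)}=\lambda_0 I$, Assumption~\ref{assumption:driving-term} becomes exactly the Euclidean contraction $\|b_s^{\mathrm{mix}}\|_2\le\|b_s^{(P)}\|_2$. Your explicit handling of the degenerate case $b_s^{(P)}=0$ is a small addition the paper leaves out.
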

\begin{proof}
Under isotropy,
$$
\Sigma_{ss}^{(P)} = \lambda_0 I,
\qquad
\Sigma_{ss}^{\mathrm{tie}} = \sigma^2 I,
$$
so
$$
\Sigma_{ss}^{\mathrm{mix}}
=
\alpha\lambda_0 I + (1-\alpha)\sigma^2 I
=
(\alpha\lambda_0 + (1-\alpha)\sigma^2) I.
$$
Hence
$$
(\Sigma_{ss}^{\mathrm{mix}})^{-1}
=
\frac{1}{\alpha\lambda_0 + (1-\alpha)\sigma^2} I.
$$

From the equilibrium expressions,
$$
\tilde\theta_{s,\mathrm{mix}}^\star
=
\frac{2\alpha}{\beta}
\frac{1}{\alpha\lambda_0 + (1-\alpha)\sigma^2}
\, b_s^{\mathrm{mix}},
\qquad
\tilde\theta_s^\star
=
\frac{2}{\beta}
\frac{1}{\lambda_0}
\, b_s^{(P)}.
$$
Therefore
$$
\frac{\|\tilde\theta_{s,\mathrm{mix}}^\star\|_2}{\|\tilde\theta_s^\star\|_2}
=
\frac{\alpha\lambda_0}{\alpha\lambda_0 + (1-\alpha)\sigma^2}
\cdot
\frac{\|b_s^{\mathrm{mix}}\|_2}{\|b_s^{(P)}\|_2}.
$$
Under Assumption~\ref{assumption:driving-term}, which reduces to Euclidean contraction in the isotropic case,
$$
\|b_s^{\mathrm{mix}}\|_2 \le \|b_s^{(P)}\|_2,
$$
yielding the claimed bound.
\end{proof}

\newpage
% =========================
% appendix: shift scenarios
% =========================
\section{Distribution Shift Scenarios} \label{appendix:shift-scenarios}
This appendix provides detailed analysis of the three shift scenarios from Section~\ref{sec:consequence}. Throughout, we assume $\tilde \theta_{s,\mathrm{train}} \neq 0$ and stable causal statistics ($\mu_c^{(Q)} \approx \mu_c^{(P)}$).
% ===========
% suppression
% ===========
\subsection{Suppression ($\mu_s^{(Q)} = 0$)}
\label{appendix:suppression}
Spurious correlation absent at deployment while $\mu_s^{(P)} \neq 0$.

\paragraph{Margin behavior.}
$m_Q^{\mathrm{spur}} = \beta\tilde{\theta}_{s,\mathrm{train}}^\top \cdot 0 = 0$. Systematic spurious bias vanishes.

\paragraph{Variance-induced accuracy degradation.}
Zero mean does not imply robustness. The spurious variance
$$
\beta^2 \tilde{\theta}_{s,\mathrm{train}}^\top \Sigma_{ss}^{(Q)} \tilde{\theta}_{s,\mathrm{train}} > 0
$$
adds noise to predictions. 
\paragraph{Examples.}
\begin{itemize}
    \item \textbf{Hotels:} Training has longer reviews for quality hotels; test set balanced $\Rightarrow$ length adds noise.
    \item \textbf{Code:} Training has verbose correct solutions; test balanced $\Rightarrow$ concise correct solutions underrated.
    \item \textbf{Safety:} Training has hedged safe responses; test balanced $\Rightarrow$ direct safe responses underrated.
\end{itemize}

% ====================
% adversarial reversal
% ====================
\subsection{Adversarial Reversal ($\mu_s^{(Q)} = -\mu_s^{(P)}$)}
\label{appendix:adversarial}

Spurious correlation flips sign.

\paragraph{Margin behavior.}
$$
m_Q^{\mathrm{spur}} = -m_P^{\mathrm{spur}} \quad \Rightarrow \quad \Delta m^{\mathrm{spur}} = -2m_P^{\mathrm{spur}}.
$$
If spurious features helped at training, they hurt equally at deployment.

\paragraph{Worst-case analysis.}
For constrained $\|\mu_s^{(Q)}\| \leq R$:
$$
\min_{\|\mu_s^{(Q)}\| \leq R} m_Q^{\mathrm{spur}} = -\beta\|\tilde{\theta}_{s,\mathrm{train}}\| R,
$$
attained when $\mu_s^{(Q)} \propto -\tilde{\theta}_{s,\mathrm{train}}$ (adversarial direction).
\paragraph{Examples.}
\begin{itemize}
    \item \textbf{Hotels:} US prefers long reviews; international region has long complaint reviews $\Rightarrow$ low-quality preferred.
    \item \textbf{Code:} Expert code is verbose; beginner incorrect code is also verbose $\Rightarrow$ incorrect preferred.
    \item \textbf{Safety:} Safe responses hedge; adversarial unsafe responses hedge more $\Rightarrow$ unsafe rated as safe.
\end{itemize}
% ========
% rotation
% ========
Spurious correlation changes direction (relevant when $d_s > 1$).

\paragraph{Alignment analysis.}
The spurious margin depends on alignment:
$$
m_Q^{\mathrm{spur}} = \beta\|\tilde{\theta}_{s,\mathrm{train}}\| \cdot \|\mu_s^{(Q)}\| \cdot \cos\alpha,
$$
where $\psi = \angle(\tilde{\theta}_{s,\mathrm{train}}, \mu_s^{(Q)})$.

\begin{center}
\begin{tabular}{c|l}
$\cos\psi$ & Effect \\
\hline
$+1$ & Aligned: no degradation \\
$0$ & Orthogonal: spurious noise only \\
$-1$ & Opposite: maximum degradation (reversal)
\end{tabular}
\end{center}

\paragraph{Examples.}
\begin{itemize}
    \item \textbf{Hotels:} Length correlates positively in US, negatively in Europe; star rating correlates oppositely $\Rightarrow$ partial misalignment.
    \item \textbf{Code:} Comment density and line count have different correlation patterns across languages $\Rightarrow$ rotation.
    \item \textbf{Safety:} Formality and hedging evolve differently over time $\Rightarrow$ temporal rotation.
\end{itemize}
\newpage
% ==============================
% appendix: experimental details
% ==============================
\section{Experimental Details and Additional Results} \label{appendix:experimental-details}
This appendix provides a comprehensive empirical validation of the theoretical results presented in the main paper, progressing from settings that exactly match the theory to increasingly realistic and expressive model classes. We begin with linear preference models, where the assumptions of the theory hold and we obtain precise quantitative agreement with predicted bounds and scaling laws. We then move to nonlinear neural networks, where the causal–spurious decomposition is hidden inside nonlinear representations, and finally to large language models trained with DPO on synthetic preference data. While exact quantitative predictions no longer apply beyond the linear regime, we demonstrate that the core qualitative mechanisms identified by the theory, spurious correlation learning, irreducible deployment error under distribution shift, and mitigation via tie training, persist across all stages. Together, these experiments show that the linear analysis captures essential dynamics of preference learning systems even when deployed with rich nonlinear models. Across all experiments, shaded regions denote $\pm 1$ standard deviation. The number of seeds varies by setting, with many runs in the linear and deployment-error experiments and fewer runs for the costly LLM experiments. We report the exact number in the corresponding hyperparameter table for each setting. Code is available at \url{https://github.com/cmoyacal/tie-training}.
% =================
% log-linear models
% =================
\subsection{Linear Models (Theoretical Ground Truth)} \label{appendix:experiments-linear}
\subsubsection{Dataset Construction}
\paragraph{Feature decomposition.} We construct a synthetic preference dataset with an explicit causal--spurious feature decomposition. Each example is represented by a Gaussian feature vector $\phi = [\phi_c;\phi_s]$, where $\phi_c$ denotes causal features and $\phi_s$ denotes spurious features. Causal features determine true preference utility, while spurious features do not affect utility. Under the training distribution $P$, spurious features are correlated with causal features. At deployment, this correlation changes under a shifted distribution $Q$.

\paragraph{Strict preference pairs.} Strict preference pairs are generated by
$$
P(y=1 \mid \Delta\phi) = \sigma(\beta~\theta^{\dagger\top}\Delta\phi),
$$
where $\theta^\dagger$ decomposes into causal and spurious components. In the experiments, we use $\beta = 1.0$. The preferences depend on the full feature difference, inducing spurious correlations when $\phi_s$ is correlated with $\phi_c$. Spurious features are drawn as
$$
\phi_s \sim \mathcal{N}(0,\lambda_0 I),
$$
where $\lambda_0$ controls the variance scale of spurious features in strict data. This construction ensures that spurious cues are predictive during training despite being non-causal.

\paragraph{Tie examples.} Tie examples are defined by a small causal margin
$$
|\theta_c^{\dagger\top}\Delta\phi_c| \le \tau,
$$
which produces pairs with weak causal signal. Labels for ties are randomized as $y \sim \mathrm{Bernoulli}(0.5)$, removing any causal dependence. Spurious features for ties are drawn from
$$
\phi_s \sim \mathcal{N}(0,\sigma^2 I),
$$
where $\sigma^2$ controls the variance injected by tie data. This construction decorrelates spurious features from labels while amplifying spurious variance, creating targeted negative evidence against spurious reliance.

\subsubsection{Model and Objective}

\paragraph{Model class.} We train a log-linear preference model
$$
r_\theta(\phi) = \theta^\top \phi,
$$
which matches the assumptions of the theoretical analysis. The parameter vector $\theta$ decomposes into causal and spurious components. This setting allows direct measurement of spurious reliance through parameter norms. As a result, population predictions can be tested exactly.

\paragraph{Training objective (DPO-style).} Training minimizes a DPO-style pairwise logistic loss
$$
\min_\theta \; \mathbb{E}\!\left[\log\!\left(1 + e^{-\beta y\theta^\top\Delta\phi}\right)\right].
$$
The parameter $\beta$ controls the sharpness of preference separation. We vary the fraction of strict versus tie examples using $\alpha$. This setup exactly matches the assumptions of Theorems~\ref{thm:mechanism}, \ref{thm:deployment-bound}, and \ref{thm:tie-training}. The same parameterization also subsumes RLHF-style reward learning, which corresponds to this objective with $\beta=1$ and zero reference. We show in Appendix~\ref{appendix:RLHF} that our mechanisms and shift vulnerabilities therefore apply to standard reward modeling as well.

\subsubsection{Metrics}

\paragraph{Spurious reliance.} We measure spurious reliance using the norm $\|\theta_s\|$. This quantity directly quantifies how much the learned model relies on spurious features. Because the model is linear, this metric has a clear population interpretation. It provides a precise test of theoretical predictions.

\paragraph{Reduction ratio.} We compare empirical and theoretical reductions in spurious reliance under tie training. Empirically, we compute
$$
r_{\mathrm{emp}} = \frac{\|\theta_{s,\mathrm{mix}}\|}{\|\theta_s\|}.
$$
The theory predicts
$$
r_{\mathrm{th}} = \frac{\alpha \lambda_0}{\alpha \lambda_0 + (1-\alpha)\sigma^2},
$$
which depends on the strict fraction $\alpha$ and the spurious variance ratio $\sigma^2/\lambda_0$. 

\subsubsection{Results}

\paragraph{Spurious correlation learning}

Strict-only training learns nonzero spurious parameters. Figure~\ref{figure:linear-spurious-parameters} compares empirical spurious parameter norms to the population prediction from Theorem~\ref{thm:mechanism}. Empirically, we found that correcting for curvature leads to the correct scaling across $\beta$.

\begin{figure}[t]
\centering
\includegraphics[width=0.48\linewidth]{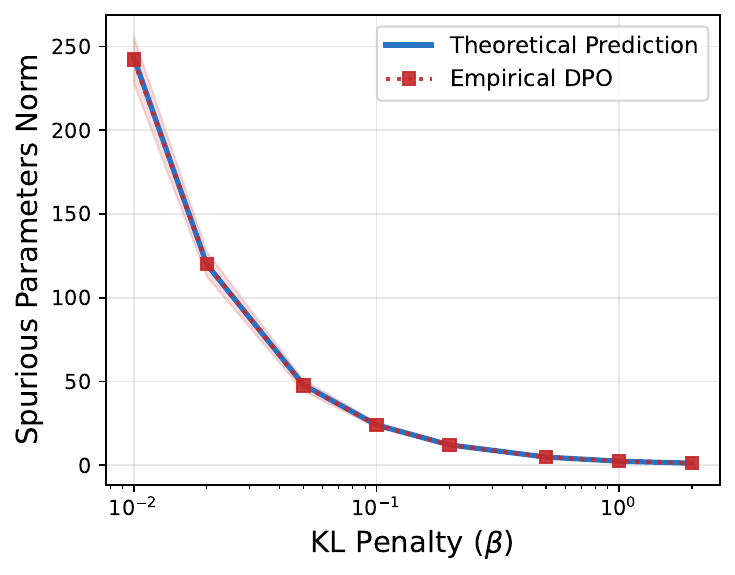}
\hfill
\includegraphics[width=0.48\linewidth]{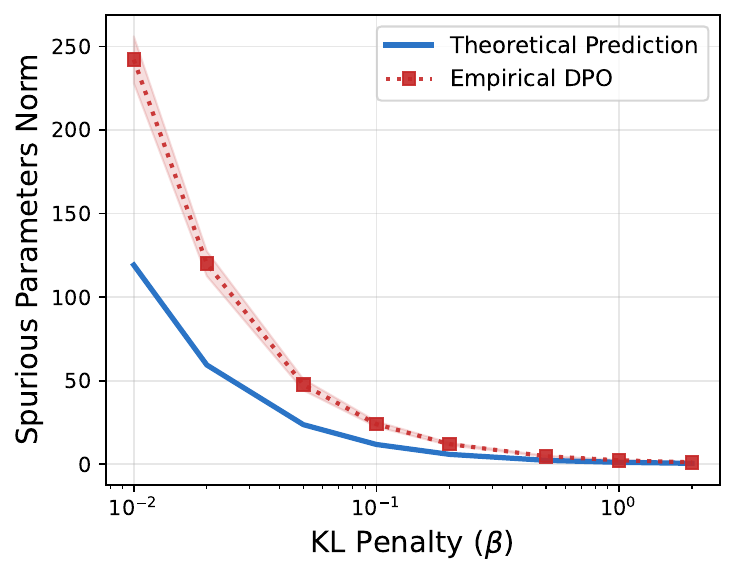}
\caption{Population scaling of spurious parameters in DPO. We compare empirical spurious parameter norms with the population prediction from \textbf{Theorem~\ref{thm:mechanism}.} \textit{Left:} Including curvature yields accurate predictions across $\beta$. \textit{Right:} Ignoring curvature systematically underestimates spurious reliance, leading to large relative error even with infinite data. This confirms that curvature helps correct population scaling when the local regime assumption (Assumption~\ref{appendix-assumption:local}) is not valid.}
\label{figure:linear-spurious-parameters}
\end{figure}

\begin{remark}[Curvature-corrected linear prediction]
The local linear approximation replaces the logit curvature of the logistic loss by its value at the origin, $\sigma'(0)=1/4$, giving $\theta_{\mathrm{lin}} = \tfrac{2}{\beta}\Sigma^{-1}\mu$ with $\mu=\mathbb{E}[\Delta\phi]$ and $\Sigma=\mathbb{E}[\Delta\phi\Delta\phi^\top]$. This is accurate when $|\beta\theta^\top\Delta\phi|$ is small. Away from this regime, the loss no longer has effective logit curvature $1/4$: logistic saturation reduces the average curvature to $\zeta=\mathbb{E}[\sigma(\beta\theta_{\mathrm{DPO}}^\top\Delta\phi)(1-\sigma(\beta\theta_{\mathrm{DPO}}^\top\Delta\phi))]$. We therefore use the curvature-corrected approximation $\theta_{\mathrm{corr}}=\frac{1/4}{\zeta}\theta_{\mathrm{lin}}$. When logits remain near zero, $\zeta\approx 1/4$ and the correction is negligible. In a more saturated regime, $\zeta<1/4$, so the same mean bias $\mu$ produces a larger displacement; the factor $(1/4)/\zeta$ corrects this curvature mismatch while preserving the direction predicted by the linear local theory.
\end{remark}

\paragraph{SGD ablation: monitoring the local regime.}
We perform a stochastic-gradient ablation that exposes the dynamics underlying the equilibrium prediction. We consider three data-generating levels of increasing complexity. In \textsc{no\_BT}, the feature differences are drawn from a Gaussian with nonzero mean and arbitrary covariance. In \textsc{BT\_causal}, features are Gaussian with a correlation $\rho$ between causal and spurious blocks, and the Bradley--Terry annotator depends only on the causal block  ($\theta_s^\dagger = 0$). In \textsc{BT\_full}, features are independent isotropic Gaussian and the Bradley--Terry annotator depends on both blocks ($\theta_s^\dagger \neq 0$). For each level we run SGD on the DPO loss with learning rate $\eta=5\times 10^{-3}$ for $15{,}000$ iterations, repeated over $30$ seeds with a single fixed ground truth. We fix $\beta=0.5$ and use $d_c=d_s=5$ with $n=10{,}000$ training samples per run. To diagnose whether the linearized analysis applies along the trajectory, at every SGD step we record the worst-case magnitude $\max_i \lvert\beta\,\theta_t^\top \Delta\phi_i\rvert$ on a fixed $1{,}000$-point subsample: this is the local-regime indicator, since the first-order Taylor expansion of the sigmoid that underlies the closed form $\tilde\theta^\star=(2/\beta)\Sigma^{-1}\mu$ holds quantitatively while this quantity remains below~$1$. 

Figures~\ref{fig:sgd-noBT}--\ref{fig:sgd-BTfull} report three panels per setting: the spurious-norm trajectory $\lVert\tilde\theta_s(t)\rVert$ against the closed-form predictions, the margin diagnostic $\max_i\lvert\beta\,\theta_t^\top \Delta \phi_i\rvert$ with horizontal references at $0.5$ and $1$, and a bar comparison of the final spurious norm against the Linear, DPO, and (where applicable) curvature-corrected closed forms. In \textsc{no\_BT} (Fig.~\ref{fig:sgd-noBT}), the margin stays below~$1$ throughout training and the SGD trajectory rises toward the Linear/DPO equilibrium, with the empirical norm lying within sampling variability of the closed-form predictions at $15{,}000$ iterations. In \textsc{BT\_causal} (Fig.~\ref{fig:sgd-BTcausal}), the margin grows past the reference threshold during training, exiting the local regime; the trajectory overshoots and then drifts downward, and the final SGD norm lies above the closed-form predictions, indicating that the local regime no longer captures the late-time training dynamics. In \textsc{BT\_full} with curvature correction (Fig.~\ref{fig:sgd-BTfull}), the margin sits well above the boundary of the local regime; the curvature-corrected prediction ($\zeta$-corr) recovers the empirical norm where the uncorrected Linear bar undershoots. Table~\ref{tab:mechanisms-hparams} reports key hyperparameters for the log-linear mechanism experiments.

\begin{table}[t!]
\centering
\caption{Key hyperparameters for the log-linear mechanism experiments. Remaining settings follow the defaults in our released code.}
\label{tab:mechanisms-hparams}
\begin{tabular}{@{}ll@{}}
\toprule
\textbf{Hyperparameter} & \textbf{Value} \\
\midrule
\multicolumn{2}{@{}l}{\textit{Data-generating process}} \\
Causal feature dim $d_c$    & $5$ \\
Spurious feature dim $d_s$  & $5$ \\
Training samples $n$        & $10{,}000$ \\
KL strength $\beta$         & $\{10^{-2},\,10^{-1},\,10^{0}\}$ \\
\midrule
\multicolumn{2}{@{}l}{\textit{Training (SGD ablation)}} \\
Optimizer                   & SGD on DPO loss \\
Learning rate $\eta$        & $5\times10^{-3}$ \\
Iterations                  & $15{,}000$ \\
KL strength $\beta$         & $0.5$ \\
DGP levels                  & \textsc{no\_bt}, \textsc{bt\_causal}, \textsc{bt\_full} \\
Seeds                       & $30$ \\
\bottomrule
\end{tabular}
\end{table}

% --- Figure 1: no_BT ---
\begin{figure}[t]
  \centering
  \begin{subfigure}{0.32\linewidth}
    \includegraphics[width=\linewidth]{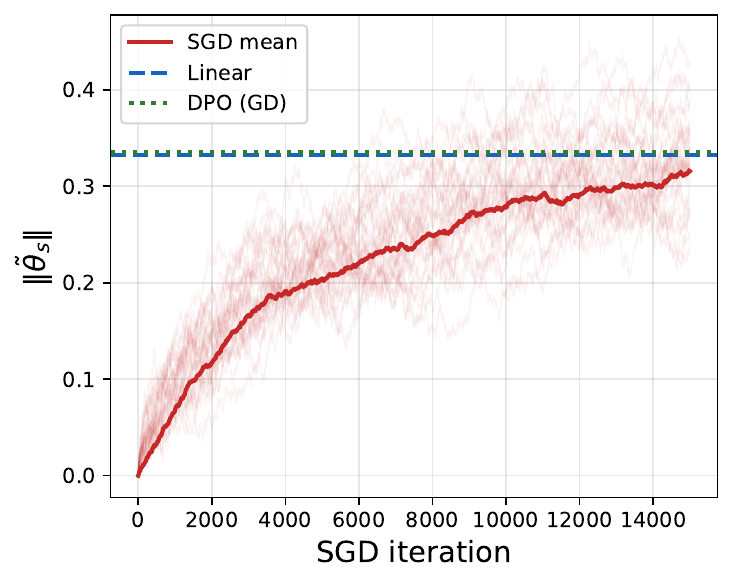}
    \caption{Spurious-norm trajectory.}
  \end{subfigure}\hfill
  \begin{subfigure}{0.32\linewidth}
    \includegraphics[width=\linewidth]{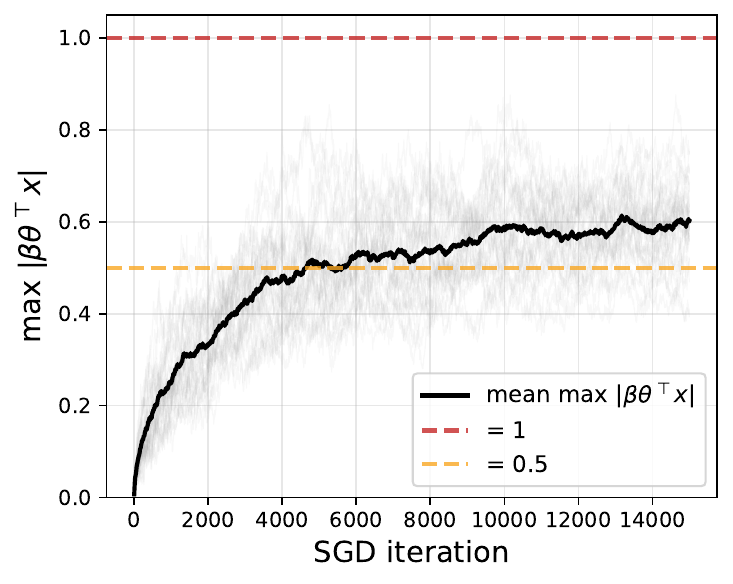}
    \caption{Local-regime margin.}
  \end{subfigure}\hfill
  \begin{subfigure}{0.32\linewidth}
    \includegraphics[width=\linewidth]{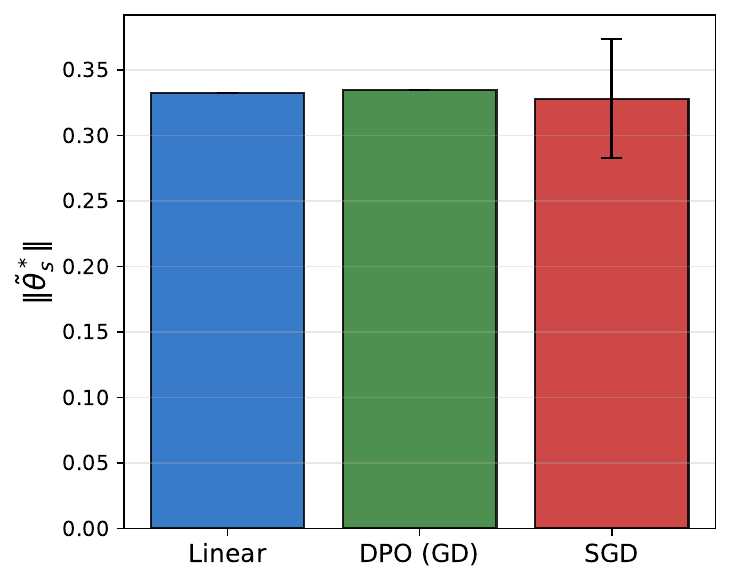}
    \caption{Final-norm comparison.}
  \end{subfigure}
  \caption{SGD ablation, \textsc{no\_BT}. The margin stays below~$1$
  throughout training and the trajectory approaches the Linear/DPO
  prediction.}
  \label{fig:sgd-noBT}
\end{figure}

% --- Figure 2: BT_causal ---
\begin{figure}[t]
  \centering
  \begin{subfigure}{0.32\linewidth}
    \includegraphics[width=\linewidth]{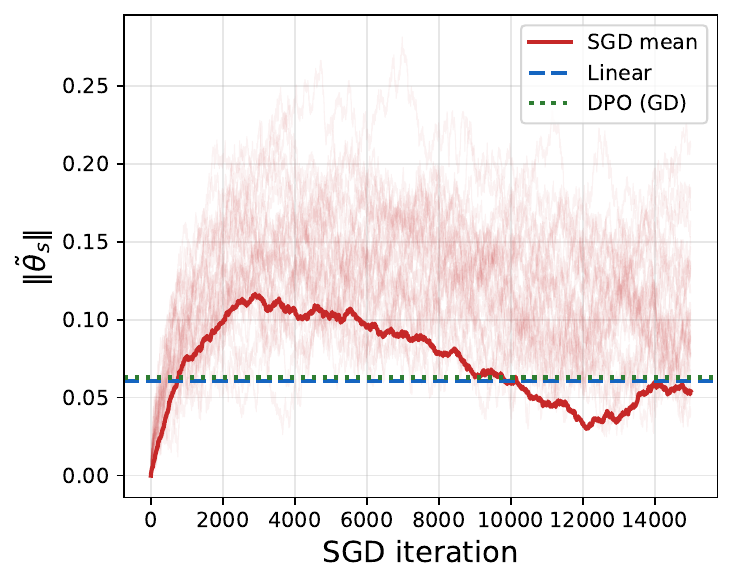}
    \caption{Spurious-norm trajectory.}
  \end{subfigure}\hfill
  \begin{subfigure}{0.32\linewidth}
    \includegraphics[width=\linewidth]{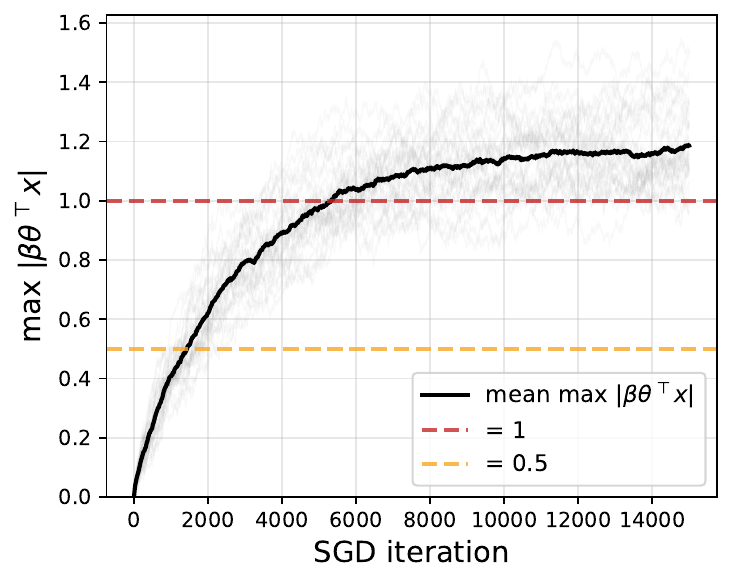}
    \caption{Local-regime margin.}
  \end{subfigure}\hfill
  \begin{subfigure}{0.32\linewidth}
    \includegraphics[width=\linewidth]{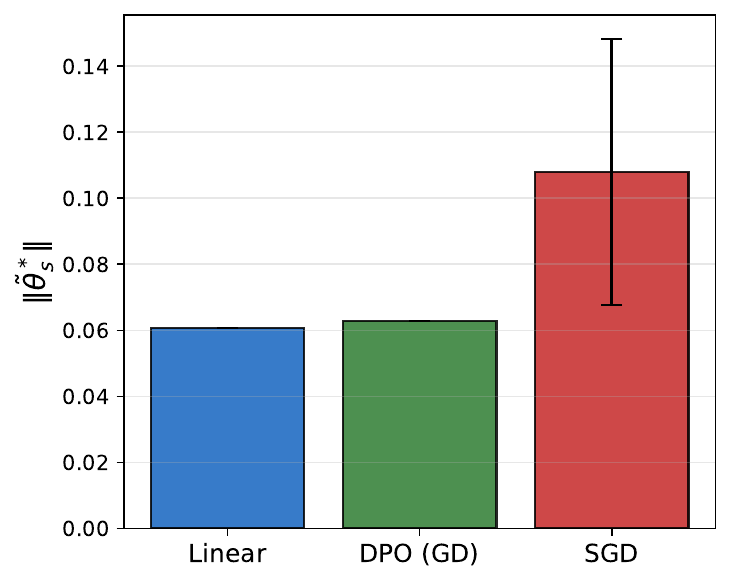}
    \caption{Final-norm comparison.}
  \end{subfigure}
  \caption{SGD ablation, \textsc{BT\_causal}. The margin grows past~$1$, the trajectory overshoots, and the final SGD norm lies above the closed-form predictions, reflecting departure from the local regime.}
  \label{fig:sgd-BTcausal}
\end{figure}

% --- Figure 3: BT_full ---
\begin{figure}[t]
  \centering
  \begin{subfigure}{0.32\linewidth}
    \includegraphics[width=\linewidth]{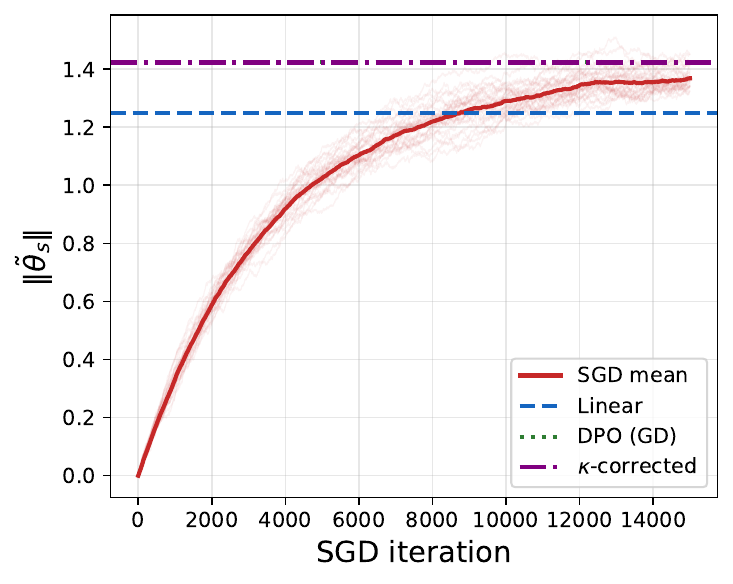}
    \caption{Spurious-norm trajectory.}
  \end{subfigure}\hfill
  \begin{subfigure}{0.32\linewidth}
    \includegraphics[width=\linewidth]{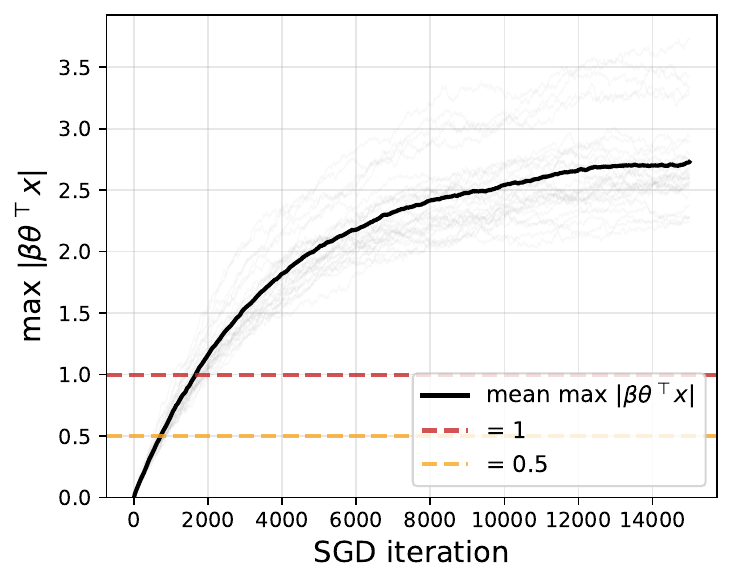}
    \caption{Local-regime margin.}
  \end{subfigure}\hfill
  \begin{subfigure}{0.32\linewidth}
    \includegraphics[width=\linewidth]{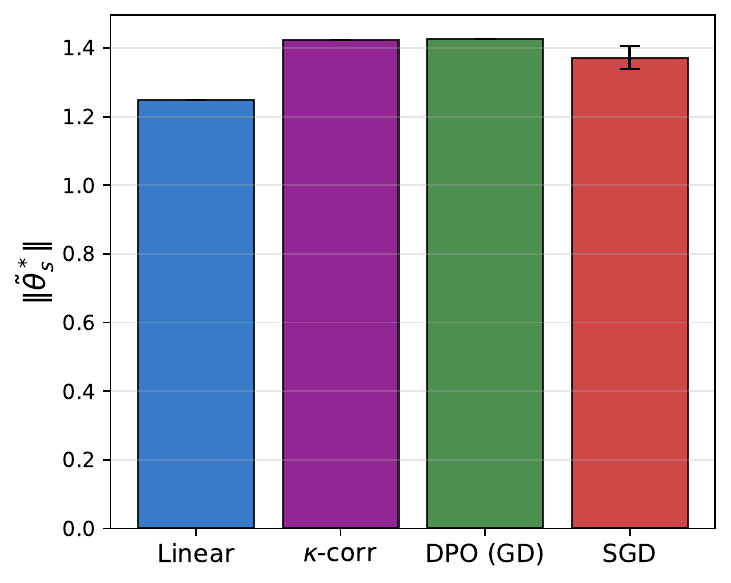}
    \caption{Final-norm comparison.}
  \end{subfigure}
  \caption{SGD ablation, \textsc{BT\_full}. The margin sits well above the boundary of the local regime; the curvature-corrected prediction ($\zeta$-corr) recovers the empirical norm where the uncorrected Linear bar undershoots.}
  \label{fig:sgd-BTfull}
\end{figure}

\paragraph{Deployment error.} Spurious learning induces a deployment vulnerability that persists under distribution shift. Figure~\ref{figure:linear-deployment-accuracy} shows in-distribution ($P$) and out-of-distribution ($Q$) accuracy as a function of training set size $n$, in a setup where the spurious feature flips sign between $P$ and $Q$. In-distribution accuracy grows with $n$ while out-of-distribution accuracy plateaus far below it, illustrating the irreducibility of deployment failure under spurious shift. Figure~\ref{figure:linear-deployment-suboptimality} reports the same phenomenon for $\beta = 1$, validating Theorem~\ref{appendix-thm:deployment-bound}: as $n$ grows, the empirical suboptimality converges to the shift floor $\tilde J^{(Q)}(\theta^\star_Q) - \tilde J^{(Q)}(\theta_{\mathrm{train}})$ while the parameter-space estimation error $\|\hat\theta - \theta_{\mathrm{train}}\|_{H_P}$ decays at the expected $1/\sqrt{n}$ rate. The empirical suboptimality remains below the theoretical upper bound at every $n$, consistent with the theoretical guarantee. Table~\ref{tab:deployment-hparams} reports key hyperparameters for the deployment-error experiments.

\begin{table}[t!]
\centering
\caption{Key hyperparameters for the deployment-error experiments. Remaining settings follow the defaults in our released code.}
\label{tab:deployment-hparams}
\begin{tabular}{@{}ll@{}}
\toprule
\textbf{Hyperparameter} & \textbf{Value} \\
\midrule
\multicolumn{2}{@{}l}{\textit{Data-generating process}} \\
Causal feature dim $d_c$       & $5$ \\
Spurious feature dim $d_s$     & $5$ \\
Spurious correlation $\rho$    & $0.75$ \\
Label model                    & $y \sim \mathrm{Bern}(\sigma(\theta_c^{\star\top} x_c))$ \\
$P$ vs $Q$                     & sign flip on spurious block \\
\midrule
\multicolumn{2}{@{}l}{\textit{Training and evaluation}} \\
Training sizes $n$             & $\{200, 400, \dots, 50{,}000\}$ \\
Test pairs per distribution    & $20{,}000$ \\
Ridge regularization $\lambda$ & $10^{-4}$ \\
Parameter ball radius $B$      & $1$ \\
Confidence $\delta$            & $0.05$ \\
Seeds                          & $20$ \\
\bottomrule
\end{tabular}
\end{table}

\begin{figure}[t]
\centering
\includegraphics[width=0.50\linewidth]{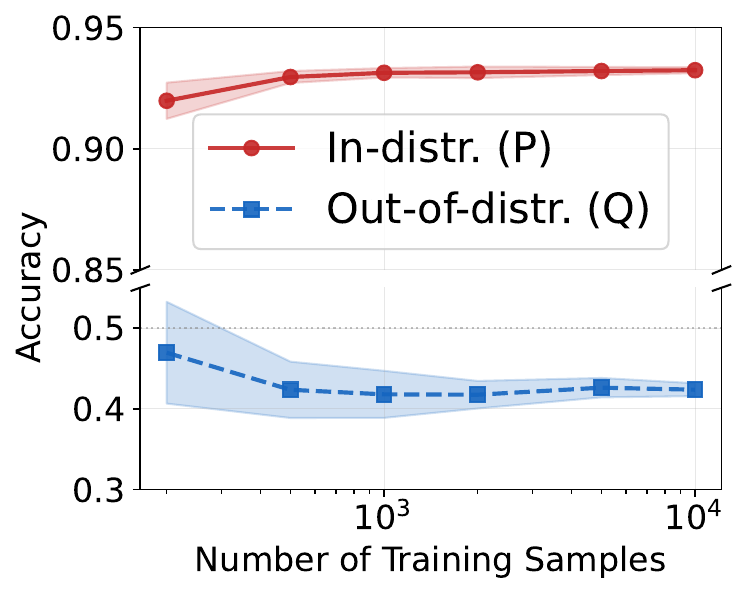}
\caption{In-distribution ($P$) vs out-of-distribution ($Q$) accuracy under spurious shift, as a function of training set size $n$. Both test sets have the same size; under $Q$ the spurious feature flips sign. In-distribution accuracy grows mildly with $n$, while out-of-distribution accuracy plateaus far below it (just below the random-guess line), demonstrating that spurious reliance learned under $P$ cannot be fixed by additional training data from $P$.}
\label{figure:linear-deployment-accuracy}
\end{figure}

\begin{figure}[t]
\centering
\includegraphics[width=0.50\linewidth]{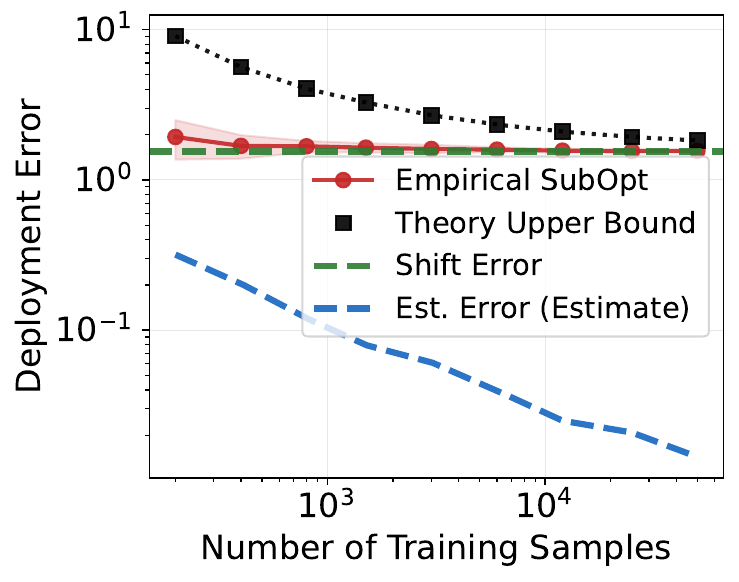}
\caption{Empirical deployment suboptimality and its decomposition under distribution shift, for $\beta = 1$. The figure shows four quantities as a function of the number of training samples $n$: (i) empirical deployment suboptimality $\mathrm{SubOpt}_Q(\hat\theta)$; (ii) shift error $\tilde J^{(Q)}(\theta^\star_Q) - \tilde J^{(Q)}(\theta_{\mathrm{train}})$; (iii) measured parameter estimation error $\|\hat\theta - \theta_{\mathrm{train}}\|_{H_P}$; (iv) theoretical upper bound from Theorem~\ref{appendix-thm:deployment-bound}. As $n$ grows, the estimation error decays at the expected $1/\sqrt{n}$ rate, demonstrating that deployment error is irreducible with additional training data from $P$. The empirical suboptimality remains below the theoretical upper bound at every $n$, consistent with the theoretical guarantee.}
\label{figure:linear-deployment-suboptimality}
\end{figure}

\subsubsection{Tie Training}
Tie training suppresses spurious reliance in a predictable, population-level way. Figure~\ref{figure:linear-theory-bound} plots the theoretical reduction factor $r_{\mathrm{th}}(\alpha)$ for different spurious variance ratios $\sigma^2/\lambda_0$, showing monotone suppression as tie fraction increases. This prediction holds independently of sample size and isolates the irreducible effect of tie training on spurious learning. We use this curve to interpret empirical reductions in $\|\hat\theta_s\|$ across $\alpha$. Table~\ref{tab:tie-training-hparams} reports key hyperparameters for the tie-training experiments.

\begin{table}[t!]
\centering
\caption{Key hyperparameters for the tie-training experiment. Remaining settings follow the
defaults in our released code.}
\label{tab:tie-training-hparams}
\begin{tabular}{@{}ll@{}}
\toprule
\textbf{Hyperparameter} & \textbf{Value} \\
\midrule
\multicolumn{2}{@{}l}{\textit{Data-generating process}} \\
Causal feature dim $d_c$       & $5$ \\
Spurious feature dim $d_s$     & $5$ \\
Training samples $N$           & $8{,}000$ \\
Spurious variance $\lambda_0$  & $1$ \\
Spurious variance ratio $\sigma^2/\lambda_0$ & $5$ \\
KL strength $\beta$            & $0.3$ \\
\midrule
\multicolumn{2}{@{}l}{\textit{Mixing}} \\
Strict fraction $\alpha$       & $\{0.5,\, 0.6,\, 0.7,\, 0.8,\, 0.9,\, 1.0\}$ \\
Tie construction               & soft (spurious-only, random labels) \\
$\ell_2$ regularization        & $0$ \\
Seeds                          & $30$ \\
\bottomrule
\end{tabular}
\end{table}

\begin{figure}[t]
\centering
\includegraphics[width=0.49\linewidth]{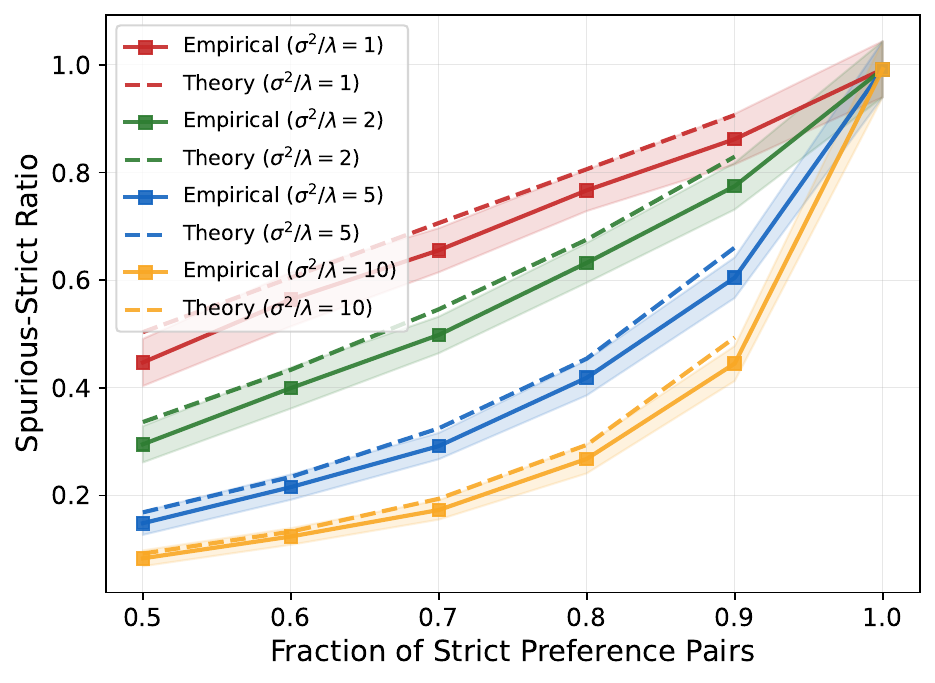}
\hfill
\includegraphics[width=0.49\linewidth]{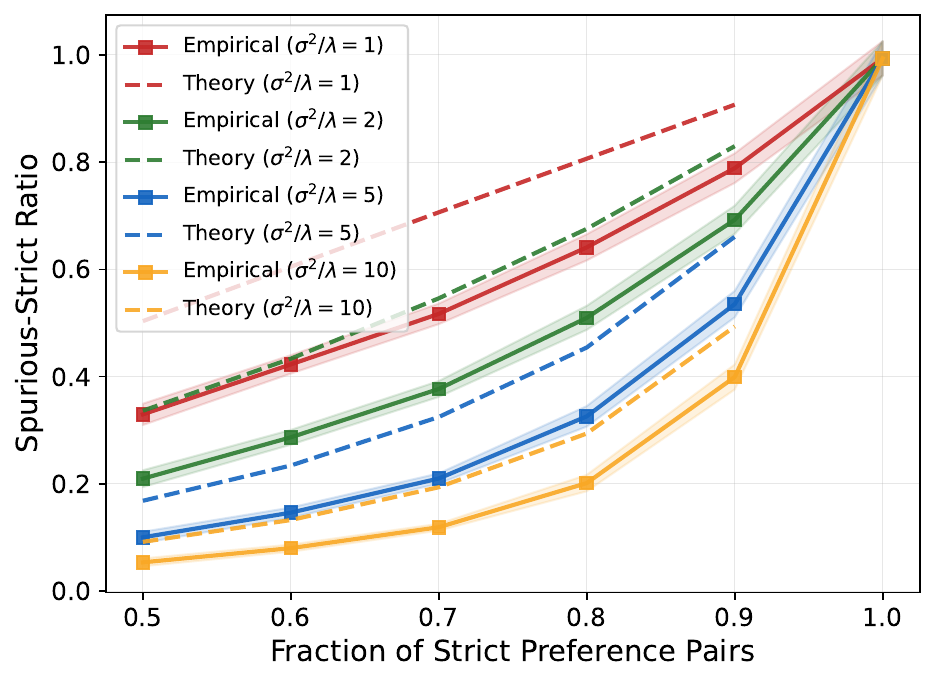}
\caption{Theoretical prediction for spurious reliance under tie training. The curve shows the reduction factor $r_{\mathrm{th}}(\alpha) = \frac{\alpha \lambda_0}{\alpha \lambda_0 + (1-\alpha)\sigma^2}$ as a function of the strict-preference fraction $\alpha$, for different spurious variance ratios $\sigma^2/\lambda_0$. Increasing the proportion of tie examples ($1-\alpha$) monotonically suppresses reliance on spurious features, with stronger suppression when ties inject higher spurious variance. This bound holds independently of sample size and captures the irreducible effect of tie augmentation on spurious learning. KL-strength \textit{Left:}  $\beta = 0.3$ and \textit{Right:} $\beta = 0.7$.}
\label{figure:linear-theory-bound}
\end{figure}

\subsubsection{Log-Linear Experiment: Causal Decontamination and Near Tie Robustness}

\paragraph{Intuition for causal decontamination.}
Tie training does not necessarily shrink the causal parameters. Instead, it reduces the contamination of the causal block induced by causal--spurious coupling. This can be seen in the same block setting used in the log-linear experiments. Let
$$
\Sigma =
\begin{pmatrix}
\Sigma_{cc} & \Sigma_{cs} \\
\Sigma_{sc} & \lambda_0 I
\end{pmatrix},
\qquad
\mu =
\begin{pmatrix}
\mu_c \\
\mu_s
\end{pmatrix},
$$
denote the second moment and mean of the feature differences. Adding isotropic ties in the spurious block gives mixed mean $\alpha\mu$ and effective spurious covariance $\alpha\lambda_0 I+(1-\alpha)\sigma^2 I$, since ties have zero mean. In the linearized equilibrium $\tilde\theta^\star=\tfrac{2}{\beta}\Sigma^{-1}\mu$, the causal block can then be written via the Schur complement as
$$
\theta_{c,\mathrm{mix}}
=
\frac{2}{\beta}
\left(
\Sigma_{cc}-\omega(\alpha)\,\Sigma_{cs}\Sigma_{sc}
\right)^{-1}
\left(
\mu_c-\omega(\alpha)\,\Sigma_{cs}\mu_s
\right),
\qquad
\omega(\alpha)
=
\frac{\alpha}{\alpha\lambda_0+(1-\alpha)\sigma^2}.
$$
Thus, in this isotropic setting, the influence of the spurious block on the causal solution enters through the single scalar $\omega(\alpha)$, which decreases from $1/\lambda_0$ toward $0$ as the tie contribution $1-\alpha$ increases. Smaller $\omega(\alpha)$ weakens the causal--spurious coupling, and in the limit $\omega(\alpha)\to 0$ the causal block coincides with the pure-causal solution obtained from the causal features alone,
$$
\theta_c^{\mathrm{pure}}
=
\frac{2}{\beta}\Sigma_{cc}^{-1}\mu_c .
$$
This motivates the metric used below: we compare the causal block to the pure-causal solution rather than measuring the norm of the causal parameters, since the latter may move in either direction and does not capture decontamination.

\paragraph{Experimental setup.}
We sample feature differences directly, $\Delta\phi\sim\mathcal{N}(\mu,\Sigma)$. This lets us set the local regime by selecting $\|\mu\|$ small. We use $d_c=d_s=5$, $\beta=0.3$, isotropic blocks $\Sigma_{cc}=I$, $\Sigma_{ss}=\lambda_0 I$ with $\lambda_0=1$, and a diagonal causal--spurious coupling $\Sigma_{cs}=\rho I$ with $\rho=0.5$. The mean $\mu$ has small nonzero entries in both blocks, so the spurious features leak into the strict-preference solution. Strict data are drawn from $\mathcal{N}(\mu,\Sigma)$. Tie data have $\Delta\phi_c=0$ and $\Delta\phi_s\sim\mathcal{N}(0,\sigma^2 I)$ with $\sigma^2=5$. For each mixing level $\alpha$ we solve DPO on the pooled strict-plus-tie data to obtain $\theta_{c,\mathrm{mix}}$, on the strict data alone for
$\theta_{c,\mathrm{strict}}$, and on the strict data with the spurious coordinates zeroed for $\theta_c^{\mathrm{pure}}$. We report the relative distance
$$
\frac{\|\theta_{c,\mathrm{mix}}-\theta_c^{\mathrm{pure}}\|_2}
     {\|\theta_{c,\mathrm{strict}}-\theta_c^{\mathrm{pure}}\|_2},
$$
averaged over random seeds. 

\paragraph{Results.}
Figure~\ref{fig:causal-decontamination} shows that increasing the tie contribution $1-\alpha$ moves the causal block substantially closer to the pure-causal solution: the relative distance falls from $1$ in the strict-only case ($\alpha=1$) to roughly $0.2$, closely tracking the isotropic factor $\alpha\lambda_0/(\alpha\lambda_0+(1-\alpha)\sigma^2)$ predicted by the linearized solution (the small residual gap at large tie contribution is consistent with finite-sample variation in the distance estimate). In contrast, the causal \emph{norm} ratio $\|\theta_{c,\mathrm{mix}}\|/\|\theta_{c,\mathrm{strict}}\|$
changes much less and does not track the decontamination curve. This supports the interpretation that tie training decontaminates the causal component by weakening causal--spurious coupling, rather than by shrinking causal weights directly.

\begin{figure}[t]
\centering
\includegraphics[width=0.6\linewidth]{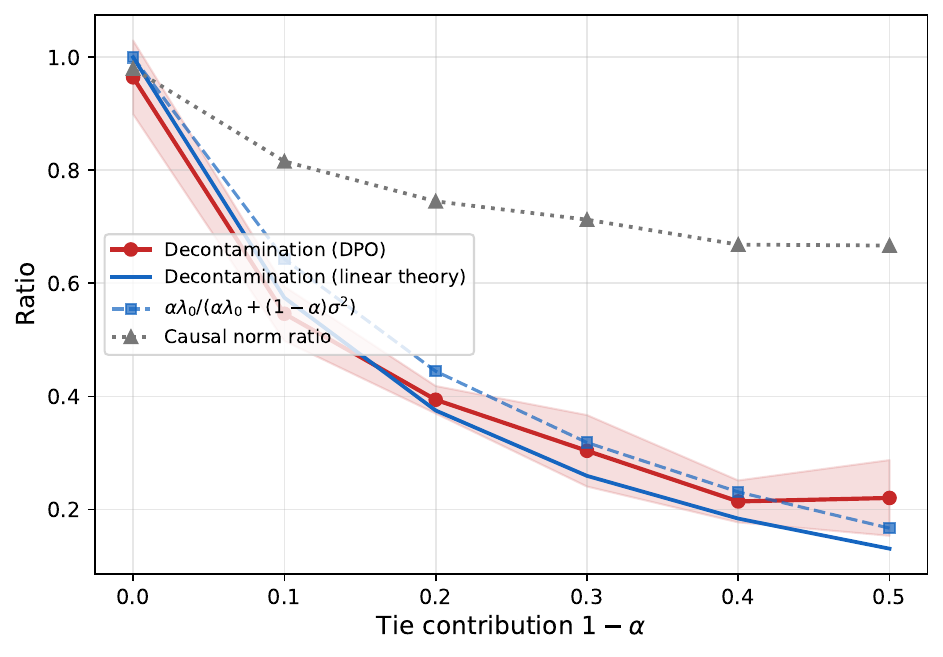}
\caption{Causal decontamination ($d_c=d_s=5$). As the tie contribution $1-\alpha$ increases, the causal block moves toward the pure-causal solution (relative distance, red), closely following the linearized prediction (blue) and the isotropic factor $\alpha\lambda_0/(\alpha\lambda_0+(1-\alpha)\sigma^2)$. The causal norm ratio (grey) changes much less and does not track this contraction, confirming that tie training decontaminates rather than merely shrinks the causal block.}
\label{fig:causal-decontamination}
\end{figure}

\paragraph{Near ties (robustness under imperfect ties).}
We also test an imperfect-tie setting in which tie pairs differ slightly in causal features. Near ties have zero mean but nonzero variance in both blocks, with causal variance $\tau^2 I$ and spurious variance $\sigma^2 I$, where $\tau^2\ll\sigma^2$:
$$
\Delta\phi_c^{\mathrm{near}}\sim\mathcal N(0,\tau^2 I),
\qquad
\Delta\phi_s^{\mathrm{near}}\sim\mathcal N(0,\sigma^2 I),
$$
with the two blocks sampled independently. Thus near ties still add far more curvature in spurious directions than in causal directions, but they no longer leave the causal block exactly unchanged. The mixed moments become
$$
\Sigma_{cc}^{\mathrm{mix}}=\alpha\Sigma_{cc}+(1-\alpha)\tau^2 I,
\quad
\Sigma_{cs}^{\mathrm{mix}}=\alpha\Sigma_{cs},
\quad
\Sigma_{ss}^{\mathrm{mix}}=\alpha\lambda_0 I+(1-\alpha)\sigma^2 I,
\quad
\mu^{\mathrm{mix}}=\alpha\mu,
$$
and the linearized causal block can be written with two parameters,
$$
\theta_{c,\mathrm{mix}}
=
\frac{2}{\beta}
\bigl(
\Sigma_{cc}+\gamma(\alpha) I-\omega(\alpha)\,\Sigma_{cs}\Sigma_{sc}
\bigr)^{-1}
\bigl(
\mu_c-\omega(\alpha)\,\Sigma_{cs}\mu_s
\bigr),
\quad
\gamma(\alpha)=\frac{(1-\alpha)\tau^2}{\alpha},
\quad
\omega(\alpha)=\frac{\alpha}{\alpha\lambda_0+(1-\alpha)\sigma^2}.
$$
Here $\omega(\alpha)$ controls the reduction of causal--spurious coupling, exactly as for exact ties, while $\gamma(\alpha)$ is the cost of imperfect ties: a small extra causal-curvature term arising from causal leakage. When $\tau^2\ll\sigma^2$ the dominant effect is still spurious suppression and decontamination; as the leakage $\tau^2$ grows, causal preservation weakens. Near ties therefore serve as a robustness test of the exact-tie mechanism: they should suppress spurious reliance while still moving the causal block toward the pure-causal solution, provided the causal leakage is small relative to the spurious variation. Accordingly we report two quantities together, the spurious norm ratio $\|\theta_{s,\mathrm{mix}}\|_2/\|\theta_{s,\mathrm{strict}}\|_2$
and the causal decontamination ratio $\|\theta_{c,\mathrm{mix}}-\theta_c^{\mathrm{pure}}\|_2/\|\theta_{c,\mathrm{strict}}-\theta_c^{\mathrm{pure}}\|_2$.

\paragraph{Results.}
Figure~\ref{fig:near-tie-robustness} shows that the same mechanism persists under near ties. Although near ties introduce small causal leakage, increasing the tie contribution suppresses spurious reliance: the spurious norm ratio decreases sharply and closely follows the linearized prediction. At the same time, the causal decontamination ratio also decreases, showing that the causal block moves closer to the pure-causal solution. In contrast, the causal norm ratio changes much less and does not track either suppression curve. Thus, near ties do not merely shrink all parameters; they primarily reduce spurious reliance while still improving causal decontamination when the causal leakage is small relative to the spurious variation.

\begin{figure}[t]
\centering
\includegraphics[width=0.62\linewidth]{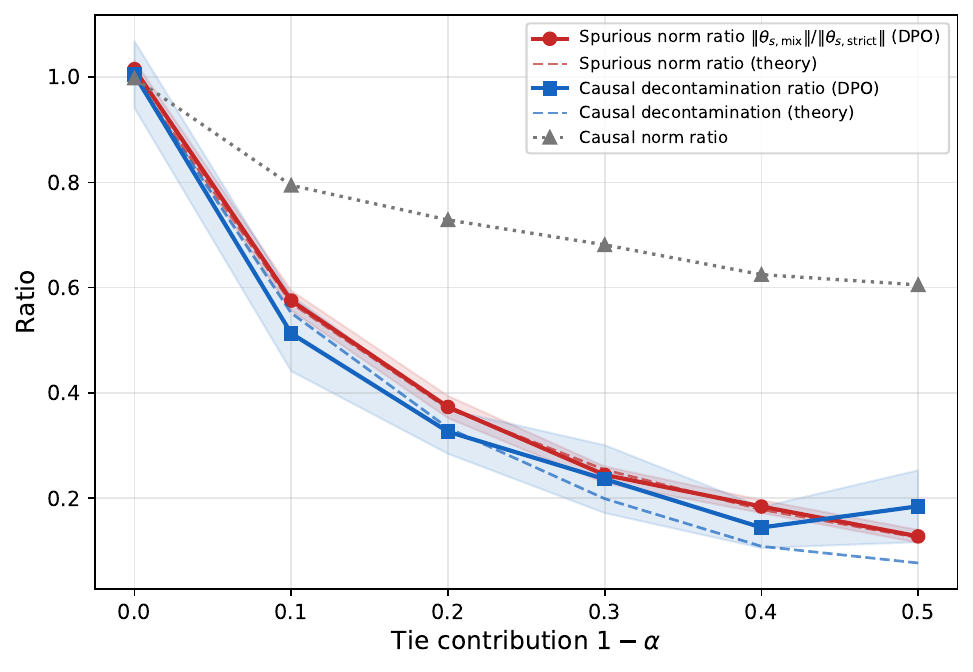}
\caption{Near-tie robustness ($\tau^2=0.1\ll\sigma^2=5$). Near ties introduce small causal leakage but much larger spurious variation. As the tie contribution $1-\alpha$ increases, spurious reliance is suppressed and the causal block moves closer to the pure-causal solution. Both DPO curves closely follow the corresponding linearized predictions. The causal norm ratio changes much less, showing that the
effect is not simple parameter shrinkage.}
\label{fig:near-tie-robustness}
\end{figure}

\subsubsection{Takeaway (Linear)}
The linear experiments provide confirmation of Theorems~\ref{thm:mechanism}, \ref{thm:deployment-bound}, and \ref{thm:tie-training}. Strict-only training learns persistent spurious parameters and induces a vulnerability to irreducible deployment shift error under $Q$. Tie training acts as a selective regularizer on $\phi_s$ by injecting spurious variance. This reduces $\|\hat\theta_s\|$ and lowers shift-induced deployment error without relying on additional samples from $P$. As a result, tie training can remove the irreducible vulnerability as predicted by our mathematical analysis.

% ==================================
% neural networks (nonlinear regime)
% ==================================
\subsection{Neural Networks (Nonlinear Regime)} \label{appendix:experiments-neural-nets}

\paragraph{Motivation.} Real-world preference models are nonlinear and do not expose an explicit causal--spurious feature decomposition. As a result, the developed linear theory fails to apply exactly in this regime. Thus, our goal in this appendix is mechanism validation rather than exact prediction. In particular, we test whether the qualitative spurious-learning mechanisms identified in the linear analysis persist when representations are nonlinear and hidden. This allows us to assess whether the theory captures dominant dynamics rather than model-specific artifacts.

\subsubsection{Dataset Construction} 
\paragraph{Latent variables.} We generate data from a latent quality variable $q \sim \mathcal{N}(0,1)$ that determines true preference ordering. Causal features are constructed as nonlinear functions of $q$ with additive noise,
$$
\phi_c = f_c(q,\varepsilon),
$$
where $f_c$ can include transformations such as $q$, $q^2$, or $\sin q$ (we use $q$ in our experiments). These features contain information about $q$ but are not linearly related to it. As a result, recovering quality requires nonlinear processing.

Spurious features are generated as correlated but non-causal functions of $q$,
$$
\phi_s = \rho\, f_s(q) + \sqrt{1-\rho^2}\,\xi,
$$
where $f_s(q)$ is correlated with $q$ and $\xi$ is independent noise. The parameter $\rho$ controls the strength of spurious correlation during training. These features are predictive in-distribution but have no causal relationship to preference labels. This construction mirrors spurious cues in real preference datasets.

\paragraph{Nonlinear mixing.} The observed input to the model is a nonlinear mixture of causal and spurious features,
$$
\phi = g([\phi_c;\phi_s]),
$$
where $g(\cdot)$ is a fixed, nonlinear function unknown to the model (we use as a random MLP in the experiments): a generic, architecture-agnostic entangling of causal and spurious features that prevents the model from exploiting a hand-designed structure. The mechanisms we study do not depend on this specific choice. This mixing prevents direct access to $\phi_c$ or $\phi_s$. As a result, the model must learn representations internally. This setting tests whether spurious reliance emerges even when the decomposition is hidden.

\paragraph{Tie construction.} We construct tie examples by enforcing $|q_1-q_2| \le \tau$ for a small threshold $\tau$. Preference labels for ties are assigned randomly, so causal signal is intentionally weak. We manipulate spurious features independently of $q$, either by assigning opposing extremes or by randomizing them. This creates pairs with minimal causal difference and strong spurious contrast. These ties provide targeted gradient signal against spurious reliance.

\subsubsection{Model and Objective.} 
\paragraph{Model.} We train a multilayer perceptron (MLP) reward model $r_\theta(\phi)$ that maps nonlinear inputs to scalar scores. The model has no architectural bias toward separating causal and spurious components. All structure must be learned from data. This setting reflects realistic nonlinear preference models. It therefore provides a stringent test of the theory.

\paragraph{Training loss.} Training uses a pairwise logistic objective,
$$
\log \sigma\!\left(\beta\big(r(x_w)-r(x_l)\big)\right),
$$
which matches the standard preference optimization loss (we use $\beta=1.0$ in the experiments). We vary the fraction of strict versus tie comparisons using the parameter $\alpha$. When $\alpha=1$, training uses only strict comparisons. As $\alpha$ decreases, a larger fraction of tie data is introduced.

\subsubsection{Proxy Metrics}

\paragraph{Spurious gap.} We measure the spurious gap as the difference between accuracy on pairs where spurious features align with true quality and accuracy on pairs where they conflict. Let $\mathrm{ACC}_{\text{aligned}}$ and $\mathrm{ACC}_{\text{misaligned}}$ denote these accuracies. Their difference quantifies reliance on spurious cues. A large gap indicates strong spurious dependence.

\paragraph{Adversarial accuracy.} We evaluate adversarial accuracy under a distribution where spurious correlations are reversed. Performance in this setting isolates failure due to spurious reliance. Because the deployment objective is nonlinear, we use accuracy as a proxy for utility. Persistent degradation under this shift indicates misgeneralization.

\paragraph{Counterfactual margin.} We measure counterfactual sensitivity by flipping spurious features while holding causal features fixed. The counterfactual margin is defined as
$$
\mathbb{E}\!\left[|r(c,s)-r(c,-s)|\right].
$$
Large values indicate that the learned reward depends strongly on spurious features. Tie training is expected to reduce this margin.

\subsubsection{Results}
\paragraph{Spurious learning.} Models trained without ties exhibit clear spurious learning. They show a large spurious gap, indicating substantially different performance when spurious cues align or conflict with quality. Figure~\ref{figure:nonlinear-spurious-gap} reports this gap across training conditions.
\begin{figure}[t]
\centering
\includegraphics[width=0.5\linewidth]{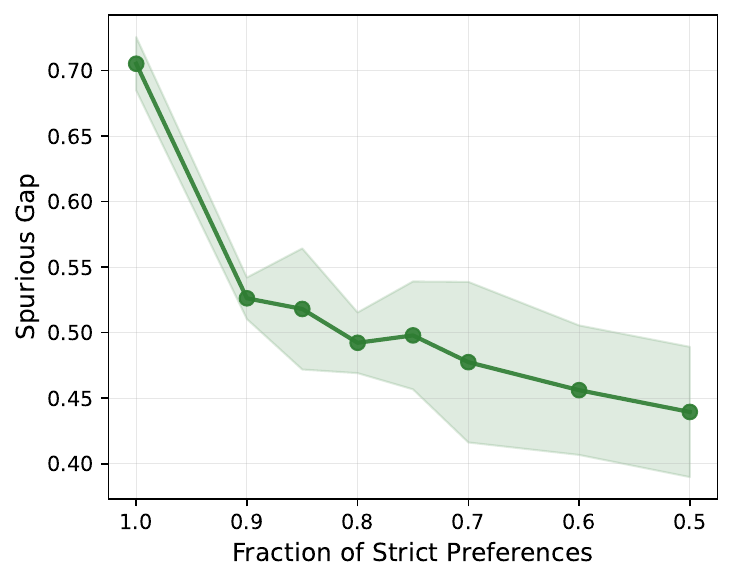}
\caption{Spurious gap (accuracy difference between aligned and misaligned spurious conditions) as a function of the fraction of strict preferences~$\alpha$. Note that as $\alpha$ decreases, the number of ties increases. Thus, tie training reduces spurious reliance despite hidden representations.}
\label{figure:nonlinear-spurious-gap}
\end{figure}
These models also perform poorly under adversarial evaluation, showing that spurious reliance translates into deployment failures.

\paragraph{Tie training.} Introducing tie training reduces sensitivity to spurious features. Figure~\ref{figure:nonlinear-conterfactual-margin} shows that the counterfactual margin decreases sharply as the fraction of tie data increases, indicating reduced dependence on spurious cues.
\begin{figure}[t]
\centering
\includegraphics[width=0.5\linewidth]{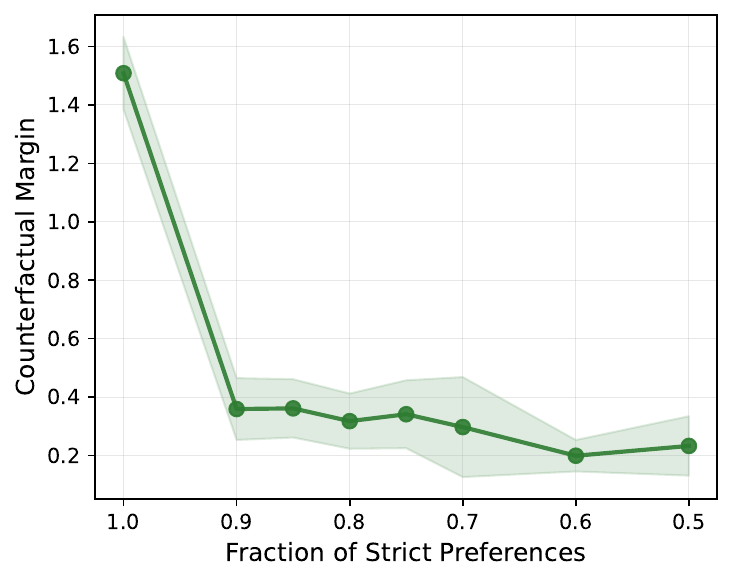}
\caption{\textbf{Tie training reduces spurious reliance.} Counterfactual margin $\mathbb{E}[|r(\phi)-r(\phi_{\mathrm{cf}})|]$ as a function of the fraction of strict preferences $\alpha$. As $\alpha$ decreases (more tie comparisons), the counterfactual margin drops sharply, indicating reduced sensitivity of the learned model to spurious features.}
\label{figure:nonlinear-conterfactual-margin}
\end{figure}
Figure~\ref{figure:nonlinear-adversarial-acc} illustrates that tie training also improves adversarial accuracy under adversarial reversed correlations.
\begin{figure}[t]
\centering
\includegraphics[width=0.5\linewidth]{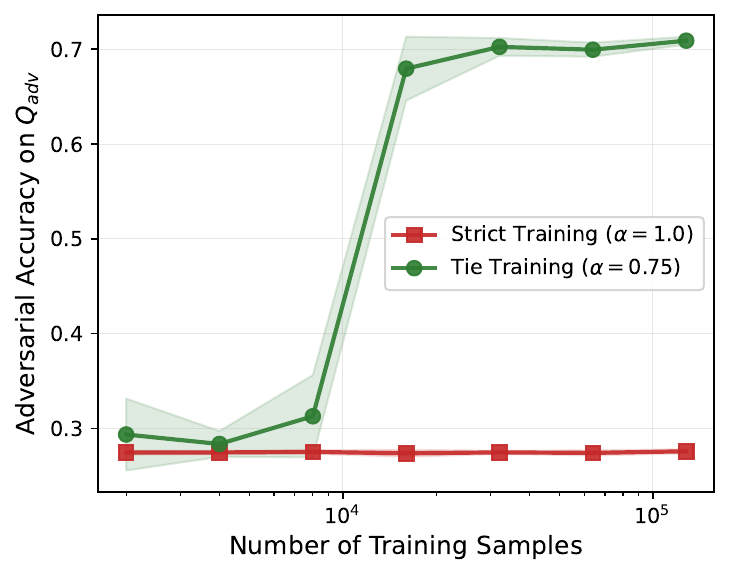}
\caption{\textbf{Strict-only training plateaus under distribution shift; tie training improves robustness.} Adversarial accuracy on $Q_{\mathrm{adv}}$, where spurious correlations flip, as a function of the number of training samples. Strict-only training ($\alpha=1.0$) exhibits a persistent accuracy plateau despite increasing data. In contrast, tie training ($\alpha=0.75$) improves adversarial accuracy, breaking the plateau.}
\label{figure:nonlinear-adversarial-acc}
\end{figure}
These trends qualitatively match the predictions of the linear analysis.

\paragraph{Takeaway (Nonlinear).} In the nonlinear regime, the exact linear theory no longer applies. Nonetheless, the same qualitative spurious-learning mechanisms persist. Tie training reduces spurious reliance and improves robustness under distribution shift. These results suggest that the linear analysis captures dominant dynamics of preference learning. The theory therefore provides useful guidance beyond the linear setting.

\paragraph{Hyperparameters.} Table~\ref{tab:nonlinear-hparams} reports key hyperparameters for the nonlinear experiments. 

\begin{table}[t]
\centering
\caption{Key hyperparameters for the nonlinear synthetic experiments
(spurious gap vs.\ $\alpha$, counterfactual margin vs.\ $\alpha$, and
adversarial accuracy vs.\ sample size $N$). Remaining settings
(initializations, loss reduction, dataloader shuffling) follow the
defaults in our released code.}
\label{tab:nonlinear-hparams}
\begin{tabular}{@{}ll@{}}
\toprule
\textbf{Hyperparameter} & \textbf{Value} \\
\midrule
\multicolumn{2}{@{}l}{\textit{Data generating process}} \\
Causal latent dim $d_c$            & $10$ \\
Spurious vector dim $d_s$          & $10$ \\
Shortcut scalar dim                & $1$ \\
Spurious-vector correlation $\rho$ & $0.9$ \\
Shortcut--$q$ correlation $\alpha_{\text{spur}}$ & $6.0$ \\
Noise scales $(\sigma_c,\sigma_s,\sigma_t)$ & $(1.0,\,1.0,\,0.15)$ \\
Bradley--Terry teacher $\beta_{\text{teacher}}$ & $1.0$ \\
Mixer $g$                          & Frozen MLP, Linear--Tanh--Linear--Tanh \\
Mixer hidden width                 & $64$ \\
\midrule
\multicolumn{2}{@{}l}{\textit{Scorer model \& training}} \\
Scorer architecture       & 3-layer MLP, ReLU \\
Hidden width              & $128$ \\
Loss                      & Pairwise BCE-with-logits \\
Model temperature $\beta_{\text{model}}$ & $1.0$ \\
Optimizer                 & AdamW \\
Learning rate             & $2\times10^{-3}$ \\
Weight decay              & $1\times10^{-4}$ \\
Epochs                    & $8$ \\
Batch size                & $1024$ \\
\midrule
\multicolumn{2}{@{}l}{\textit{Tie construction \& data}} \\
Tie construction      & Spurious flip: $(c,s,t)$ vs.\ $(c,-s,-t)$, label $\sim \mathrm{Bern}(0.5)$ \\
Training pairs (gap / CF-margin sweeps) & $50{,}000$ \\
Training pairs ($N$-sweep)              & $\{2{,}000,\,4{,}000,\,8{,}000,\,16{,}000,\,32{,}000,\,64{,}000,\,128{,}000\}$ \\
Eval pairs (spurious gap, adv. acc.)   & $30{,}000$ \\
Eval pairs (CF margin)                  & $20{,}000$ \\
Mixing ratio $\alpha$ ($\alpha$-sweeps) & $\{1.0,\,0.9,\,0.85,\,0.8,\,0.75,\,0.7,\,0.6,\,0.5\}$ \\
Mixing ratio $\alpha$ ($N$-sweep)       & $1.0$ (strict baseline), $0.75$ (tie training) \\
Test distributions                      & $P$ ($\rho_{\text{sign}}=+1$), $Q_{\text{adv}}$ ($\rho_{\text{sign}}=-1$) \\
Seeds                                   & $5$ \\
\bottomrule
\end{tabular}
\end{table}

% =================================================
% large language models (synthetic hotel benchmark)
% =================================================
\subsection{Large Language Models (Synthetic Hotel Benchmark)} \label{appendix:LLMs-hotels}
We study a large-scale synthetic language-model setting, where preferences are expressed in natural language and spurious attributes resemble real-world surface cues.
\paragraph{Dataset: synthetic hotel preferences.} 
We construct a synthetic hotel comparison dataset designed to study spurious correlation learning and the effect of tie training in large language models. Each example consists of a user context, two hotel options $(A,B)$, and a binary preference label indicating which hotel is preferred. The dataset explicitly separates causal utility, which determines true quality, from spurious features, which are surface attributes correlated with utility during training. This separation allows controlled experiments in which correlations can be manipulated without changing the underlying task. As a result, robustness under distribution shift can be evaluated in isolation.

Each hotel is assigned a latent true utility $u(h)$ computed from task-relevant attributes, including \texttt{price}, \texttt{distance\_to\_destination}, \texttt{star\_rating}, and context-dependent \texttt{amenities}. Preference labels are generated by a teacher that depends only on this true utility, so higher $u$ always corresponds to higher quality. The true utility is never directly observed by the model and must be inferred from preference supervision. This ensures that causal signal is present only implicitly. Consequently, any reliance on non-causal attributes reflects spurious learning.

We designate the following hotel attributes as spurious features:
\begin{itemize}
    \item \texttt{street\_number} (100--9999)
    \item \texttt{floor\_number} (1--20)
    \item \texttt{building\_age} (1--50; lower is better)
    \item \texttt{renovation\_year} (2000--2024)
    \item \texttt{hotel\_chain\_tier} $\in$ \{\texttt{Budget}, \texttt{Standard}, \texttt{Premium}\}
    \item \texttt{lobby\_size\_sqft} (500--5000)
    \item \texttt{employee\_count} (10--200)
\end{itemize}
These attributes do not affect true utility but are correlated with utility during training. Their values are explicitly manipulated to create different correlation regimes at deployment. This design ensures that spurious cues are strong, structured, and controllable. As a result, failures under shift can be directly attributed to spurious reliance.

Spurious features are assigned as deterministic functions of a normalized utility level $u_{\text{norm}} \in [0,1]$. In the \textit{normal} correlation mode, higher-utility hotels receive systematically better spurious attributes, such as newer buildings, premium chains, larger lobbies, and more employees. In \textit{suppression} mode, spurious attributes are decorrelated from utility. In \textit{adversarial} mode, the mapping is inverted using $1-u_{\text{norm}}$, so high-utility hotels receive worse spurious attributes. This construction induces sharp distribution shifts without altering the causal preference structure.

For standard (non-tie) training examples, we sample two hotels $(A,B)$, compute their true utilities $(u_A,u_B)$, assign spurious features according to the chosen correlation mode, and label the pair by the utility ordering. This procedure induces strong correlations between spurious attributes, preference labels, and true quality under the training distribution. These correlations are systematic rather than noisy. As a result, standard preference optimization objectives are incentivized to rely on spurious cues. An example of a hotel preference sample is:

 {\small
  \begin{verbatim}
  You are helping someone choose the right hotel for their stay. ...

  --- Option A ---
  Hilton Plaza is prominently located at 4126 Second Ave. ...

  --- Option B ---
  Hampton Inn Central is prominently located at 6560 Park Blvd. ...

  --- Task ---
  Which of these two options is the better choice for the user?
  \end{verbatim}}
The full dataset and generation code are available at \url{https://github.com/cmoyacal/tie-training}.

\paragraph{Tie construction in the LLM setting.} A tie is defined as a hotel pair $(A,B)$ such that the utility difference satisfies $|u_A - u_B| < \tau$ for a small threshold $\tau$. In these pairs, the causal signal is intentionally weak by construction. Preference labels are assigned randomly, with $y \sim \mathrm{Bernoulli}(1/2)$. This ensures that labels are independent of both utility and spurious features. Ties therefore isolate non-causal learning signals.

\paragraph{Informative ties.} We make tie examples informative by explicitly decorrelating spurious features from utility. Our default informative strategy assigns one hotel maximal spurious features and the other minimal spurious features. The assignment is random between $A$ and $B$, so spurious direction is uninformative. This produces pairs with near-zero causal margin but maximal spurious contrast. As a result, gradients from these examples penalize spurious reliance.

\textit{Informative ties} satisfy three properties simultaneously. First, the causal signal is weak because $|u_A-u_B|$ is small. Second, the spurious contrast is large because spurious features are maximally separated. Third, labels are independent of spurious attributes by construction. Together, these properties ensure that tie gradients push against spurious features while preserving causal learning.

We also evaluate an alternative tie construction strategy. In a standard-monotonic strategy, we assign spurious features monotonically from utility. These \textit{non-informative ties} provide weak regularization signal. 

\paragraph{Model and training.}  We fine-tune a fixed base language model \texttt{Llama-3.2-1B-Instruct} using Direct Preference Optimization (DPO)~\cite{rafailov2023direct} and Low-Rank Adaptation (LoRA)~\cite{hu2022lora} for one epoch. All experiments use the same architecture and optimization settings. We compare strict training, which uses only standard preference pairs, with tie training, which augments the dataset with ties. 

\paragraph{Evaluation metrics.}

We measure \textit{in-distribution accuracy} on standard test pairs drawn from the training distribution $P$. This evaluates whether models trained with ties retain performance on the original task. High in-distribution accuracy indicates that causal learning is preserved. We report overall accuracy and per-option accuracy. This allows us to detect asymmetric degradation.

We evaluate robustness under deployment distributions $Q$ where spurious correlations are suppressed or adversarially reversed. These settings isolate failures caused by spurious reliance. Accuracy is measured using the same preference labels derived from true utility. Performance degradation under $Q$ reflects reliance on spurious features. Robust models should maintain accuracy across shifts.

\paragraph{Results.}  Table~\ref{table:hotel_results} shows that Tie Training (TT) DPO with informative ties improves robustness to
spurious distribution shift. While standard DPO performs well in-distribution ($P$), its accuracy degrades under suppressed and adversarial spurious correlations ($Q$). Informative tie training preserves high accuracy under both shifts without sacrificing in-distribution performance. The mixture variants (TT-Mixture) sample informative ties with probability $p$ and otherwise emit non-informative ties, interpolating between the two regimes. We report $p=0.10$ and $p=0.25$. Non-informative ties inject spurious contrast in the same direction as the causal signal. The modest robustness they do provide stems from the random tie label and near-tie causal perturbation, which act as weak regularizers.

\begin{table}[t]
\centering
\caption{Tie Training (TT) DPO with informative ties improves robustness to
spurious distribution shift. While standard DPO performs well in-distribution ($P$), its accuracy degrades under suppressed and adversarial spurious correlations ($Q$). Informative tie training preserves high accuracy under both shifts without sacrificing in-distribution performance. The mixture variants (TT-Mixture) sample informative ties with probability $p$ and otherwise emit non-informative ties, interpolating between the two regimes. We report $p=0.10$ and $p=0.25$. Non-informative ties inject spurious contrast in the same direction as the causal signal. The modest robustness they do provide stems from the random tie label and near-tie causal perturbation, which act as weak regularizers. Results are reported for overall accuracy and per-option accuracy (Hotel A/B), illustrating that robustness gains are systematic rather than label-specific. In all experiments, we use $1-\alpha = 0.3$ augmented ties.}
\label{table:hotel_results}
\begin{tabular}{lccc}
\toprule
 & Accuracy (overall) & Accuracy (Hotel A) & Accuracy (Hotel B) \\
\midrule
DPO-Strict-In-Distr.($P$)            & 92.25\% & 89.54\% & 94.88\% \\
DPO-Strict-Suppressed($Q$)           & 74.00\% & 74.48\% & 77.81\% \\
DPO-Strict-Adversarial($Q$)          & 64.20\% & 55.72\% & 72.76\% \\
\midrule
DPO-TT-Informative-In-Distr.($P$)    & 92.40\% & 88.02\% & 96.65\% \\
DPO-TT-Informative-Suppressed($Q$)   & 82.85\% & 83.17\% & 82.50\% \\
DPO-TT-Informative-Adversarial($Q$)  & \textbf{86.70\%} & 80.60\% & \textbf{92.86\%} \\
\midrule
DPO-TT-Mixture~($p{=}0.25$)-In-Distr.($P$)   & 89.40\% & 92.76\% & 86.34\% \\
DPO-TT-Mixture~($p{=}0.25$)-Suppressed($Q$)  & 83.40\% & 90.81\% & 76.30\% \\
DPO-TT-Mixture~($p{=}0.25$)-Adversarial($Q$) & 83.20\% & \textbf{85.84\%} & 80.47\% \\
\midrule
DPO-TT-Mixture~($p{=}0.10$)-In-Distr.($P$)   & 90.75\% & 91.92\% & 89.68\% \\
DPO-TT-Mixture~($p{=}0.10$)-Suppressed($Q$)  & 84.10\% & 86.72\% & 81.59\% \\
DPO-TT-Mixture~($p{=}0.10$)-Adversarial($Q$) & 81.70\% & 79.84\% & 83.62\% \\
\midrule
DPO-TT-Non-Informative-In-Distr.($P$)   & 91.20\% & 92.13\% & 90.35\% \\
DPO-TT-Non-Informative-Suppressed($Q$)  & 81.85\% & 83.04\% & 80.71\% \\
DPO-TT-Non-Informative-Adversarial($Q$) & 76.90\% & 72.47\% & 81.49\% \\
\bottomrule
\end{tabular}
\end{table}

\paragraph{Ablation of $\alpha$.}
We fine-tune Llama-3.2-1B-Instruct with DPO (LoRA, $r=16$, $\beta{=}0.1$) on $5{,}000$ strict preference pairs and sweep the tie augmentation ratio $1-\alpha = n_{\text{ties}}/(n_{\text{strict}}{+}n_{\text{ties}})$, the fraction of tie pairs in the augmented training set. Ties are near-tie pairs in which the spurious feature is decorrelated from utility; $\alpha{=}1.0$ is the strict-only baseline (no ties) and lower $\alpha$ injects more ties. All models are evaluated on a fixed adversarial test set ($Q_{\text{adv}}$, $2{,}000$ pairs, spurious correlation strength $0.99$) held constant across seeds. We report mean $\pm$ standard deviation over 5 seeds.

\textbf{Results.} Adversarial accuracy rises monotonically as ties are added, from $64.1\%$ at $\alpha{=}1.0$ to $86.4\%$ at $\alpha{=}0.5$, a $22.3$-point gain over the strict-only baseline (Figure~\ref{fig:alpha-sweep}). The largest improvement comes from the first increment of ties ($\alpha{=}1.0{\to}0.9$, $+11.4$ points), with diminishing returns thereafter. Variance also shrinks as $\alpha$ decreases (std $0.029 {\to} 0.009$), indicating that tie augmentation yields not only higher but also more consistent robustness across seeds.

\begin{figure}[t]
\centering
\includegraphics{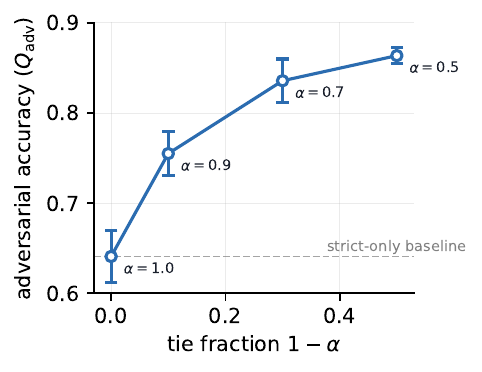}
\caption{Tie augmentation improves adversarial robustness. Adversarial
accuracy ($Q_{\mathrm{adv}}$, anti-correlated spurious feature) of
Llama-3.2-1B-Instruct fine-tuned with DPO, as a function of the tie
fraction $1-\alpha$, where $\alpha = n_{\mathrm{strict}}/(n_{\mathrm{strict}}+n_{\mathrm{ties}})$
is the share of strict pairs in the training set. The leftmost point
($1-\alpha=0$, i.e.\ $\alpha=1.0$) is the strict-only baseline (dashed line).
Adding decorrelated ties raises accuracy from $64.1\%$ to $86.4\%$, with the
gain concentrated in the first increment and variance shrinking as more ties
are added. Markers show the mean over 5 seeds; error bars show $\pm 1$
standard deviation.}
\label{fig:alpha-sweep}
\end{figure}

\paragraph{Takeaway (LLMs).} Spurious correlation learning persists in large language models trained with preference optimization. Increasing data alone does not eliminate spurious reliance or improve robustness under distribution shift. Informative tie training provides targeted robustness gains by directly penalizing spurious features. These effects mirror those observed in linear models and neural networks. Together, the results suggest that the linearized theory captures core dynamics of preference learning under spurious correlations.

\paragraph{Hyperparameters.} Table~\ref{tab:hotels-hparams} reports key hyperparameters for the LLM hotel experiments. 

\begin{table}[t!]
\centering
\caption{Key hyperparameters for the LLM Hotels experiments. Remaining
settings (LoRA dropout, target modules, optimizer, sequence lengths,
LR schedule) follow the defaults in our released code.}
\label{tab:hotels-hparams}
\begin{tabular}{@{}ll@{}}
\toprule
\textbf{Hyperparameter} & \textbf{Value} \\
\midrule
\multicolumn{2}{@{}l}{\textit{DPO training}} \\
Base model            & Llama-3.2-1B-Instruct \\
Fine-tuning           & LoRA (4-bit), $r=16$, $\alpha_{\text{LoRA}}=32$ \\
DPO $\beta$           & $0.1$ \\
Learning rate         & $1\times10^{-5}$ \\
Epochs                & $1$ \\
Effective batch size  & $8$ \\
\midrule
\multicolumn{2}{@{}l}{\textit{Data}} \\
Training pairs (strict)     & $5{,}000$ \\
Test pairs per distribution & $2{,}000$ \\
Test distributions          & $P$, $Q_{\text{sup}}$, $Q_{\text{adv}}$ \\
Spurious correlation strength & $0.99$ \\
Tie construction            & near-tie \\
Mixing ratio $\alpha$       & $1.0$ (baseline), $0.7$ (tie training) \\
Seeds                       & $5$ \\
\bottomrule
\end{tabular}
\end{table}

\newpage 
% ==============================
% appendix: RLHF reward learning
% ==============================
\section{RLHF-Style Reward Learning with Greedy Decoding}
\label{appendix:RLHF}
\subsection{Reward Learning Setup}
We study linear reward learning under RLHF and show that it is a special case of Direct Preference Optimization (DPO). Specifically, RLHF reward learning corresponds to DPO with scaling parameter $\beta = 1$ and reference parameter $\theta_{\mathrm{ref}} = \mathbf{0}$, so that the effective parameter satisfies $\tilde{\theta} = \theta$. Under this specialization, the pairwise reward learning loss reduces to
$$
\ell_{\mathrm{RLHF}}(\theta) = -\log \sigma(\theta^\top \Delta \phi),
$$
which matches the DPO objective. This equivalence implies that the reward learning dynamics are unchanged relative to DPO in our linear analysis. Thus, the same spurious-learning mechanisms and distribution-shift vulnerabilities apply to standard RLHF reward learning.

\subsection{Deployment with Greedy Policies}

Given a learned linear reward $r_\theta(x,y) = \theta^\top \phi(x,y)$, deployment uses greedy decoding to select outputs. This induces the greedy policy
$$
\pi(x) = \arg\max_y \theta^\top \phi(x,y),
$$
which is optimal under $r_\theta$ but can be suboptimal under the true deployment utility. To measure this gap under a shifted deployment distribution $Q$, we evaluate the true deployment value functional $V^\star$ rather than the learned reward. Let $\pi^\star$ denote the optimal policy for the deployment problem under $Q$ and $V^\star$, and define the deployment suboptimality as
$$
\mathrm{SubOpt}_Q(\pi) := V^\star(\pi^\star) - V^\star(\pi).
$$
This definition turns reward mislearning into a policy-level metric that we can track under distribution shift.
\subsection{Spurious Learning under RLHF Reward Learining}
We now show that spurious correlation learning persists under RLHF reward learning and directly impacts greedy deployment. 

\paragraph{Setup.} We evaluate the RLHF reward learning dynamics using a synthetic environment where features $\phi(x) \in \mathbb{R}^d$ are decomposed into causal features $\phi_c \in \mathbb{R}^3$ and a spurious feature $\phi_s \in \mathbb{R}^1$. The full feature vector is given by the concatenation $\phi(x) = [\phi_c(x), \phi_s(x)]$.

We construct a set of five items $\mathcal{A} = \{a_1, \dots, a_5\}$ with a fixed spurious correlation coefficient $\sigma = 1.0$. The feature representations are defined as follows:
$$\Phi = \begin{bmatrix} 1 & 1 & 0 & \sigma \\ 1 & 0 & 0 & 0 \\ 0 & 0 & 0 & 0 \\ 0 & 1 & 0 & \sigma \\ 0 & 0 & 1 & 0 \end{bmatrix}  $$
Ground truth preference data is  generated via a Bradley-Terry model $P(i \succ j) = \sigma(\theta^{*\top}(\phi(x_i) - \phi(x_j)))$. The ground truth parameter is set to $\theta^\star = [-1.0, 0.1, 0.05, 0.0]^\top$. 

\textit{Sampling Strategy.} We generate a dataset of $N$ pairwise comparisons with a fixed distribution of outcome types: strict preferences (75\%). The samples are drawn evenly from three specific comparison pairs to create the experimental structure:
$a_1$ vs $a_2$ (difference: $[0, 1, 0, \sigma]$),
$a_2$ vs $a_3$ (difference: $[1, 0, 0, 0]$),
and $a_5$ vs $a_3$ (difference: $[0, 0, 1, 0]$).
Ties (25\%). The remaining samples are labeled as ties.

\paragraph{Results.} Figure~\ref{figure:RLHF-spurious-reliance} shows that training with $\ell_{\mathrm{RLHF}}$ yields a reward parameter $\hat{\theta}$ whose spurious component is nonzero, i.e., $\hat{\theta}_{\mathrm{s}} \neq 0$. Similarly, Figure~\ref{figure:RLHF-estimation-error} illustrates that increasing the number of samples from $P$ reduces estimation error around this optimum but does not remove the spurious component. 

\subsection{Effect of Tie Training}
We next evaluate how spurious learning induces deployment suboptimality and how tie training mitigates this effect.

\paragraph{Results.} Without tie training, Figure~\ref{figure:RLHF-suboptimality} shows that the greedy policy $\hat{\pi}(x)=\arg\max_y \hat{\theta}^\top\phi(x,y)$ achieves low error on $P$ but incurs nonzero $\mathrm{SubOpt}_Q(\hat{\pi})$ when the deployment distribution $Q$ suppresses or reverses spurious correlations. As the number of training samples increases, the shift-induced component of the error persists. This persistence is strongest in adversarial reversals. Tie training reduces $\mathrm{SubOpt}_Q$ by shrinking $\hat{\theta}_{\mathrm{s}}$ while preserving the causal component, so the greedy policy becomes less sensitive to spurious shifts.

\begin{figure}[t]
\centering
\includegraphics[width=0.45\linewidth]{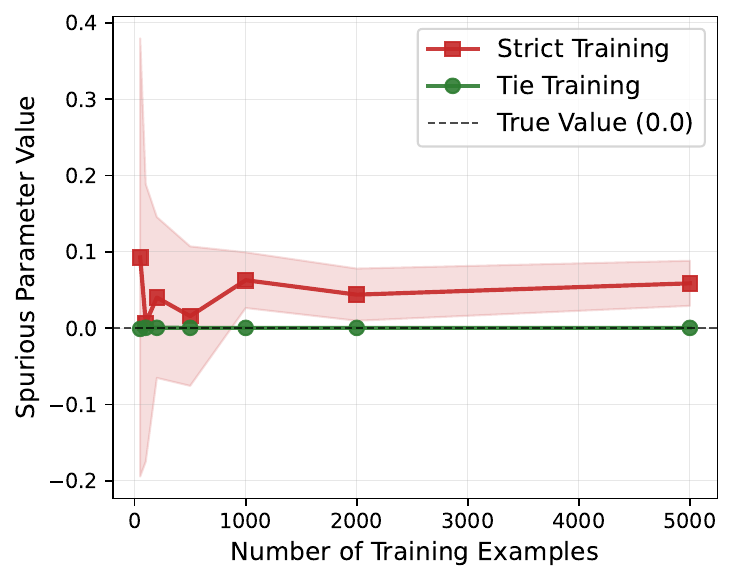}
\caption{Under standard RLHF learning training, the learned policy exhibits nonzero reliance on spurious features ($\theta_{\text{s}} \neq 0$), and this reliance does not vanish with additional data drawn from the training distribution $P$. Tie training explicitly counteracts this effect, driving spurious reliance toward zero.}
\label{figure:RLHF-spurious-reliance}
\end{figure}

\begin{figure}[t]
\centering
\includegraphics[width=0.45\linewidth]{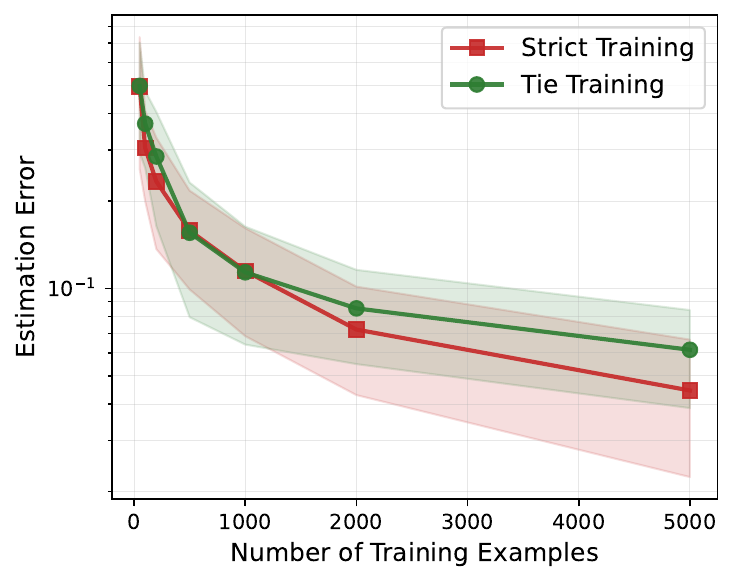}
\caption{As the number of training samples increases, estimation error, defined as the weighted norm~$\|\hat\theta - \theta^\star\|_\Sigma$ decreases at comparable rates for strict MLE training and tie training, showing that tie training reduces spurious reliance without sacrificing estimation accuracy.}
\label{figure:RLHF-estimation-error}
\end{figure}

\begin{figure}[t]
\centering
\includegraphics[width=0.45\linewidth]{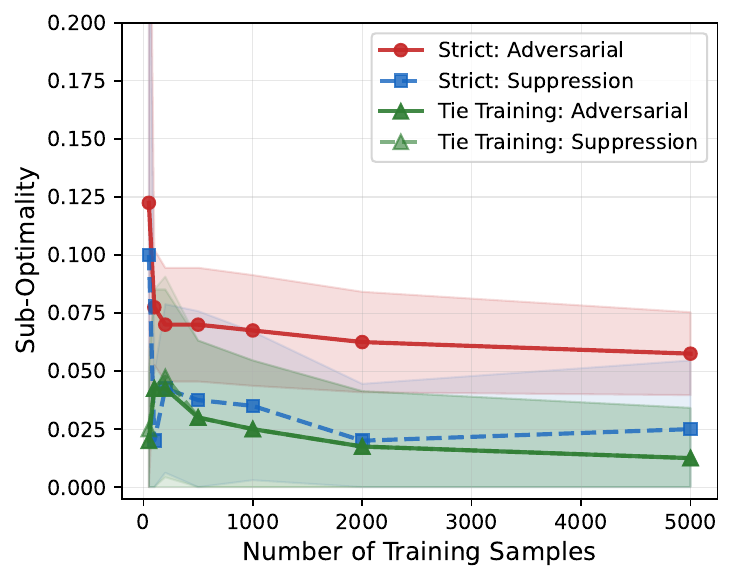}
\caption{Greedy decoding of a log-linear RLHF policy does not introduce additional error mechanisms, but exposes spurious reward learning under shift: performance, measured as $\mathrm{SubOpt}_Q(\pi) := V^\star(\pi^\star) - V^\star(\pi)$ degrades in both adversarial and suppression settings, and this error does not vanish with more data from $P$. Tie training reduces this shift-induced error.}
\label{figure:RLHF-suboptimality}
\end{figure}

\subsection{Discussion and Conclusion}
Our results show that greedy decoding does not affect what reward learning fits, but it could amplifies the behavioral impact of spurious reward errors. Under distribution shift, small spurious weights can flip greedy decisions and induce nonzero deployment suboptimality. Without tie training, this error persists in the infinite-data limit. Additional samples reduce estimation error but do not remove spurious reliance. In contrast, tie training shrinks spurious weights and reduces the resulting deployment error.
\newpage
% =====================================
% appendix: limitations and future work
% =====================================
\newpage
\section{Limitations and Future Work} \label{appendix:limitations}
\textbf{Local regime and linearization.} Our theoretical analysis relies on a local regime where the learned policy $\pi_\theta$ remains close to the reference policy $\pi_{\text{ref}}$, enabling linearization of the KL-regularized objective. 

\textbf{Tie construction and approximate equality.} The analysis assumes access to informative ties, preference pairs $(x, y_A, y_B)$, where responses have near-equal utility but differing spurious features. In practice, exact utility equality is difficult to verify, since true utility is latent and must be estimated from noisy human feedback or imperfect reward models.

However, the mechanism requires only that causal utility differences are small relative to spurious feature variability, not exact equality. Formally, what matters is that $|\phi_c(y_A) - \phi_c(y_B)|$ is small compared to $|\phi_s(y_A) - \phi_s(y_B)|$ in expectation over the tie distribution. Our experiments (Appendix~\ref{appendix:experimental-details}) demonstrate that ties constructed by selecting pairs with similar scores effectively reduce spurious learning, even when utilities are not exactly equal.

Nevertheless, formal guarantees for imperfect tie construction remain an open problem. How much utility mismatch can be tolerated before tie training becomes ineffective? How should one trade off the number of ties versus their quality? Can adaptive tie construction algorithms identify informative ties online during training? We will study these questions in our future work. 

\textbf{Analytical assumptions and feature decomposition.} We assume features decompose as $\phi(y) = [\phi_c(y)^\top, \phi_s(y)^\top]^\top$ into causal and spurious components. This decomposition makes the spurious learning mechanism explicit and enables clean theoretical statements, but it is an idealization.

Importantly, this assumption is used only for analysis, not for the method. The decomposition serves as an analytical tool to understand why tie training works, not as a prerequisite for its application. Extending the theory to settings without a clear causal--spurious decomposition remains open. 

\textbf{Scale and scope of validation.} Our experiments validate the theoretical mechanisms we derive: we demonstrate that predictions from the population-level theory match finite-sample behavior, that tie training reduces spurious parameters as predicted, and that these reductions translate to improved deployment robustness. However, these experiments are designed to test theoretical predictions in controlled settings, not to optimize end-to-end performance of production alignment systems.

Large-scale validation in production alignment pipelines remains important future work. This includes: applying tie training to frontier language models with billions of parameters, developing practical methods for tie construction from human feedback at scale, evaluating robustness improvements on diverse downstream tasks and distribution shifts, and comparing tie training to other robustness interventions such as distributionally robust optimization or causal regularization. Such validation would determine whether the gains observed in controlled experiments translate to meaningful improvements in deployed systems.

\end{document}